\title{Lifelong Learning with Sketched Structural Regularization}
\author{
  Haoran Li\thanks{Johns Hopkins University, Baltimore, MD 21218. \texttt{hli143@jhu.edu}.}
  \and
  Aditya Krishnan\thanks{Johns Hopkins University, Baltimore, MD 21218. \texttt{akrish23@jhu.edu}.}
  \and
  Jingfeng Wu\thanks{Johns Hopkins University, Baltimore, MD 21218. \texttt{uuujf@jhu.edu}.}
  \and
  Soheil Kolouri\thanks{HRL Laboratories, LLC, Malibu, CA 90265. \texttt{skolouri@hrl.com}.}
  \and
  Praveen K. Pilly\thanks{HRL Laboratories, LLC, Malibu, CA 90265. \texttt{pkpilly@hrl.com}.}
  \and
  Vladimir Braverman\thanks{Johns Hopkins University, Baltimore, MD 21218. \texttt{vova@cs.jhu.edu}.}
}
\date{}
\begin{document}

\maketitle

\begin{abstract}
    Preventing catastrophic forgetting while continually learning new tasks is an essential problem in lifelong learning. Structural regularization (SR) refers to a family of algorithms that mitigate catastrophic forgetting by penalizing the network for changing its ``critical parameters" from previous tasks while learning a new one. The penalty is often induced via a quadratic regularizer defined by an \emph{importance matrix}, e.g., the (empirical) Fisher information matrix in the Elastic Weight Consolidation framework. In practice and due to computational constraints, most SR methods crudely approximate the importance matrix by its diagonal. In this paper, we propose \emph{Sketched Structural Regularization} (Sketched SR) as an alternative approach to compress the importance matrices used for regularizing in SR methods. Specifically, we apply \emph{linear sketching methods} to better approximate the importance matrices in SR algorithms. We show that sketched SR: (i) is computationally efficient and straightforward to implement, (ii) provides an approximation error that is justified in theory, and (iii) is method oblivious by construction and can be adapted to any method that belongs to the structural regularization class. We show that our proposed approach consistently improves various SR algorithms' performance on both synthetic experiments and benchmark continual learning tasks, including permuted-MNIST and CIFAR-100.
\end{abstract}

\section{Introduction}
\emph{Lifelong learning}, also termed as \emph{continual learning} or \emph{incremental learning}, is the ability to continually learn in a varying environment through integrating the newly acquired knowledge while maintaining the previously learned experiences \citep{parisi2019continual}.
A central issue that prevents the state-of-the-art machine learning models (e.g. deep neural networks) from achieving lifelong learning is \emph{catastrophic forgetting}, i.e. learning a new task may severely modify the model parameters, including those that are important to the previous tasks \citep{parisi2019continual}.

\emph{Structural regularization} (SR), or \emph{selective synaptic plasticity}, is a general and widely-adopted paradigm to mitigate catastrophic forgetting in lifelong learning \citep{kolouri2019sliced,aljundi2018memory,kirkpatrick2017overcoming,chaudhry2018riemannian,zenke2017continual}.
From a geometric perspective~\citep{kolouri2019sliced,chaudhry2018riemannian}, SR methods construct an (positive semi-definite) \emph{importance matrix} (IM) that measures the relative importance of the model parameters to the old tasks (which are aimed be preserved in lifelong learning) and add a quadratic regularizer defined by the importance matrix when training on new tasks.
The intuition behind structural regularization is clear: the quadratic regularizer adaptively penalizes parameters from changing according to their criticality measured by the importance matrix. As a result, structural regularization encourages the model to fit to the new task using non-critical parameters so that it is able to preserve important information from old tasks.
For example, \citet{kirkpatrick2017overcoming} choose the (diagonal) \emph{empirical Fisher information matrix}\footnote{In their original paper \citep{kirkpatrick2017overcoming} (and follow-up papers, e.g., \citep{kolouri2019sliced}), the importance matrix in EWC is referred as the ``Fisher information matrix'', but precisely, it should be called the ``empirical Fisher'' --- the two terms are often used interchangeable in the community, though they are not identical. See \citep{kunstner2019limitations} for a detailed clarification.} (empirical Fisher, EF) as the importance matrix in their seminal algorithm, \emph{Elastic Weight Consolidation} (EWC) \citep{kirkpatrick2017overcoming,kolouri2019sliced,chaudhry2018riemannian}.
However, a full IM (e.g. empirical Fisher) scales as $\Ocal(m^2)$ for a model with $m$ parameters and can be prohibitively big to use in large-scale lifelong learning models. 
Often in practice, the diagonal, which scales as $\Ocal(m)$, is used as a crude approximation to the full IM \citep{kirkpatrick2017overcoming,kolouri2019sliced,aljundi2018memory}.
We refer to structural regularization with an IM approximated by the diagonal by \emph{diagonal SR}.

\begin{figure*}[t]
    \centering
    \begin{minipage}[c]{.5\textwidth}
        \centering
        \subfigure[Full EF]{\includegraphics[width=0.43\linewidth]{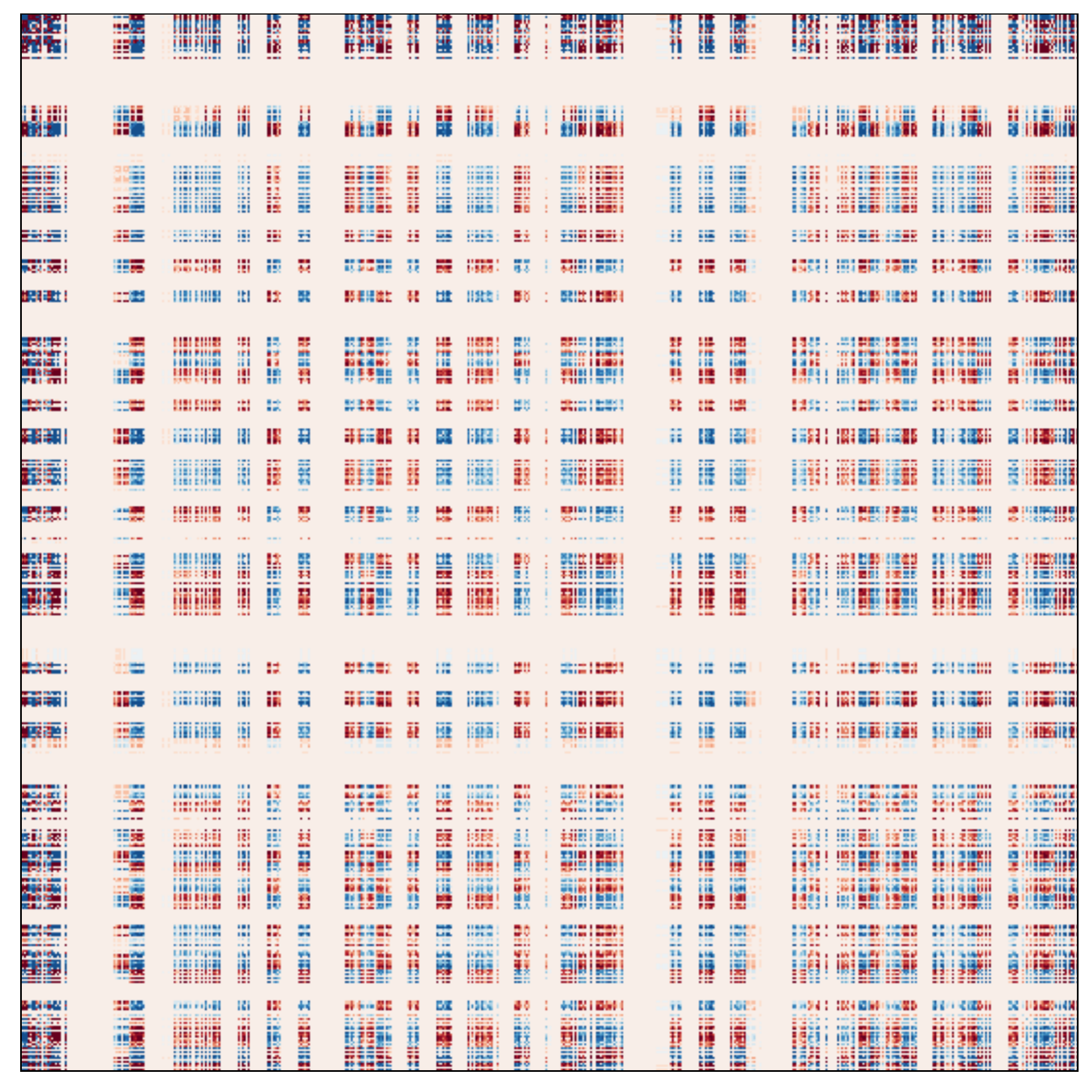}\label{fig:toy_EF}}
        \subfigure[Sketched EF]{\includegraphics[width=0.51\linewidth]{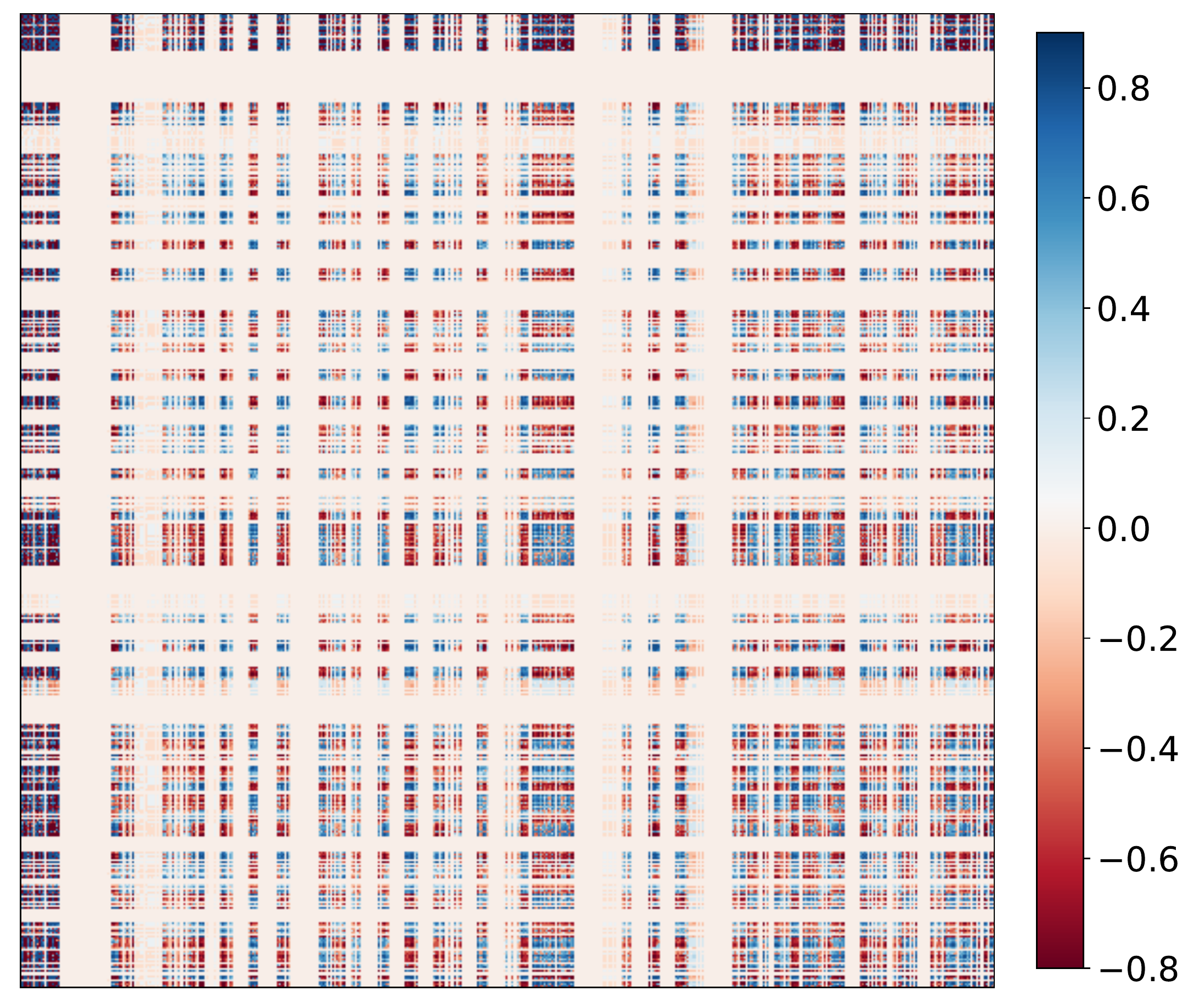}\label{fig:toy_SketchEF}}
    \end{minipage}%
    \begin{minipage}[c]{.5\textwidth}
        \centering
    \begin{subfigure}
        \centering
        \begin{small}
        \begin{sc}
        \begin{tabular}{cc}
            \toprule
            \multicolumn{2}{c}{Frobenius-Norm Approx. Error} \\ \multicolumn{2}{c}{${\|\widetilde\Omega - \Omega\|^2_F}/{\|\Omega \|^2_F}$} \\
            \midrule
            Diagonal & Block-Diagonal \\
            $ 94.7\% \pm 0.6\% $ & $ 82.8\% \pm 1.3\% $ \\
            \midrule
            Sketched & Low-Rank \\
            $\mathbf{ 8.1\% \pm 10.2\% }$ & $ 0.0\% \pm 0.0\% $ \\
            \bottomrule
        \end{tabular}
        \end{sc}
        \end{small}
    \end{subfigure}\label{table:toy_frobErr}
    \end{minipage}%
    
    \caption{Illustration of the (sketched) empirical Fisher on a synthetic 2D binary classification task from \citet{pan2020continual}. 
    The full empirical Fisher is obtained with the optimal weight that fits the first four tasks.
    Note that the empirical Fisher is normalized for better visualization; a true spectrum is shown in Figure \ref{fig:spectrum}.
    The figures show the heat map of the full empirical Fisher and the sketched empirical Fisher with $t=50$;
    the table shows the approximation error of each methods to approximating the full empirical Fisher.
    The plots and the table suggest that: (i) the full empirical Fisher cannot be well-approximated by its diagonal or block-diagonal, (ii) sketching method can utilize the off-diagonal entries to obtain a better approximation, and (iii) though low-rank method (with $50$ ranks) also leads to a good approximation, the computational cost is not affordable in practice.
    See Section \ref{sec:toy-exp} for more details.}
    \label{fig:fim_illustration}
\end{figure*}

While exploring new and effective importance matrices has been a hot direction for structural regularization \citep{kolouri2019sliced,aljundi2018memory,kirkpatrick2017overcoming,chaudhry2018riemannian,zenke2017continual}, little effort has been spent on examining the effectiveness of the crude diagonal approximation (a few exceptions, e.g. \citep{liu2018rotate,ritter2018online}, are discussed later in Section \ref{sec:relates}).
pIntuitively speaking, a diagonal IM assumes independence between parameters, which is far from reality \citep{liu2018rotate,ritter2018online}. In mathematics, a positive semi-definite matrix can rarely be well-approximated by its diagonal --- the only non-trivial exception to our knowledge is when the matrix is diagonally dominant \citep{horn2012matrix}. 
Unfortunately, for the importance matrices considered in SR methods this might not be the case, especially when training using neural networks. As an illustration, we examine the empirical Fisher as the importance matrix \citep{kirkpatrick2017overcoming} of a synthetic experiment from \citet{pan2020continual}; the full empirical Fisher is shown in Figure \ref{fig:toy_EF}. The plot shows that the full empirical Fisher is far from its diagonal; in fact the diagonal only contributes to less than $5.3\%$ of the Frobenius norm of the empirical Fisher matrix (see table in Figure \ref{fig:fim_illustration}).
Hence, approximating importance matrix with its diagonal might be problematic. A natural question then is: 

{\centering \emph{Is there a computational and memory efficient method to approximate the importance matrix without losing critical information in the matrix?}\par}

\paragraph{Our Contributions.}
In this paper, we answer the above question by providing a \emph{linear sketching method} \citep{charikar2002finding} as a \emph{provable}, \emph{ubiquitous}, \emph{efficient} and \emph{effective} approach to approximate the importance matrix in SR methods.
Specifically, in one pass of the data (which is also required for diagonal approximation), a $\Ocal(tm)$ size sketched matrix can be produced that approximately recovers the quadratic regularizer defined by the $\Ocal(m^2)$ size importance matrix. 
Here, $t \ll m$ is a tuneable parameter that balances the computation cost and matrix size with the quality of approximation, and can be chosen as a small number in practice. Our method, called \emph{sketched SR}, has the following notable advantages:
\begin{enumerate}
    \item Has a \emph{theoretically guaranteed} small approximation error, provided the importance matrix has a well-behaved spectrum, e.g. has low effective rank. Fortunately, for deep neural network and commonly used SR methods, the importance matrix (e.g. empirical Fisher) does indeed have low (effective) rank \citep{sagun2017empirical,chaudhari2018stochastic}, but is not diagonal (see Figure \ref{fig:fim_illustration}). 
    
    \item Is \emph{algorithm oblivious} by construction, i.e. for any algorithm that belongs to the structural regularization framework (defined in Section \ref{sec:SR}), a sketched version can be readily established without additional, algorithm specific considerations.
    
    \item Is \emph{computationally efficient} and \emph{easy to implement}. Both sketched SR and diagonal SR make only one pass of the data (of the old task) to obtain the approximation. Though sketched SR saves $\Ocal(tm)$ parameters, which is slightly larger than the $\Ocal(m)$ parameters in diagonal SR.
    This additional cost is easily affordable as setting $t\le 50$ is sufficient for sketched SR to outperform diagonal SR in our experiments.
    
    \item \emph{Consistently outperforms} its diagonal counterpart on overcoming catastrophic forgetting, in both synthetic experiments and benchmark lifelong-learning tasks, including permuted-MNIST and CIFAR-100. 
\end{enumerate}

\paragraph{Paper Layout.}
The remaining part of this paper is organized as follows:
the related literature is reviewed in Section \ref{sec:relates}; in Section \ref{sec:SR} we formally introduce the structural regularization from the geometric viewpoint \citep{kolouri2019sliced,ritter2018online}; then in Section \ref{sec:SSR}, we present our sketched structural regularization, its practical implementation and its theoretical properties; in Section \ref{sec:exps}, we experimentally compare our methods with the diagonal counterparts, which verifies the effectiveness of our methods; finally, this paper is concluded by Section \ref{sec:conclusion}.

\section{Related Works}\label{sec:relates}
\paragraph{Functional Regularization.} 
Besides structural regularization, another popular category of approaches to overcome catastrophic forgetting is \emph{functional regularization} \citep{jung2016less,li2017learning,rannen2017encoder,shin2017continual,hu2018overcoming,rozantsev2018beyond,wu2018memory,li2019learn}.
Similar to structural regularization, functional regularization also adds regularizer (when training new tasks) to penalize the forgetting of useful old knowledge; however, functional regularization may use very general (hence, functional) regularizers, in addition to quadratic ones. 
For example, \citet{jung2016less,li2017learning} snapshot a teacher model that learned from old tasks, and use it to regularize a student model that fits new tasks.
Moreover, generative models are applied to generate pseudo-data (\emph{memory}) from old tasks, and the pseudo-data is mixed to the new data distribution as a regularization (\emph{replay}) for learning new tasks \citep{rannen2017encoder,rostami2019complementary,shin2017continual,wu2018memory,hu2018overcoming}. This is also known as \emph{memory replay}.
Finally, we remark that functional regularization can be used together with structural regularization \citep{shin2017continual,rozantsev2018beyond}.
Our focus of this paper is to use linear sketching methods to improve SR methods; an interesting future work is to apply similar ideas (e.g., coresets \citep{10.1145/1993636.1993712,10.1145/1007352.1007400}) to improve functional regularization methods, especially for those based on memory replay.

\paragraph{Non-Diagonal Importance Matrix.}
Diagonal approximation is a crude, but de facto approach to compress the full IM in most of existing SR algorithms~\citep{kolouri2019sliced,aljundi2018memory,kirkpatrick2017overcoming,chaudhry2018riemannian,zenke2017continual}.
Before this paper, there are a few works that study structural regularization with non-diagonal IM \citep{liu2018rotate,ritter2018online}, which we discuss in sequence.
\citet{ritter2018online} adopt the layer-wise block-diagonal approximation as a better replacement to the commonly used diagonal version for the importance matrix: even so, the cross-layer weight dependence is being ignored; moreover, in our experiments, block-diagonal empirical Fisher is not a good approximation to empirical Fisher matrix, either (see Figure \ref{fig:fim_illustration}).
\citet{liu2018rotate} propose layer-wise rotation of the empirical Fisher such that the new matrix can be more diagonal-alike; this procedure not only assumes cross-layer independence (of weights), but even assumes independence between layer inputs and layer gradients (see Eq. (7) in \citep{liu2018rotate}).
In comparison, the sketching methods adopted in this paper only require a very weak assumption, i.e., the importance matrix has low effective rank.

\paragraph{Linear Sketching.}
Linear sketching is a widely studied technique for dimensionality reduction. We rely on the popular sketching method \emph{CountSketch} \citep{charikar2002finding} that has its roots in the Johnson-Lindenstrauss transform. Randomized linear sketching methods, such as CountSketch, draw a random matrix $S \in \Rbb^{t \times m}$ and embed the columns of the input matrix $W \in \Rbb^{n \times m}$ into a smaller dimension $t \ll n$ by outputting $SW$. By carefully constructing the random distribution, it can be shown that the sketch $SW$ \emph{preserves the norms of the vectors in the subspace spanned by the columns of $W$} up to some error. Such sketching techniques are known as oblivious subspace embeddings (OSEs). This property of OSEs makes them a natural tool for approximating the quadratic regularizer in SR methods.

Sparse OSE methods \citep{nelson2013osnap,cohen2016simpler} such as CountSketch have a two-fold advantage: i) they're \emph{oblivious}, which means that the random distribution is defined independent of the input matrix $W$ and ii) the sketch $SW$ can be computed in time that is linear in the input size (e.g. proportional to the number of non-zero entries in $W$). These methods have have been widely used, giving fast algorithms for various problems such as low-rank approximation, linear regression \citep{sarlos2006improved,clarkson2017low, meng2013low}, k-means clustering \citep{cohen2015dimensionality}, leverage score estimation \citep{drineas2012fast} and numerous other problems \citep{lee2019solving,ahle2020oblivious,vandenbrand_et_al:LIPIcs:2021:13602}.

\section{Preliminaries}\label{sec:SR}
We use $(x,y) \in \Rbb^{s} \times \Rbb^k$ to denote a feature-label pair, and $\theta \in \Rbb^m$ to denote the model parameter.
A parametric model is denoted by $\phi(\cdot\, ; \theta) : \Rbb^s \rightarrow \Rbb^k$.
Given a distance measure of two distributions, $d(\cdot, \cdot)$, the individual loss over data point $(x,y)$ can be formulated as
\[\ell(x,y; \theta) := d( \phi(x;\theta), y).\]
For example, in deep neural networks, $\phi(\cdot\, ; \theta)$ is the network output, and $d(\cdot, \cdot)$ is usually chosen to be the cross entropy loss \citep{goodfellow2016deep}.

\paragraph{Structural Regularization.}
Let task $A$ with data distribution $(x,y)\sim \Dcal_A$ be an already well-learned task on network $\phi$ with learnt parameters $\theta^*_A$.
In order to overcome catastrophic forgetting when learning a new task $B$, with data distribution $(x,y)\sim \Dcal_B$, structural regularization algorithms apply an extra regularizer $\Rcal(\theta)$ to the main loss and optimize the following total loss: 
\begin{align*}
    \arg\min_\theta \Ebb_{(x,y) \sim \Dcal_B} \sbracket{\ell (x,y; \theta)} + \lambda \cdot \Rcal(\theta).
\end{align*}
Here, the expectation should be understood as the empirical expectation over the training set.
As for the regularization term, $\lambda$ is a hyper-parameter, and $\Rcal (\theta)$ is a quadratic regularizer that penalizes the weight for being deviated from $\theta^*_A$, the learnt weight from the previous task $A$:
\[
\Rcal(\theta) := \half (\theta - \theta^*_A)^\top \Omega (\theta - \theta^*_A),
\]
where $\Omega \in \Rbb^{m\times m}$ is an importance matrix and is positive semi-definite (PSD).
As we will see shortly, the PSD matrix $\Omega$ usually has a natural decomposition as \citep{kirkpatrick2017overcoming,aljundi2018memory}: 
\begin{align}\label{eqn:omega_decomposed}
    \Omega = \frac{1}{n}{W^\top W},
\end{align}
where each row of $W \in \Rbb^{n \times m}$ is a Jacobian matrix of a certain individual loss (which might not be the one used for the main loss) of data $x$ from task $A$, and $n$ is the number of training data in task $A$.
Then, the structural regularizer $\Rcal(\theta)$ can be written as   
\begin{align}\label{eqn:regularizer_normVersion}
    \Rcal (\theta) = \frac{1}{2n} \|{W} \cdot (\theta - \theta^*_A)\|_2^2, \quad W \in \Rbb^{n\times m}.
\end{align}

\begin{table*}[t]
\caption{The construction of the importance matrices in EWC \citep{kirkpatrick2017overcoming} and MAS \citep{aljundi2018memory}. 
In the EWC row, the loss function $\ell(x,y; \theta) := d( \phi(x;\theta), y)$ is defined with $d(\cdot\,,\cdot)$ being the cross entropy loss.
The rows of matrix $W \in \Rbb^{n \times m}$ are indexed by the input data from task $A$.
}\label{table:selectivePlasticitySummary}
    \vskip 0.15in
    \centering\
    \begin{sc}
    \begin{tabular}{c|cc}
        \toprule
         & Matrix $\Omega$ & Row-Vector $(W)_x$ \\
        \midrule
        EWC & $\Ebb_{(x,y)\sim \Dcal_A} \nabla_{\theta} \ell (x,y;\theta^*_A) \cdot \nabla_{\theta} \ell (x,y;\theta^*_A)^\top $ & $\nabla_{\theta} \ell (x,y;\theta^*_A)$
        \\
        MAS & $\Ebb_{x\sim \Dcal_A} \big(\nabla_{\theta}\norm{\phi(x;\theta^*_A)}^2_2\big)\cdot \big(\nabla_{\theta}\norm{\phi(x;\theta^*_A)}^2_2 \big)^\top $ & $\nabla_{\theta}\norm{\phi(x;\theta^*_A)}^2_2$\\
        \bottomrule
    \end{tabular}
    \end{sc}
    \vskip -0.1in
\end{table*}

\paragraph{Two Examples.}\label{sec:sketching_eqc_mas_scp}
Table \ref{table:selectivePlasticitySummary} summarizes two examples for the importance matrices in: \emph{Elastic Weight Consolidation} (EWC) \citep{kirkpatrick2017overcoming} and \emph{Memory Aware Synapses} (MAS) \citep{aljundi2018memory}.
It is worth noting that the importance matrix used in EWC is the \emph{empirical Fisher} evaluated at the optimal weight for task $A$.

\paragraph{Diagonal Approximation.}
Unfortunately, both matrices $\Omega$ and $W$ have $m^2$ and $mn$ entries respectively, which makes them prohibitively large to compute and store for big models like deep neural networks. 
As a compromise, practitioners often take the diagonal of $\Omega$ as an approximation. This leads to the presented version of EWC \citep{kirkpatrick2017overcoming} and MAS \citep{aljundi2018memory} in their original paper.
These are called \emph{diagonal EWC} and \emph{diagonal MAS} respectively in this paper to be distinguishing with our variants.
However, as we have discussed and demonstrated on a synthetic dataset, such a treatment ignores the dependence between weights and exacerbates performance degeneration for overcoming catastrophic forgetting.
In the following we present our sketched version of the above algorithms, which can make use of the off-diagonal entries of $\Omega$ to improve the diagonal approximated version.

\section{Sketched Structural Regularization}\label{sec:SSR}
In this section we propose our framework of sketching the regularizer from \eqref{eqn:regularizer_normVersion} and describe the specific sketch construction along with some theoretical guarantees. We describe our construction in terms of the general framework of structural regularization for lifelong learning from Section \ref{sec:SR}.
Then we contrast our approximation method with other compression methods like PCA. 
Finally we describe how we go from the two-task settings to an online version of the algorithm in a way that is standard in works on structural regularization \citep{kolouri2019sliced,chaudhry2018riemannian,schwarz2018progress}.

\paragraph{Sketched Regularizer.}
We propose a method to sketch the matrix $\Omega$ from \eqref{eqn:omega_decomposed} by reducing the dimensionality of each of the matrix $W$ from $n$ dimensions to $t$ dimensions for a $t \ll \min\{n,m\}$. Specifically, we draw a random matrix $S \in \Rbb^{t \times n}$ from a carefully chosen distribution and approximate the regularizer \eqref{eqn:regularizer_normVersion} in SR methods with 
\begin{align}\label{eqn:sketched_regularizer}
\widetilde{\Rcal}(\theta) = \frac{1}{2n}\| \widetilde{W}\cdot (\theta - \theta^*_A)\|_2^2, \ \ \widetilde{W} := S W\in \Rbb^{t\times m}.
\end{align}

We use \emph{CountSketch} \citep{charikar2002finding} to construct the sketched matrix $\widetilde{W} = SW$, which is formally presented in Algorithm \ref{alg:SSR}.
CountSketch reduces the number of rows (aka, the dimension of the columns) of $W$ by the following: first the rows of $W$ are randomly partitioned into $t$ groups (Algorithm \ref{alg:SSR}, line \ref{line:partitioning}), then rows in each group are randomly, linearly combined (with random signs as weights) into a single new row (Algorithm \ref{alg:SSR}, line \ref{line:updateW}).

Two remarks are in order for the practical implementation of Algorithm \ref{alg:SSR}:
(i) note that Algorithm \ref{alg:SSR} only makes one pass of the data from task $A$, which is as required for computing diagonal approximation;
(ii) note that Algorithm \ref{alg:SSR} requires $\Ocal(t)$ times auto-differentiation, but since $t$ is small and the sketch construction only needs to done once per new task, the cost is affordable in practice (see more in Section \ref{sec:exps}).

\paragraph{Comparison with Low-Rank Approximation Methods.}
The main advantage of using CountSketch over more complicated low-rank approximation methods (e.g. PCA) to compress the importance matrix in SR methods, is that it can be computed with only a small amount of additional computation and only a modest blow-up in memory compared to the diagonal approximation.
However PCA is usually computational intractable for big models such as deep neural networks.
Moreover, in below, we show CountSketch achieves provable small approximation error (for matrix with low stable-rank), as can be guaranteed by PCA. 

\begin{algorithm}[tb]
  \caption{Sketch Construction in Sketched SR}
  \label{alg:SSR}
\begin{algorithmic}[1]
  \STATE {\bfseries Input:} Data from task $A$ and optimized neural network $\phi( \cdot\, ; \theta_A^*)$ for task $A$
  \STATE {\bfseries Parameters:} Size of sketch $t \in \Nbb^+$
  \STATE Initialize 2-wise and 4-wise independent hash functions $h : [n] \rightarrow [t]$ and $\sigma : [n] \rightarrow \{-1, 1\}$ respectively \label{line:hash}
  \FOR{$k=1,\dots,t$}
  \STATE Group data $G_k :=\{x\in A : h(x) = k \}$ \label{line:partitioning}
  \STATE Compute $\sum_{x\in G_k}\sigma(x)(W)_x$ as per Table \ref{table:selectivePlasticitySummary} by auto-differentiation
  \STATE Set $(\widetilde{W})_{k} \leftarrow \sum_{x\in G_k}\sigma(x)(W)_x$\label{line:updateW}
  \ENDFOR
  \RETURN $\widetilde{W}\in \Rbb^{t \times m}$
\end{algorithmic}
\end{algorithm}

\paragraph{Theoretical Properties.}
The following theorem from \citet{cohen2016OSEStableRank} builds on several results on CountSketch matrices, giving theoretical guarantees for sketching quadratic forms of matrices.

The theorem is re-phrased for our purposes, showing the quality of approximation by the sketch in preserving $\ell_2$-norms of vectors in the subspace spanned by the columns of $W$, the matrix that is being sketched. There is a trade-off in the quality of approximation by the sketch and its size, given by the dimension of the columns $t$. In particular, the error in preserving the $\ell_2$-norm of any $W\theta$ depends on the spectrum of $W$; when $t \geq \|W\|_F^4/(\epsilon^2\|W\|_2^4)$ the error is \emph{additive} and scales with $\epsilon\|W\|_2^2\|\theta\|_2^2$, which we detail in the following theorem. We state a full-version of the theorem showing the exact trade-off between the the number of buckets $t$ and the quality of approximation in Appendix \ref{sec:append-theory}.

\begin{figure}
    \centering
    \includegraphics[width=0.35\linewidth]{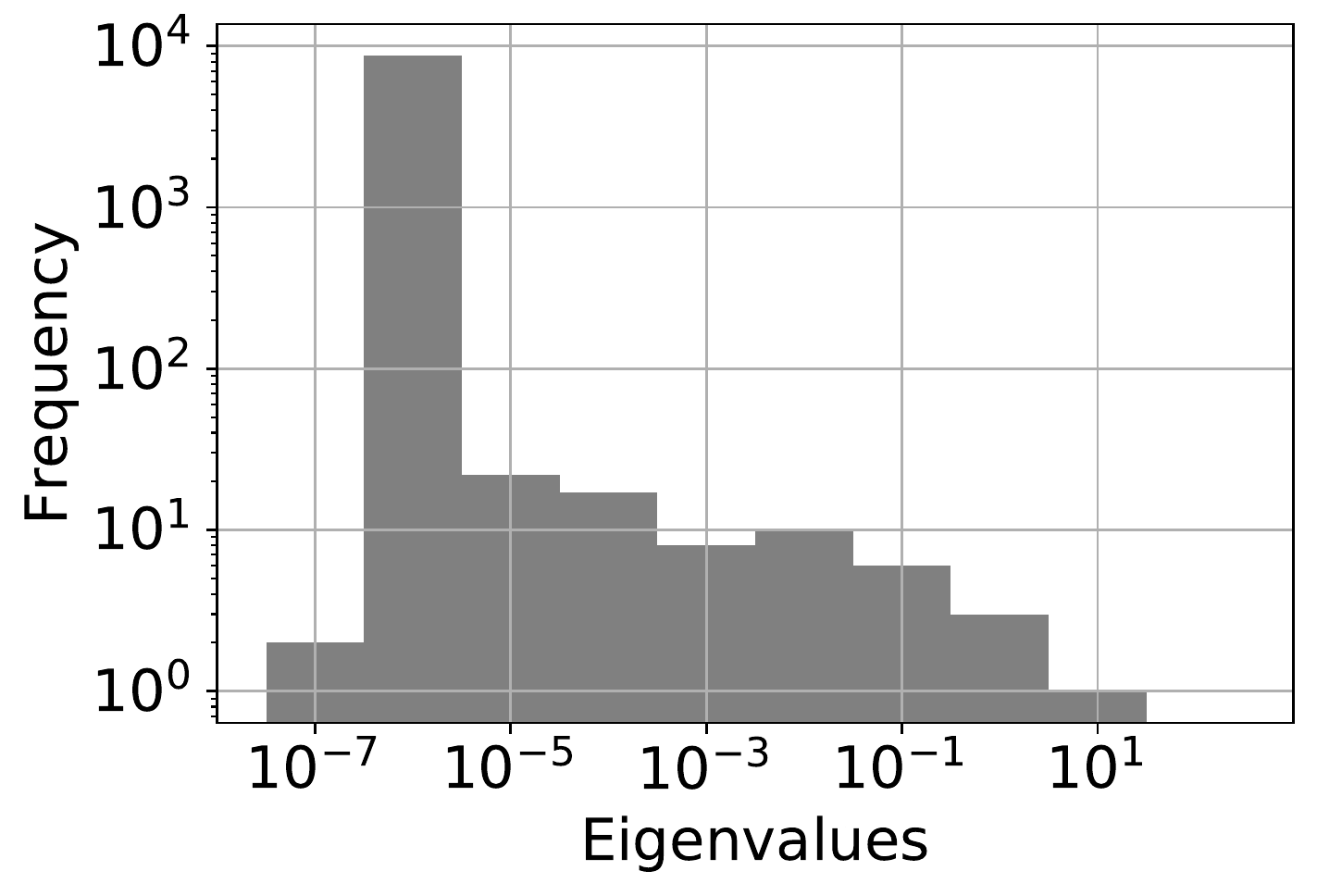}
    \caption{The spectrum of the empirical Fisher studied in Figure \ref{fig:fim_illustration}.
    The empirical Fisher is $8,770 \times 8,770$ and its stable rank is $1.26\pm 0.13$ (over $5$ random seeds). Both the plotted spectrum and the computed stable rank show that this empirical Fisher has a very small effective rank.
    } 
    \label{fig:spectrum}
\end{figure}

\begin{thm}\label{cor:sketch_guarantee}
For a matrix $W \in \Rbb^{n \times m}$ with stable rank\footnote{The stable rank of a matrix $W$ is $\|W\|_F^2/\|W\|_2^2$.} $r$, a CountSketch matrix $S \in \Rbb^{t \times n}$ with $t = \Ocal(r^2/\epsilon^2)$ has the property that with probability at least $0.99$, 
\[
    {\abs{ \|SW \theta\|_2^2 - \|W \theta\|_2^2 }} \leq {\epsilon \cdot \|W\|_2^2 \cdot \|\theta\|_2^2}
\]
for all vectors $\theta \in \Rbb^m$.
\end{thm}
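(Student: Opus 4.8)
The plan is to reduce the uniform-over-$\theta$ statement to a single spectral-norm bound on a random error matrix, and then control that spectral norm through the second-moment (approximate matrix multiplication) property of CountSketch. First I would observe that for every $\theta$,
\[
\|SW\theta\|_2^2 - \|W\theta\|_2^2 = \theta^\top\big(W^\top S^\top S W - W^\top W\big)\theta = \theta^\top E\,\theta, \qquad E := W^\top(S^\top S - I)W.
\]
Since $E$ is symmetric, $\sup_{\theta\neq 0}\abs{\theta^\top E\theta}/\|\theta\|_2^2 = \|E\|_2$, so it suffices to show that $\|E\|_2 \le \epsilon\|W\|_2^2$ holds with probability at least $0.99$; this single event immediately yields the claimed bound for \emph{all} $\theta$ simultaneously. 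Note also that $\Ebb[S^\top S] = I$ for CountSketch (the diagonal of $S^\top S$ is identically $1$, and off-diagonal entries vanish in expectation by $2$-wise independence of $\sigma$), so $E$ is mean-zero and controlling $\|E\|$ is a variance estimate.

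The key step is to invoke the approximate-matrix-multiplication guarantee of CountSketch in the Frobenius norm. Writing $S$ in terms of the hash function $h$ and sign function $\sigma$ from Algorithm \ref{alg:SSR}, a second-moment computation --- using the $2$-wise independence of $h$ to kill cross-bucket collision terms and the $4$-wise independence of $\sigma$ to control the surviving sign products --- yields the standard bound
\[
\Ebb\,\|E\|_F^2 = \Ebb\,\|W^\top S^\top S W - W^\top W\|_F^2 \le \frac{2}{t}\,\|W\|_F^4.
\]
I would either cite this directly (it is the workhorse behind the result of \citet{cohen2016OSEStableRank}) or derive it by expanding $\|E\|_F^2$ as a sum over pairs of columns of $W$ and taking expectations over $h$ and $\sigma$.

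Given the second moment, I would pass to a high-probability statement via Markov's inequality with failure probability $\delta = 0.01$: with probability at least $0.99$,
\[
\|E\|_2 \le \|E\|_F \le \sqrt{\tfrac{200}{t}}\,\|W\|_F^2,
\]
where the first inequality bounds the spectral norm by the Frobenius norm. Substituting the stable rank $r = \|W\|_F^2/\|W\|_2^2$ and choosing $t = \Ocal(r^2/\epsilon^2)$ makes the right-hand side at most $\epsilon\|W\|_2^2$, which combined with the reduction in the first paragraph completes the argument.

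I expect the main obstacle to be two-fold. The genuinely technical piece is the variance computation establishing the Frobenius AMM bound, which requires carefully accounting for which index-tuples survive under the limited independence of $h$ and $\sigma$. The conceptual subtlety is that bounding $\|E\|_2$ by $\|E\|_F$ is lossy, and it is exactly this slack that produces the $r^2$ (rather than $r$) dependence in $t$; recovering the sharper scaling would require a true oblivious-subspace-embedding / matrix-concentration argument instead of a mere second-moment bound, which is why the tight trade-off between $t$ and the approximation quality is deferred to the full version in Appendix \ref{sec:append-theory}.
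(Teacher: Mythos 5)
Your argument is correct, but it is not the route the paper takes: the paper proves Theorem~\ref{cor:sketch_guarantee} essentially by citation, instantiating Theorem~6 of \citet{cohen2016OSEStableRank} (restated as Theorem~\ref{thm:CountSketchThm_orig} in Appendix~\ref{sec:append-theory}) with $k$ equal to the stable rank $r$ and $\delta = 0.01$, so that the error term $\epsilon\|\theta\|_2^2\big(\|W\|_2^2 + \|W\|_F^2/k\big)$ collapses to $2\epsilon\|W\|_2^2\|\theta\|_2^2$ and $t = Ck^2/(\epsilon^2\delta) = \Ocal(r^2/\epsilon^2)$. You instead give a self-contained elementary proof: reduce the uniform-over-$\theta$ claim to the single matrix-level event $\|W^\top(S^\top S - I)W\|_2 \le \epsilon\|W\|_2^2$ (valid because the error matrix is symmetric, so its worst-case Rayleigh quotient equals its spectral norm --- this is the step that correctly delivers the ``for all $\theta$ simultaneously'' guarantee), establish the second-moment bound $\Ebb\|W^\top S^\top S W - W^\top W\|_F^2 \le (2/t)\|W\|_F^4$ by the standard CountSketch approximate-matrix-multiplication computation, and finish with Markov's inequality and $\|\cdot\|_2 \le \|\cdot\|_F$; the arithmetic checks out, since $\sqrt{200/t}\,\|W\|_F^2 = \sqrt{200/t}\, r\,\|W\|_2^2 \le \epsilon\|W\|_2^2$ exactly when $t \ge 200 r^2/\epsilon^2$. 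Each approach buys something: the paper's citation is shorter and inherits the full $k$-parametrized trade-off of the original theorem, while your derivation is transparent, relies only on the same limited-independence bookkeeping for $h$ and $\sigma$ that the paper already carries out in Lemma~\ref{lemma:countSketch_properties}, and correctly pinpoints the Frobenius-versus-spectral slack as the source of the $r^2$ (rather than $r$) dependence in $t$. One small attribution quibble: the Frobenius AMM bound is not really the ``workhorse behind'' \citet{cohen2016OSEStableRank} --- their spectral-norm guarantee is obtained by genuinely different concentration arguments and is stronger for general $k$ --- but for the constant-failure-probability, $k = r$ corollary stated here, your second-moment argument is fully sufficient and even recovers the quoted form of their Theorem~6.
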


Notice that stable rank never exceeds the usual rank, and can be significantly smaller when the matrix has a decaying spectrum. The importance matrix considered in SR methods usually have fast decaying spectrum (see Figure \ref{fig:spectrum}), i.e. small stable rank, making it effective to use CountSketch to approximate quadratic forms with the matrices. 
For instance, in the synthetic experiment we considered, the stable rank of the empirical Fisher shown in Figure \ref{fig:toy_EF} is $1.26$ with standard deviation $0.13$, measured over 5 trials. Note that the empirical Fisher is $8,770 \times 8,770$.

\paragraph{Online Extension of Sketched SR.}
Lifelong learning often requires learning more than two tasks sequentially. One method of extending the Sketched SR method to learn on multiple tasks to maintain separate sketches for each task and compute the regularizer $\widetilde{\Rcal}(\theta)$ in \eqref{eqn:sketched_regularizer} from each of the previous tasks when learning the current one. This approach would cause the memory requirement to grow linearly in the number of tasks and can become a bottleneck in scaling the method. A standard way to tackle this is in works on structural regularization is to apply the \emph{moving average} method to aggregate the histories \citep{chaudhry2018riemannian,schwarz2018progress}. Specifically, let $\widetilde{\Omega}_{\tau-1}$ be the importance matrix maintained after training on the $(\tau - 1)$-th task, then, given the (approximate) importance matrix $\widetilde{\Omega}$ outputted on the data from task ${\tau}$, the histories are updated as
\begin{align}\label{eqn:omega_online_learning}
\widetilde{\Omega}_\tau \leftarrow \alpha \widetilde{\Omega} + (1-\alpha)\widetilde{\Omega}_{\tau - 1}
\end{align}
where $\alpha \in (0, 1]$ is a hyperparameter.

Since the matrix $\widetilde{\Omega}$ is a diagonal matrix for each task in the aforementioned methods, computing the sum from \eqref{eqn:omega_online_learning} is straightforward. Sketched SR, however, doesn't explicitly compute the matrix $\widetilde{\Omega} = \widetilde{W}^\top \widetilde{W}$, hence we cannot hope to compute the matrix $\widetilde{\Omega}_\tau$ defined by the sum in  \eqref{eqn:omega_online_learning}. We propose the following method:
let $\widetilde{W}_{\tau-1}$ be the maintained sketch after training on the $(\tau - 1)$-th task, then, given the weight $\theta^*$ and the sketch $\widetilde{W}$ outputted on the data from task ${\tau}$, we update the importance matrix as
\begin{align}\label{eqn:sketched_online_learning}
\widetilde{W}_\tau \leftarrow \sqrt{\alpha} \widetilde{W} + \sqrt{1-\alpha}\widetilde{W}_{\tau - 1}.
\end{align}
When learning on task $\tau+1$ we use the regularizer 
\begin{align}\label{eqn:sketched_online_learning_regularizer}
\widetilde{\Rcal}_\tau(\theta) := \frac{1}{2n} \|\widetilde{W}_\tau (\theta-\theta^*) \|_2^2.
\end{align}
A priori, it is not clear why the regularizer from \eqref{eqn:sketched_online_learning_regularizer} is a good approximation to that induced by the importance matrix from \eqref{eqn:omega_online_learning}. We give a theorem, along with a proof in Appendix \ref{sec:append-online}, that implies that for any fixed $\theta \in \Rbb^m$ the regularizer given by \eqref{eqn:sketched_online_learning_regularizer} is close to that induced by the importance matrix $\widetilde{\Omega}_\tau$ from \eqref{eqn:omega_online_learning}.

\begin{thm}\label{cor:online_guarantee}
 Let $W_1, \dots, W_\tau \in \Rbb^{n \times m}$ be a sequence of matrices, $\alpha_1, \dots, \alpha_\tau \geq 0$ be a sequence of weights , and $S_1, \dots, S_\tau \in \Rbb^{t \times n}$ be a sequence of independent CountSketch matrices with sketch size $t \in \Nbb^+$. There exists a constant $C>0$ such that for any fixed $\theta \in \Rbb^m$, 
\[
    {\abs{ \left\| \left( \sum_{i=1}^\tau \sqrt{\alpha_i} S_i W_i \right) \theta \right\|_2^2 - \sum_{i=1}^\tau \alpha_i \|W_i \theta \|_2^2 }} \leq {\frac{C}{\sqrt{t}} \cdot \sum_{i=1}^\tau \alpha_i \|W_i \theta \|_2^2}
\]
with probability at least $0.99$.
\end{thm}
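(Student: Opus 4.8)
The plan is to reduce the statement to a single second-moment computation for a sum of independent CountSketches and then invoke Chebyshev's inequality. First I would absorb the weights into the vectors by setting $v_i := \sqrt{\alpha_i}\, W_i \theta \in \Rbb^n$ (legitimate since $\alpha_i \ge 0$), so that $\big(\sum_i \sqrt{\alpha_i} S_i W_i\big)\theta = \sum_i S_i v_i$ and $\sum_i \alpha_i \|W_i\theta\|_2^2 = \sum_i \|v_i\|_2^2$. The target becomes
\[
\Big| \big\| \textstyle\sum_{i=1}^\tau S_i v_i \big\|_2^2 - \sum_{i=1}^\tau \| v_i \|_2^2 \Big| \le \frac{C}{\sqrt{t}} \sum_{i=1}^\tau \| v_i \|_2^2 .
\]
Writing $X := \sum_i S_i v_i$ and expanding $\|X\|_2^2 = \sum_{i,j} \langle S_i v_i, S_j v_j\rangle$, the two structural facts I would use are the standard CountSketch identities $\Ebb[S_i v] = 0$ for every fixed $v$ (mean-zero signs) and $\Ebb[\|S_i v\|_2^2] = \|v\|_2^2$ ($2$-wise independence of $h$ and $\sigma$). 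Together with mutual independence of the $S_i$, these give unbiasedness $\Ebb[\|X\|_2^2] = \sum_i \|v_i\|_2^2$, so it suffices to control the mean-zero deviation $Z := \|X\|_2^2 - \sum_i \|v_i\|_2^2$ via its variance.

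Next I would split $Z = \sum_i A_i + B$ with diagonal fluctuations $A_i := \|S_i v_i\|_2^2 - \|v_i\|_2^2$ and cross terms $B := \sum_{i \ne j} \langle S_i v_i, S_j v_j\rangle$, and compute $\Ebb[Z^2]$. The crux is that essentially every covariance vanishes: since the sketches are independent and each $S_i v_i$ is mean-zero, $\Ebb[A_i A_j] = 0$ for $i \ne j$, $\Ebb[A_i B] = 0$ for all $i$, and any product $\Ebb[\langle S_i v_i, S_j v_j\rangle \langle S_k v_k, S_l v_l\rangle]$ with $\{i,j\} \ne \{k,l\}$ is zero — the clean way to see each of these is to condition on the shared sketch (if any) and use that the unshared factor has mean zero. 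This collapses the expansion to $\Ebb[Z^2] = \sum_i \Ebb[A_i^2] + \Ebb[B^2]$.

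It then remains to bound the two surviving moments, which is where hash-function independence is genuinely used. For the diagonal terms I would invoke the classical single-sketch variance bound $\Ebb[A_i^2] \le \tfrac{2}{t}\|v_i\|_2^4$, obtained by expanding $\|S_i v_i\|_2^2 - \|v_i\|_2^2$ over off-diagonal index pairs and using $4$-wise independence of $\sigma$ together with $\Pr[h(a)=h(b)] = 1/t$. For the cross terms I would compute, for independent sketches, $\Ebb[\langle S_i v_i, S_j v_j\rangle^2] = \tfrac{1}{t}\|v_i\|_2^2\|v_j\|_2^2$, using that the coordinates of $S_i v_i$ are pairwise uncorrelated with common variance $\tfrac1t\|v_i\|_2^2$; summing yields $\Ebb[B^2] = \tfrac{2}{t}\sum_{i\ne j}\|v_i\|_2^2\|v_j\|_2^2$. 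Both are dominated by $\tfrac{2}{t}\big(\sum_i \|v_i\|_2^2\big)^2$, so $\Ebb[Z^2] \le \tfrac{4}{t}\big(\sum_i \|v_i\|_2^2\big)^2$. Chebyshev's inequality with failure probability $0.01$ then gives $|Z| \le \tfrac{20}{\sqrt{t}}\sum_i \|v_i\|_2^2$ with probability at least $0.99$, establishing the claim with $C = 20$.

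I expect the main obstacle to be purely organizational: arranging the fourth-moment expansion so that the independence-plus-mean-zero cancellations are transparent rather than a sprawling index sum. Once the covariance bookkeeping is in place, the two residual moments are routine CountSketch computations; the only subtlety worth flagging is that $4$-wise independence of the signs is needed solely for the diagonal $\Ebb[A_i^2]$ term, whereas $2$-wise independence already suffices for the cross terms.
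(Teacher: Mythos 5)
Your proposal is correct, and at its core it is the same argument as the paper's: reduce to $v_i=\sqrt{\alpha_i}W_i\theta$, show the estimator is unbiased, bound the second moment of the deviation by $O(1/t)\cdot\bigl(\sum_i\|v_i\|_2^2\bigr)^2$ using the mean-zero/uncorrelated-coordinate/variance properties of CountSketch plus independence of the $S_i$, and finish with Chebyshev. The one substantive difference is in how the deviation is centered, and here your version is actually the more careful one. The paper computes the mean and variance of $\bigl\|\sum_k S_k y_k\bigr\|_2^2-\sum_k\|S_k y_k\|_2^2$, i.e.\ it only controls your cross term $B$, and then asserts the theorem "follows by Chebyshev"; the remaining discrepancy $\sum_k\|S_k y_k\|_2^2-\sum_k\|y_k\|_2^2=\sum_k A_k$ between the per-task sketched norms and the true norms is never explicitly bounded. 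Your decomposition $Z=\sum_i A_i+B$ handles this missing piece via the standard single-sketch variance bound $\Ebb[A_i^2]\le \tfrac{2}{t}\|v_i\|_2^4$ (which, as you correctly flag, is the only place $4$-wise independence of the signs is genuinely needed; the cross terms need only pairwise uncorrelatedness of coordinates), and your covariance bookkeeping showing $\Ebb[Z^2]=\sum_i\Ebb[A_i^2]+\Ebb[B^2]\le\tfrac{4}{t}\bigl(\sum_i\|v_i\|_2^2\bigr)^2$ is sound (in fact the two contributions combine to exactly $\tfrac{2}{t}\bigl(\sum_i\|v_i\|_2^2\bigr)^2$, so your constant is slightly loose but harmless). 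In short: same second-moment-plus-Chebyshev strategy, but your accounting closes a small gap in the paper's write-up at the cost of one extra routine moment computation.
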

As a corollary to Theorem \ref{cor:online_guarantee}, we show the approximation error of the regularizer from \eqref{eqn:sketched_online_learning_regularizer}. 
\begin{cor}
    For any fixed $\theta \in \Rbb^m$, the regularizer $\widetilde{\Rcal}\tau(\theta)$ given by \eqref{eqn:sketched_online_learning_regularizer} has the property that
    \[
        \widetilde{\Rcal}\tau(\theta) = \left(1 \pm \bigO{\frac{1}{\sqrt{t}}} \right) \cdot (\theta - \theta^*)^\top \widetilde{\Omega}_\tau(\theta - \theta^*)
    \]
    with probability 0.99 and where $\widetilde{\Omega}_\tau$ is the matrix given by the recurrence in \eqref{eqn:omega_online_learning} with  $\widetilde{\Omega} = \widetilde{W}^\top \widetilde{W}$.
\end{cor}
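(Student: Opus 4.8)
The plan is to reduce the corollary to Theorem \ref{cor:online_guarantee} by unrolling the two moving-average recurrences and checking that they produce matching coefficient sequences. Writing $u := \theta - \theta^*$ and letting $\widetilde W^{(i)} = S_i W_i$ denote the per-task sketch produced on task $i$ (with independent CountSketches $S_i$, since Algorithm \ref{alg:SSR} draws fresh hash functions for each task), I would first unroll \eqref{eqn:sketched_online_learning}. A straightforward induction gives
\[
\widetilde W_\tau = \sum_{i=1}^\tau \sqrt{\beta_i}\, S_i W_i ,
\]
where $\beta_i = \alpha(1-\alpha)^{\tau - i}$ for $2 \le i \le \tau$ and $\beta_1 = (1-\alpha)^{\tau - 1}$, each coefficient being the product of the $\sqrt{\alpha}$ and $\sqrt{1-\alpha}$ factors accumulated along the recurrence. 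Unrolling \eqref{eqn:omega_online_learning} with per-task matrix $\widetilde\Omega = \widetilde W^{(i)\top}\widetilde W^{(i)}$ yields $\widetilde\Omega_\tau = \sum_{i=1}^\tau \beta_i (S_iW_i)^\top (S_iW_i)$ with \emph{exactly the same} $\beta_i$. The key structural observation --- and the reason the square-root average in \eqref{eqn:sketched_online_learning} is the right surrogate for the linear average in \eqref{eqn:omega_online_learning} --- is that squaring the coefficient $\sqrt{\beta_i}$ attached to $S_iW_i$ in $\widetilde W_\tau$ reproduces precisely the coefficient $\beta_i$ attached to $(S_iW_i)^\top(S_iW_i)$ in $\widetilde\Omega_\tau$.

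Second, I would rewrite both sides of the claimed identity in this unrolled form. Up to the fixed normalization, $\widetilde{\Rcal}_\tau(\theta) \propto \| \widetilde W_\tau u\|_2^2 = \big\| \sum_{i=1}^\tau \sqrt{\beta_i}\, S_iW_iu \big\|_2^2$, while the target quadratic form is $u^\top \widetilde\Omega_\tau u = \sum_{i=1}^\tau \beta_i \|S_iW_iu\|_2^2$. I would then apply Theorem \ref{cor:online_guarantee} with weight sequence $\alpha_i := \beta_i$ and the fixed vector $u$, which immediately controls the left-hand quantity against the \emph{unsketched} aggregate: $\big\| \sum_i \sqrt{\beta_i} S_iW_iu \big\|_2^2 = \big(1 \pm \bigO{1/\sqrt{t}}\big) \sum_i \beta_i \|W_iu\|_2^2$ with probability at least $0.99$.

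Third --- and this is where I expect the only real friction --- I must bridge the gap between the unsketched reference $\sum_i \beta_i \|W_iu\|_2^2$ that Theorem \ref{cor:online_guarantee} naturally produces and the \emph{sketched} aggregate $u^\top \widetilde\Omega_\tau u = \sum_i \beta_i \|S_iW_iu\|_2^2$ appearing in the corollary. I would close this with a standard per-sketch concentration step: each CountSketch is isometric in expectation, so $\Ebb\big[\|S_iW_iu\|_2^2\big] = \|W_iu\|_2^2$, and the weighted diagonal sum $\sum_i \beta_i \|S_iW_iu\|_2^2$ has variance $\bigO{1/t} \cdot \big(\sum_i \beta_i \|W_iu\|_2^2\big)^2$; Chebyshev then gives $\sum_i \beta_i \|S_iW_iu\|_2^2 = \big(1 \pm \bigO{1/\sqrt{t}}\big)\sum_i \beta_i \|W_iu\|_2^2$. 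Chaining this with the previous display, together with a union bound over the two constant-probability events (adjusting constants to keep the overall success probability at $0.99$), yields $\widetilde{\Rcal}_\tau(\theta) = \big(1 \pm \bigO{1/\sqrt{t}}\big)\, u^\top \widetilde\Omega_\tau u$. I would also note that if one instead reads $\widetilde\Omega_\tau$ as the \emph{true} aggregated importance matrix $\sum_i \beta_i W_i^\top W_i$, then the corollary follows from Theorem \ref{cor:online_guarantee} alone and this third step is unnecessary.

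The main obstacle is therefore not the sketching analysis, which is inherited from Theorem \ref{cor:online_guarantee}, but the bookkeeping: verifying the coefficient match between the two recurrences, tracking the global normalization so the stated equality holds literally (the $\tfrac{1}{2n}$ in $\widetilde{\Rcal}_\tau$ must be folded into the normalization of $\widetilde\Omega_\tau$, consistently with $\Omega = \tfrac{1}{n}W^\top W$ in \eqref{eqn:omega_decomposed}), and the minor concentration argument needed to pass from the unsketched to the sketched reference matrix. I would finally make explicit the implicitly used hypothesis that the per-task sketches $S_1,\dots,S_\tau$ are independent, which is exactly what guarantees the cross terms in the expansion of $\| \widetilde W_\tau u\|_2^2$ have mean zero and concentrate at the $\bigO{1/\sqrt{t}}$ rate.
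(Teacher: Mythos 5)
Your proposal is correct and follows essentially the route the paper intends: unroll both moving-average recurrences, observe that the $\sqrt{\alpha}$, $\sqrt{1-\alpha}$ coefficients in \eqref{eqn:sketched_online_learning} square to exactly the $\alpha$, $(1-\alpha)$ coefficients in \eqref{eqn:omega_online_learning}, and invoke Theorem \ref{cor:online_guarantee} with the unrolled weights. The ``friction'' you flag in your third step is genuine but is largely dissolved by the paper's own proof of Theorem \ref{cor:online_guarantee}, whose Chebyshev argument in fact bounds the deviation of $\bigl\| \sum_k S_k y_k \bigr\|_2^2$ from the \emph{sketched} sum $\sum_k \|S_k y_k\|_2^2 = (\theta-\theta^*)^\top \widetilde{\Omega}_\tau (\theta-\theta^*)$ directly (with error measured against $\sum_k \|y_k\|_2^2$), so your additional per-sketch concentration step is precisely what is needed to convert that additive bound into the multiplicative $\bigl(1 \pm \bigO{1/\sqrt{t}}\bigr)$ form claimed by the corollary, and your remarks on the independence of the $S_i$ and the $\tfrac{1}{2n}$ normalization are apt.
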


\paragraph{Remark.} Note that Theorem \ref{cor:sketch_guarantee} enjoys a stronger guarantee than that of Theorem \ref{cor:online_guarantee}, i.e., while the approximation guarantee in Theorem \ref{cor:online_guarantee} holds for \emph{any fixed} vector $\theta \in \Rbb^m$, the guarantee in Theorem \ref{cor:sketch_guarantee} holds for all $\theta \in \Rbb^m$ \emph{simultaneously}. We expect that the stronger guarantee of Theorem \ref{cor:sketch_guarantee} can be achieved in the setting of Theorem \ref{cor:online_guarantee} by computing $\bigO{\log(t)}$ independent copies of the aggregated sketch $\widetilde{W}_\tau$ from \eqref{eqn:sketched_online_learning}. The regularizer used when learning on task $\tau + 1$ is simply the average of the regularizer $\widetilde{\Rcal}_\tau(\theta)$ from \eqref{eqn:sketched_online_learning_regularizer} outputted by each copy of $\widetilde{W}_\tau$. We leave it to future work to analyze this extension of Sketched SR in order to obtain the stronger guarantee for the setting in Theorem \ref{cor:online_guarantee}.

\section{Experiments}\label{sec:exps}
In this section, we present empirical evidence that verifies the effectiveness of our proposed Sketched SR methods.
The experiments are conducted with variants of two representative SR algorithms, EWC \citep{kirkpatrick2017overcoming} and MAS \citep{aljundi2018memory}.
All the reported numerical results are averaged over $5$ runs with different random seeds.

\subsection{Synthetic Experiments}\label{sec:toy-exp}
We start with a series of synthetic experiments.

\paragraph{Setup.} 
We first consider a synthetic 2D binary classification task from \citet{pan2020continual}. The experiment consists of 5 classification tasks learnt sequentially using the regularization induced by each of EWC and MAS with a small multi-layer perceptron.
The network has $8,770$ parameters.
For the regularization matrix induced by EWC and MAS, we compare the performance of various approaches to approximating the matrix including: 
\begin{enumerate}[label=(\roman*),noitemsep,topsep=0mm,parsep=0mm,partopsep=0mm,leftmargin=*]
    \item a diagonal approximation;
    \item a block-diagonal approximation, with a sequence of $ 50 \times 50 $ non-zero blocks along the diagonal;
    \item sketched SR with sketch size $t = 50$;
    \item a rank-$50$ SVD;
    \item and the full importance matrix.
\end{enumerate}
For all algorithms, we use ADAM as the optimizer with learning rate $ 10^{-3} $, and use the moving average parameter $ \alpha = 0.5 $. 
For more details please see Appendix \ref{sec:append-toy}.

\paragraph{Approximation vs Full Matrix Comparison.}
We first plot the empirical Fisher (the importance matrix in EWC methods) and the sketched empirical Fisher in Figure \ref{fig:fim_illustration}. The empirical Fisher is obtained with the optimal weight that fits the first four tasks and the sketched empirical Fisher uses sketch size $t=50$.
From the figure we observe that the empirical Fisher cannot be well-approximated by its diagonal or block-diagonal; moreover, the sketched empirical Fisher can utilize the off-diagonal entries to generate a better approximation.
This is further supported by the numerical approximation error shown in the table within Figure \ref{fig:fim_illustration}.
Note that while the low-rank method can offer a better approximation, it is not computationally efficient in practice.

\paragraph{Performance of the Compared Algorithms.}
We then compare the performance of each algorithms in Figures \ref{fig:toy_ewc_illustration}.
The plots consistently indicate that sketched SR methods are more effective than diagonal SR methods for overcoming catastrophic forgetting. Additionally, while low-rank SR and full SR perform better than sketched SR, they are not computationally feasible in practical settings with large models.

\begin{figure*}
    \centering
    \subfigure[Diagonal  ]{\includegraphics[width=0.19\linewidth]{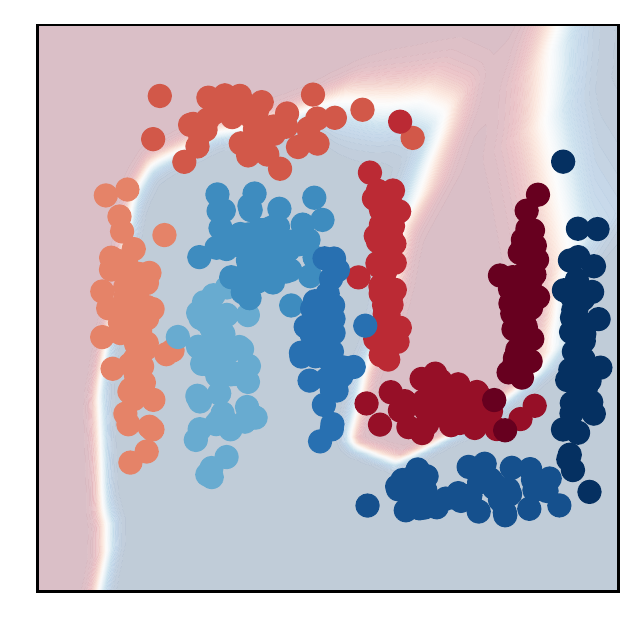}\label{fig:toy_ewc_final_boundary}}
    \subfigure[Block-Diagonal ]{\includegraphics[width=0.19\linewidth]{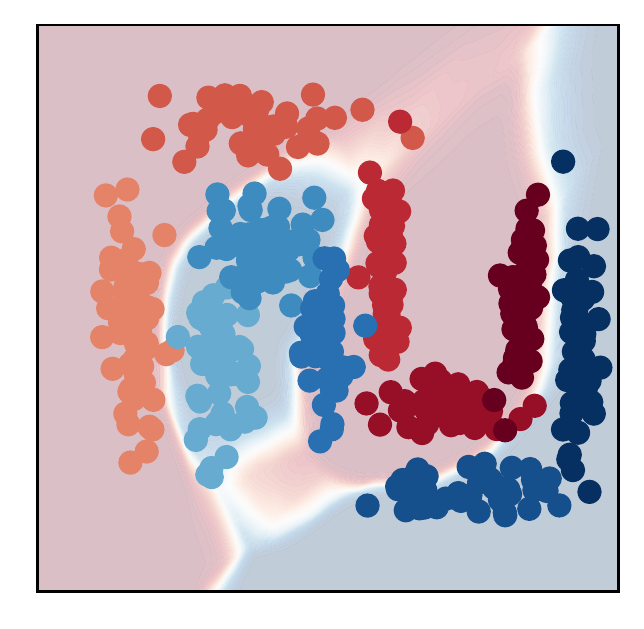}\label{fig:toy_block_diagonal_ewc_final_boundary}}
    \subfigure[Sketched]{\includegraphics[width=0.19\linewidth]{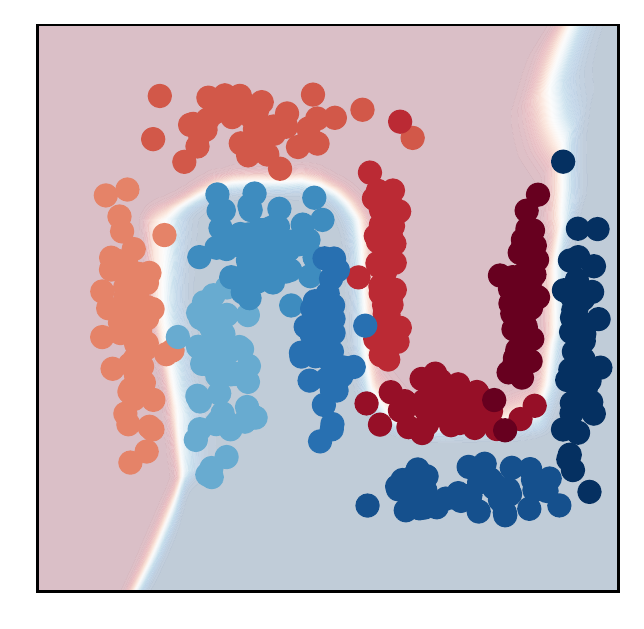}\label{fig:toy_sketch_ewc_final_boundary}}
    \subfigure[Low-Rank ]{\includegraphics[width=0.19\linewidth]{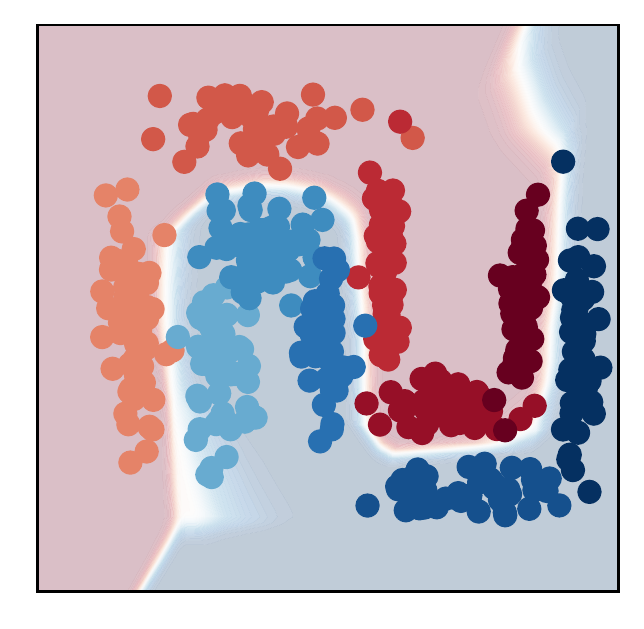}\label{fig:toy_low_rank_ewc_final_boundary}}
    \subfigure[Full EWC]{\includegraphics[width=0.19\linewidth]{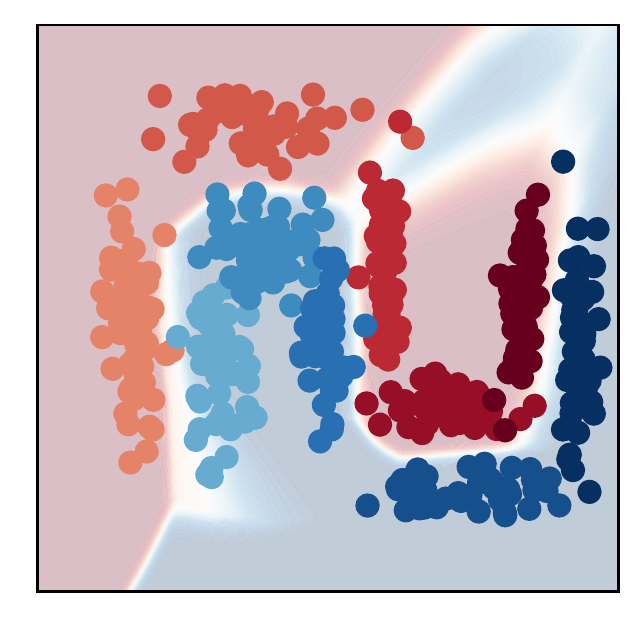}\label{fig:toy_full_ewc_final_boundary}}
    \label{fig:toy_ewc_illustration}
    
    \begin{subfigure}
        \centering
        \begin{small}
        \begin{sc}
        \begin{tabular}{c|ccccc}
            \toprule
            Avg. Accuracy & Diagonal & Block-Diag. & Sketched & Low-Rank & Full IM \\
            \midrule
            EWC & $ 88.0 \pm 7.4\% $ & $ 79.3 \pm 9.8\% $ & $ 92.1 \pm 6.8\% $ & $ 93.9 \pm 3.9\% $ & $ 95.7 \pm 5.4\% $ \\
            MAS & $ 83.1 \pm 5.6\% $ & $ 76.6 \pm 12.0\% $ & $ 85.9 \pm 9.6\% $ & $ 84.2 \pm 9.8\% $ & $ 89.9 \pm 7.2\% $ \\
            \bottomrule
        \end{tabular}
        \end{sc}
        \end{small}
    \end{subfigure}\label{table:toy_accAndFrobErr}
    
    \caption{ Variants of EWC \citep{kirkpatrick2017overcoming} and MAS \citep{aljundi2018memory} on a synthetic 2D binary classification task from \citep{pan2020continual}. 
    In the figures, the two classes are represented by the different shades of red/blue, learnt sequentially using the variants of EWC.
    For the block-diagonal SR, the block size is $ 50 \times 50 $; for the sketched SR, we set $t=50$; for the low-rank SR, the rank is $50$.
    The figures show the the decision boundaries found by the compared algorithms.
    The table shows the average accuracy across all tasks (after learning the final task) for the compared algorithms.
    The plots and the table suggest that: sketched SR has a higher average accuracy than both diagonal SR and block-diagonal SR by overcoming catastrophic forgetting; while average accuracy of low-rank SR and full SR is higher, they requires significantly more computation which is not affordable in practice.
    See Section \ref{sec:toy-exp} for more details.
    }
\end{figure*}

\subsection{Permuted-MNIST}\label{sec:mnist-exp}
Next we demonstrate the effectiveness of our methods with experiments on permuted-MNIST.

\paragraph{Setup.}
In this benchmark experiment for lifelong learning \citep{kirkpatrick2017overcoming,zenke2017continual,rostami2019complementary,ritter2018online,ramasesh2021anatomy}, there are $10$ sequential tasks, each of them is a $10$-classes classification task based on a permuted MNIST dataset, where the pixels in each figure are permuted according to certain rule (to be more specific, the permutation rule is same within a task but random across different tasks).
We use a multi-layer perceptron as the classifier, and ADAM as the optimizer.
The learning rate is set to be $ 10^{-4} $.
The moving average parameter is $ \alpha = 0.25 $ for all algorithms.
For each compared algorithm, the regularization coefficient $ \lambda $ is chosen to be optimal by grid search.
All the reported numerical results are averaged over $5$ runs with different random seeds.
For more details see Appendix \ref{sec:append-mnist}.

\paragraph{Performance of the Compared Algorithms.}
Figure \ref{fig:perm_mnist_compare} shows the average accuracy across previously learned tasks after each epoch of training for the compared methods. Table \ref{table:mnist_cifar_result} reports the averaged accuracy (across all tasks) of the compared algorithms.
From the figures and the table, we consistently see that sketched SR methods outperform their diagonal counterparts, in both EWC and MAS regimes, in terms of overcoming catastrophic forgetting.
This is explored deeper in Figure \ref{fig:perm_mnist_task_compare_1}, where we show the accuracy on each task after training on all the tasks for the compared algorithms.
According to Figure \ref{fig:perm_mnist_task_compare_1}, sketched SR methods forget less about the early tasks, which directly demonstrate its advantage for overcoming catastrophic forgetting.
This is consistent to our finding from the synthetic experiments.

\begin{figure}[t]
    \centering
    \includegraphics[width=0.6\columnwidth]{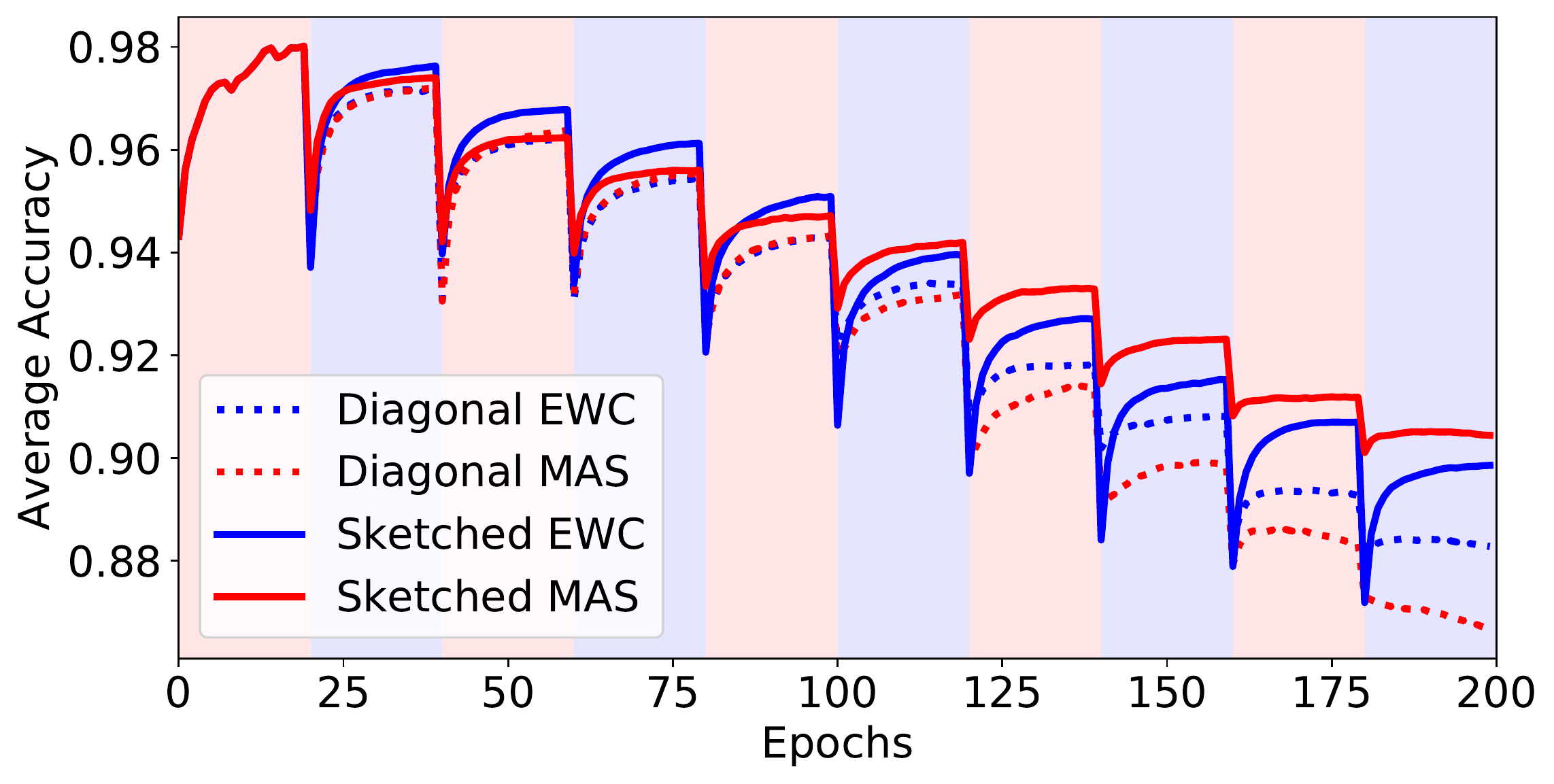}
    \caption{
    The average accuracy across previously learned tasks after each epoch of training for both diagonal and sketched methods on permuted-MNIST.
    The plots suggest that sketched methods consistently outperform their diagonal counterparts for overcoming catastrophic forgetting.
    See Section \ref{sec:mnist-exp} for more details.}
    \label{fig:perm_mnist_compare}
\end{figure}

\begin{figure}[t]
    \centering
    \begin{minipage}[b]{.48\textwidth}
        \centering
        \begin{subfigure}
            \centering
            \includegraphics[width=\columnwidth]{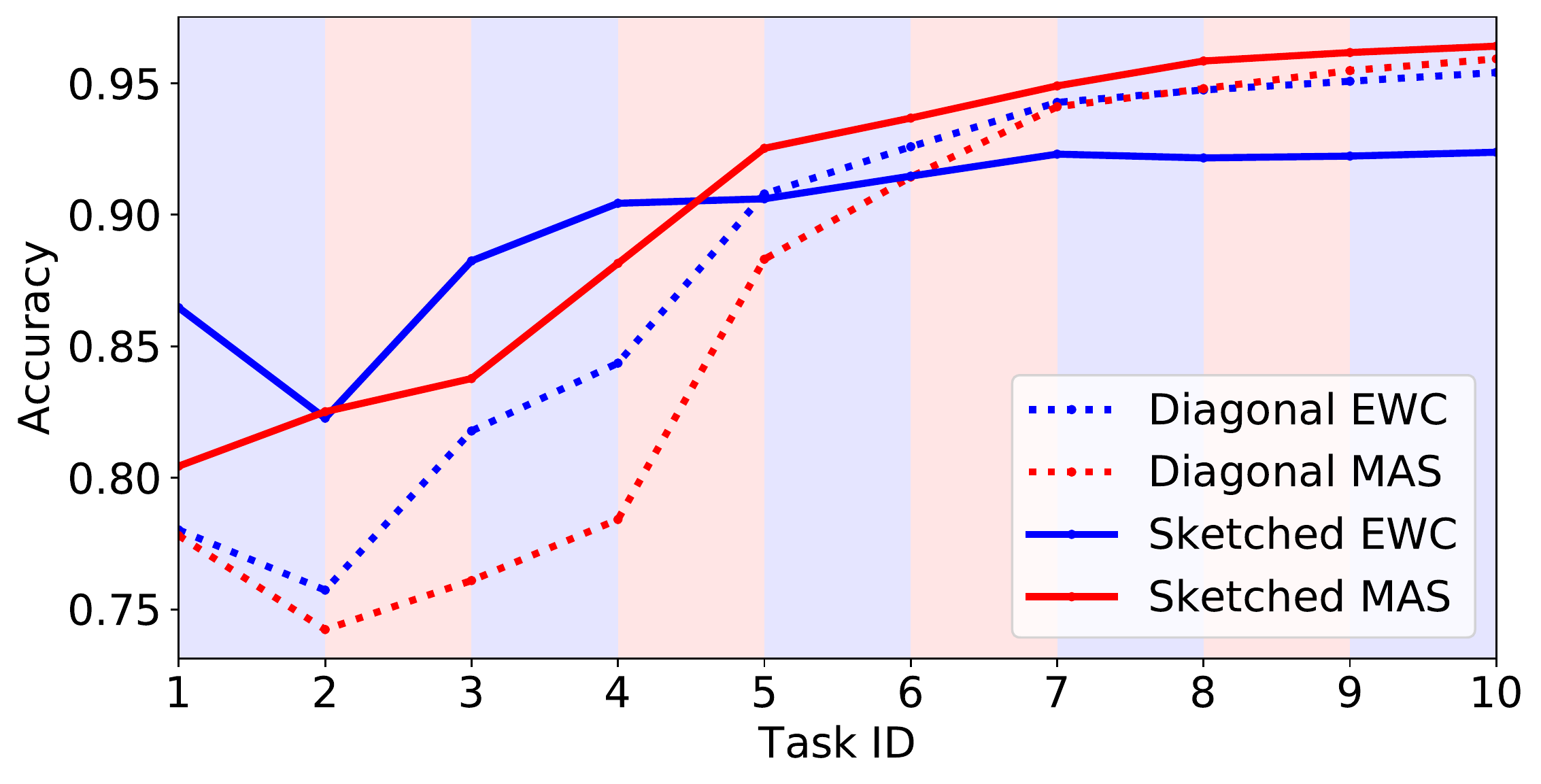}
            \caption{The accuracy of each task (after training on all tasks) of sketched methods vs. diagonal methods on permuted-MNIST. 
            The plots show that, sketched SR methods significantly outperform their diagonal counterparts in early tasks, which suggests that sketched SR methods are superior for overcoming catastrophic forgetting.}
            \label{fig:perm_mnist_task_compare_1}
        \end{subfigure}
    \end{minipage}\hfill
    \begin{minipage}[b]{.48\textwidth}
        \centering
        \begin{subfigure}
            \centering
            \includegraphics[width=0.6\columnwidth]{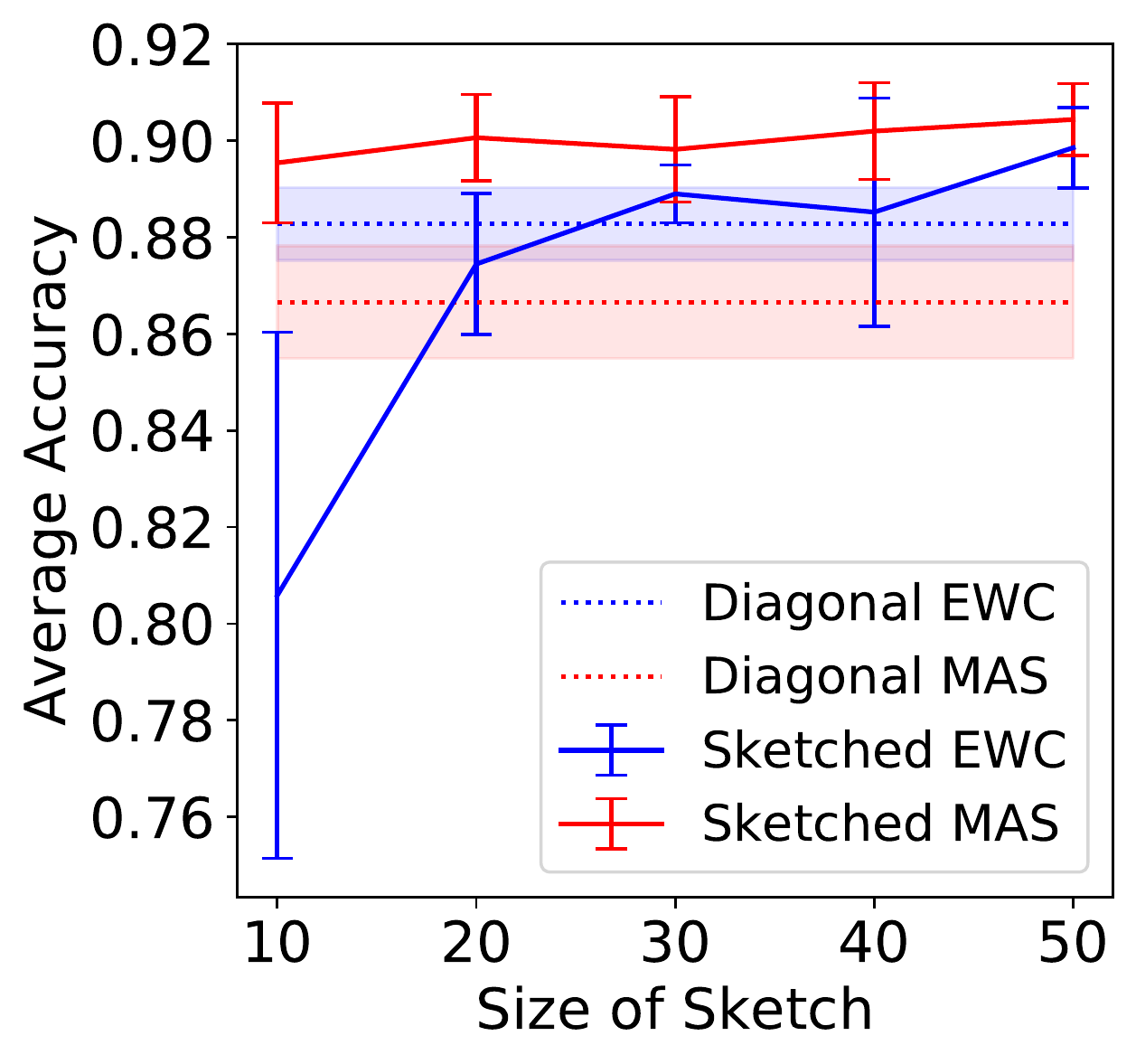}
            \caption{Effect of the sketch size ($t$) on the average accuracy of sketched methods for learning permuted-MNIST tasks. Two observations are made immediately: as the number of sketches increases, sketched methods tend to perform better; even using very small sketch size (e.g., $t\le 50$), sketched methods can outperform their diagonal counterparts.}
            \label{fig:perm_mnist_sketches_search}
        \end{subfigure}
    \end{minipage}%
\end{figure}

\paragraph{Effects of the Sketch Size.}
We then study the effects of the size of the sketch, i.e. $t$ in \eqref{eqn:sketched_regularizer}, on the performance of sketched SR.
The results are shown in Figure \ref{fig:perm_mnist_sketches_search}.
From the plot we see a clear trade-off between the size of the sketch and the average accuracy, where the average accuracy generally grows as the size of sketches increases --- however using more sketches costs more computation resources. 
Fortunately, even with a very small sketch size, e.g. $ t \ge 30 $, which is easily affordable in practice, sketched SR methods already significantly outperform diagonal SR methods. 
This demonstrates the practical effectiveness of the proposed sketched SR framework.

\begin{figure*}[t]
    \centering
    \includegraphics[width=1.0\linewidth]{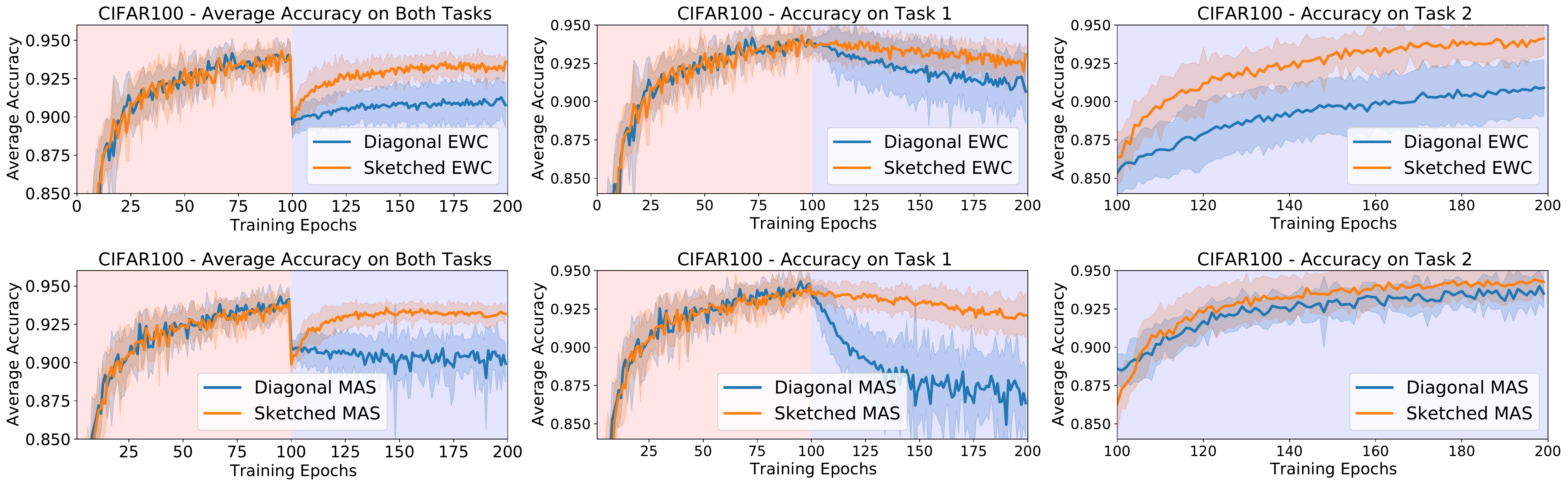}
    \caption{The average accuracy (over both tasks) of sketched SR and diagonal SR methods on CIFAR-100. 
    In each sub-figure, the first column shows the average test accuracy over both tasks and the ten repetitions, with corresponding standard deviations shown as shaded areas; the second column shows the average accuracy of task 1; and the third column shows it for task 2 (i.e., the first column is the average of the second and third columns).
    The plots suggest that, consistently, sketched variants are more effective than the diagonal versions in terms of overcoming catastrophic forgetting. See Section \ref{sec:cifar-exp} for more details.} 
    \label{fig:cifar100}
\end{figure*}

\subsection{CIFAR-100}\label{sec:cifar-exp}
Finally, we provide further verification for the effectiveness of our methods with CIFAR-100 experiments.

\paragraph{Setup.} 

We follow the \emph{CIFAR-100 Distribution Shift} task introduced in \cite{ramasesh2021anatomy}. The main difference from the split CIFAR experiment commonly used in the literature (see, e.g., \citep{zenke2017continual}) is that the \emph{CIFAR-100 Distribution Shift} does not require task-specific neural network heads for classifying classes of each task. Such a setting is consistent with our previous experiments, in which the same network is used to learn all tasks. 
In our experiment, similar to \cite{ramasesh2021anatomy}, both tasks are 5-class classification problems where each class is one of the 20 superclasses of the CIFAR-100 dataset. For instance, we take the five superclasses {\it aquatic mammals}, {\it fruits and vegetables}, {\it household electrical devices}, {\it trees}, and {\it vehicles-1}. The corresponding subclasses for Task 1 are (1) {\it dolphin}, (2) {\it apple}, (3) {\it lamp}, (4) {\it maple tree}, and (5) {\it bicycle}, while for Task 2, they are (1) {\it whale}, (2) {\it orange}, (3) {\it television}, (4) {\it willow}, and (5) {\it motorcycle}. 
In all experiments, we used a Wide-ResNet \citep{zagoruyko2016wide} as our backbone, and leveraged random flip, translation, and cutout \citep{devries2017improved} as augmentation.
We use ADAM with learning rate $10^{-3}$ as our optimizer for all experiments.
All reported results are averaged over $10$ runs with different random seeds.
For more details, please see Appendix \ref{sec:append-cifar}.

\paragraph{Performance of the Compared Algorithms.}
Figure \ref{fig:cifar100} shows the performance comparison between the sketched and diagonal variations of EWC and MAS methods. 
The plots suggest that sketched variants are significantly more effective than the diagonal versions in terms of overcoming catastrophic forgetting.
The results are consistent with those in synthetic experiments and permuted-MNIST experiments.

\begin{table}[ht]
\caption{The average accuracy (over all tasks) of sketched SR and diagonal SR methods on Permuted-MNIST and CIFAR-100. For sketched SR methods we set $t=50$.}
\label{table:mnist_cifar_result}
\vskip 0.15in
    \centering
    \begin{small}
    \begin{sc}
    \begin{tabular}{cc|cc}
        \toprule
        Dataset & Regime & Diagonal & Sketched\\
        \midrule
        \multirow{2}{*}{\makecell{Permuted-\\MNIST}} & EWC & 88.3$\pm$0.8\% & \textbf{89.8$\pm$0.9\%} \\
              & MAS & 86.7$\pm$1.2\% & \textbf{90.4$\pm$0.8\%} \\
        \midrule
        \multirow{2}{*}{\makecell{CIFAR-100}} & EWC & 90.8$\pm$1.5\% & \textbf{93.6$\pm$0.4\%} \\
              & MAS & 89.9$\pm$1.2\% & \textbf{93.2$\pm$0.6\%} \\
        \bottomrule
    \end{tabular}
    \end{sc}
    \end{small}
\vskip -0.1in
\end{table}

\section{Conclusion}\label{sec:conclusion}
In this paper we present sketched structural regularization as a general framework for overcoming catastrophic forgetting in lifelong learning.
Compared with the widely-used diagonal version of structural regularization approaches, our methods achieve better performance for overcoming catastrophic forgetting, since an improved approximation to the large importance matrix is adopted. 
In contrast to the inefficient low-rank approximation methods (e.g., PCA), the proposed sketched structural regularization is computational affordable for practical lifelong learning models.
Finally, the effectiveness of the proposed methods are verified in multiple benchmark lifelong learning tasks.

\bibliographystyle{plainnat}
\bibliography{ref}

\newpage
\onecolumn
\appendix

\section{Further Details on Theory}
\subsection{CountSketch Theory}\label{sec:append-theory}
The following theorem from \citet{cohen2016OSEStableRank} builds on several results for CountSketch matrices, giving theoretical guarantees for sketching quadratic forms of matrices. The theorem is re-phrased for our purposes, showing the quality of approximation by the sketch in preserving $\ell_2$-norms of vectors in the subspace spanned by the columns of $W$, the matrix that is being sketched. There is a trade-off in the quality of approximation by the sketch and its size, given by the dimension $t$ of the columns of the sketch matrix $S$. The quality of approximation depends on the $\ell_2$-norm of $\theta$ and the spectrum of $W$, namely the operator norm $\|W\|_2$ and the Frobenius norm $\|W\|_F$. 

\begin{thm}[Theorem 6, \citet{cohen2016OSEStableRank}]\label{thm:CountSketchThm_orig}
Let $W \in \mathbb{R}^{n \times m}$ be a matrix, $k \in \Nbb^+$ be a parameter and let $\epsilon, \delta > 0$ be constants. There exists a constant $C > 0$ such that a CountSketch matrix $S \in \mathbb{R}^{t \times n}$ with $t = \frac{Ck^2}{\epsilon^2\delta}$ has the property that for all $\theta \in \Rbb^m$,
\begin{align}\label{eqn:CountSketchGuarantee}
    {\abs{ \|SW \theta\|_2^2 - \|W \theta\|_2^2 }} \leq \epsilon \|\theta\|_2^2 \bracket{\|W\|_2^2 + \frac{\|W\|_F^2}{k}} 
\end{align}
with probability at least $1 -\delta$ and where the probability is taken over the randomness of the CountSketch matrix $S$.
\end{thm}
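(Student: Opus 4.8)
The plan is to convert the uniform ``for all $\theta$'' guarantee into a single operator-norm estimate and then to establish that estimate via a spectral approximate-matrix-multiplication (AMM) bound for CountSketch whose error separates into a ``head'' piece scaling with $\|W\|_2^2$ and a ``tail'' piece scaling with $\|W\|_F^2/k$. First I would write the quantity to be controlled as a quadratic form. Since
\[
\|SW\theta\|_2^2 - \|W\theta\|_2^2 = \theta^\top E\, \theta, \qquad E := W^\top(S^\top S - I_n)W,
\]
and $E$ is symmetric, the identity $\sup_{\theta\neq 0}|\theta^\top E\theta|/\|\theta\|_2^2 = \|E\|_2$ shows that \eqref{eqn:CountSketchGuarantee} holding \emph{simultaneously} for all $\theta$ is equivalent to the single spectral bound $\|W^\top S^\top S W - W^\top W\|_2 \le \epsilon(\|W\|_2^2 + \|W\|_F^2/k)$. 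Thus uniformity over $\theta$ is free once we have operator-norm control on the sketched Gram matrix, and that is the object I would bound with probability at least $1-\delta$.

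Second, I would record the second-moment properties of CountSketch that drive the estimate: $S$ is unbiased, $\Ebb[S^\top S]=I_n$, and for fixed vectors it satisfies the JL-moment property, so that $\Ebb[\langle Sx, Sy\rangle]=\langle x,y\rangle$ with variance $O(1/t)\,(\|x\|_2^2\|y\|_2^2 + \langle x,y\rangle^2)$. Summing these variances over an orthonormal basis of the column space of $W$ yields the Frobenius AMM bound $\Ebb\,\|E\|_F^2 = O(1/t)\,\|W\|_F^4$. It is worth stressing why this does not suffice: the crude inequality $\|E\|_2 \le \|E\|_F$ only gives $\|E\|_2 = O(\|W\|_F^2/\sqrt{t})$, a pure tail bound with no head term, which would force $t \approx \|W\|_F^4/(\epsilon^2\|W\|_2^4)$ and eliminate the tunable parameter $k$. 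The refined head/tail form therefore cannot come from Frobenius AMM and requires a genuine operator-norm argument.

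Third, and this is the crux, I would bound $\|E\|_2$ by the moment (trace) method. For an even integer $p$ I would use $\Ebb\,\|E\|_2^p \le \Ebb\,\mathrm{tr}(E^p)$ and expand the trace as a sum of closed walks over the row-index set $[n]$, evaluating each term through the collision structure of the hash $h:[n]\to[t]$ and the signs $\sigma:[n]\to\{-1,1\}$. The $4$-wise independence of $\sigma$ and $2$-wise independence of $h$ make most cross terms vanish in expectation, and the surviving combinatorial contributions partition so that diagonal-type terms are governed by $\|W\|_2^2$ while the off-diagonal collision terms contribute $\|W\|_F^2/k$; this is precisely the intrinsic-dimension phenomenon in which the stable rank $\|W\|_F^2/\|W\|_2^2$ dictates how many buckets are needed. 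This is exactly the content of Theorem~6 of \citet{cohen2016OSEStableRank} and the underlying Cohen--Nelson--Woodruff stable-rank AMM analysis, which I would invoke. A final application of Markov's inequality to the resulting moment bound turns the expectation into a probability-$(1-\delta)$ statement and yields the stated sketch size $t = Ck^2/(\epsilon^2\delta)$.

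The main obstacle is the spectral AMM step with the head/tail split. The reduction to an operator-norm bound and the second-moment identities are routine; what is genuinely hard is controlling $\|E\|_2$ with the $\|W\|_2^2 + \|W\|_F^2/k$ error rather than the much weaker Frobenius surrogate, since this demands careful combinatorial bookkeeping of CountSketch's collisions in the trace expansion (equivalently, an intrinsic-dimension matrix concentration inequality). This is the ingredient supplied by the cited CountSketch machinery.
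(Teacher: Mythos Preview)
The paper does not prove this statement at all: it is presented verbatim as Theorem~6 of \citet{cohen2016OSEStableRank}, cited as a black box, with the only additional remark being the one-line observation that Theorem~\ref{cor:sketch_guarantee} follows by choosing $k$ to be the stable rank. There is therefore no ``paper's own proof'' to compare against.

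Your proposal goes well beyond what the paper does, giving a genuine sketch of how the cited result is established: the reduction from the uniform-in-$\theta$ guarantee to the single operator-norm bound $\|W^\top S^\top S W - W^\top W\|_2$, the explanation of why Frobenius AMM is too weak to yield the head/tail split, and the trace-moment strategy for the spectral bound. That outline is faithful to the Cohen--Nelson--Woodruff line of argument. Since in the end you also write that you would ``invoke'' Theorem~6 of \citet{cohen2016OSEStableRank}, your proposal is strictly more informative than, and entirely consistent with, the paper's treatment; the only redundancy is that the detailed sketch is unnecessary if you are going to cite the result anyway.
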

Theorem \ref{cor:sketch_guarantee} follows as a corollary of the above theorem by noting that when $t \geq \|W\|_F^4/(\epsilon^2\|W\|_2^4)$ the error scales with $\epsilon\|W\|_2^2\|\theta\|_2^2$.

\subsection{Proof of Theorem \ref{cor:online_guarantee}}\label{sec:append-online}
Throughout this section, we let $(a)_i$ denote the $i$-th entry of a vector $a \in \Rbb^m$ and let $(A)_j$ denote the $j$-th row of a matrix $A \in \Rbb^{n \times m}$.

We start with a lemma on the properties of matrix $S$ which we use in our proof of the theorem.
\begin{lem}\label{lemma:countSketch_properties}
The CountSketch matrix $S \in \Rbb^{t \times n}$ with sketch size $t \in \Nbb^+$ has the property that for any vector $y \in \mathbb{R}^n$ and index $i \in [t]$ and $j \neq i$, i) $\Ebb(Sy)_i=0$,  ii) $\Ebb[(Sy)_i(Sy)_j]=0$, and iii) $\Ebb(Sy)_i^2=\|y\|^2 / t$ where the expectation is taken over the randomness of the CountSketch matrix. 
\end{lem}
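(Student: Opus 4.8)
The plan is to unwind the definition of the CountSketch matrix $S$ implicit in Algorithm \ref{alg:SSR} and reduce each of the three claims to a one- or two-line moment computation. The matrix is determined by the hash functions $h:[n]\to[t]$ and $\sigma:[n]\to\{-1,1\}$ from line \ref{line:hash}, with entries $S_{kp}=\sigma(p)\,\mathbf{1}[h(p)=k]$, so that the $k$-th coordinate of $Sy$ gathers, with random signs, exactly the coordinates of $y$ hashed to bucket $k$:
\[
    (Sy)_k = \sum_{p=1}^n \sigma(p)\,\mathbf{1}[h(p)=k]\,y_p.
\]
The only three facts I would use are: $\sigma$ has $\Ebb[\sigma(p)]=0$ and is pairwise independent (so $\Ebb[\sigma(p)\sigma(q)]=0$ for $p\neq q$); each $h(p)$ is uniform on $[t]$ (so $\Pr[h(p)=k]=1/t$); and $h,\sigma$ are drawn independently of one another, which lets me factor any expectation of a product of $\sigma$-terms and indicator-terms.

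For (i), I would take expectations termwise in the display: independence gives $\Ebb[\sigma(p)\mathbf{1}[h(p)=i]]=\Ebb[\sigma(p)]\Pr[h(p)=i]=0$, so $\Ebb(Sy)_i=0$. For (iii), I would square the display to get the double sum $\sum_{p,q}\sigma(p)\sigma(q)\mathbf{1}[h(p)=i]\mathbf{1}[h(q)=i]\,y_py_q$ and split on $p=q$ versus $p\neq q$. The diagonal terms contribute $y_p^2\Pr[h(p)=i]=y_p^2/t$ (using $\sigma(p)^2=1$ and $\mathbf{1}[h(p)=i]^2=\mathbf{1}[h(p)=i]$), while the off-diagonal terms vanish because $\Ebb[\sigma(p)\sigma(q)]=0$; summing yields $\sum_p y_p^2/t=\|y\|^2/t$.

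For (ii), the same expansion applied to $(Sy)_i(Sy)_j$ with $i\neq j$ gives $\sum_{p,q}\Ebb[\sigma(p)\sigma(q)]\,\Ebb[\mathbf{1}[h(p)=i]\mathbf{1}[h(q)=j]]\,y_py_q$. Now both cancellation mechanisms from (iii) are available and either one suffices: off-diagonal $p\neq q$ terms vanish since $\Ebb[\sigma(p)\sigma(q)]=0$, and diagonal $p=q$ terms vanish since a single coordinate cannot land in two distinct buckets, i.e. $\mathbf{1}[h(p)=i]\mathbf{1}[h(p)=j]=0$ whenever $i\neq j$. Hence $\Ebb[(Sy)_i(Sy)_j]=0$.

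I do not expect a real obstacle; the entire content is the bookkeeping of which assumption is invoked where. The one point I would state carefully is that this lemma needs only the zero-mean and pairwise independence of $\sigma$ together with the uniform marginal of $h$, whereas the sharper concentration bound in Theorem \ref{cor:online_guarantee} will additionally call on the higher-order ($4$-wise $\sigma$, $2$-wise $h$) independence promised in line \ref{line:hash}; separating these keeps the moment lemma from appearing to require more than it does.
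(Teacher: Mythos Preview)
Your proof is correct and follows essentially the same approach as the paper: write $(Sy)_k=\sum_p \sigma(p)\mathbf{1}[h(p)=k]\,y_p$, expand the relevant products, and kill off-diagonal terms via $\Ebb[\sigma(p)\sigma(q)]=0$ and diagonal cross-bucket terms via $\mathbf{1}[h(p)=i]\mathbf{1}[h(p)=j]=0$. Your case analysis is in fact slightly cleaner than the paper's (which has some index typos and an awkward $2\sum_{k<l}$ reduction in part~(ii)), and your closing remark separating the pairwise assumptions needed here from the higher-order independence used later is a nice addition.
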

\begin{proof}
Let $S \in \Rbb^{t \times n}$ be the CountSketch matrix with sketch size $t$ resulting from the 2-wise independent hash function $h : [n] \rightarrow [t]$ and the 4-wise independent hash function $\sigma : [n] \times \{1, -1\}$ (see Algorithm \ref{alg:SSR} for descriptions of $h$ and $\sigma$). Let $y \in \Rbb^n$ be a vector and let $i, j \in [t]$ be indices such that $i \neq j$. 

To prove i), notice that $\expect{(Sy)_i} = \sum_{k = 1}^n \prob{h(k) = i} \cdot \expect{\sigma(k)} \cdot (y)_i = 0$ since $\expect{\sigma(k)} = 0$. 

To prove ii), we notice that by the definition of $h$, the random variable $\ind{h(k) = i}\ind{h(k) = j} = 0$ for any $i \neq j$. Then we can expand $\Ebb{(Sy)_i(Sy)_j}$ as follows:
\begin{align*}
    \expect{(Sy)_i(Sy)_j} &= 2\sum_{k = 2}^n \sum_{l = 1}^{k-1} \expect{\ind{h(k) = i}\ind{h(l) = j} \cdot \sigma(k)\sigma(l) \cdot (y)_i(y)_j} \\
    &=  2\sum_{k = 2}^n \sum_{l = 1}^{k-1} \expect{\sigma(k)\sigma(l)} \cdot \expect{\ind{h(k) = i}\ind{h(l) = j}(y)_i(y)_j } = 0 
\end{align*}
where the last equality follows from the fact that $\sigma$ is a 4-wise independent hash function and $i \neq j$.

Finally, we show property iii) as follows:
\begin{align*}
    \Ebb{(Sy)_i^2} &= \sum_{k = 1}^n \expect{\ind{h(k) = i}(y)_i^2} + 2\sum_{k = 2}^n \sum_{l = 1}^{k - 1} \expect{\sigma(k)\sigma(l)} \expect{\ind{h(k) = i}\ind{h(l) = i}(y)_i^2} \\ 
    &= \sum_{k = 1}^n \expect{\ind{h(k) = i}(y)_i^2} + 0 = \sum_{k = 1}^n \prob{\ind{h(k) = i}}(y)_i^2
    = \sum_{k = 1}^n \frac{1}{t} \cdot (y)_i^2 = \frac{\|y\|_2^2}{t}.
\end{align*}
In the second equality we used the fact that $\sigma$ is a 4-wise independent hash function.
\end{proof}
We are now ready to prove Theorem \ref{cor:online_guarantee}.
\begin{proof}[Proof of Theorem \ref{cor:online_guarantee}]Fix an arbitrary $\theta \in \Rbb^m$ and let $y_1, \dots, y_\tau \in \Rbb^n$ be the vectors such that $y_i = \sqrt{\alpha_i}W_i\theta$. We then have that: 
\begin{align*}
        \expect {\left\| \sum_{k = 1}^\tau S_k y_k \right\|^2 - \left(\sum_{k = 1}^\tau \|S_k y_k\|^2 \right)}
        &= \expect { 2\sum_{k = 2}^\tau \sum_{k<l} \sum_{i = 1}^t (S_l y_l)_i (S_k y_k)_i } \\
        &= 2\sum_{k<l} \sum_{i = 1}^t \Ebb{(S_l y_l)_i} \Ebb{(S_k y_k)_i} = 0.
\end{align*}
In the second equality we use the fact that $S_k$ and $S_l$ are independent random matrices for $l \neq k$ and property i) from Lemma \ref{lemma:countSketch_properties}. Next, we bound the variance:
\begingroup
\allowdisplaybreaks
\begin{align}
    & \var{\left\| \sum_{k = 1}^\tau S_k y_k \right\|^2 - \left(\sum_{k = 1}^\tau \|S_k y_k\|^2 \right)}
    = \expect { \left(  \left\| \sum_{k = 1}^\tau S_k y_k \right\|^2 - \left(\sum_{k = 1}^\tau \|S_k y_k\|^2 \right) \right)^2 } \notag  \\
    &= \expect { \left(2\sum_{k = 2}^\tau \sum_{l<k} \sum_{i = 1}^t (S_k y_k)_i (S_l y_l )_i \right)^2 } \notag \\
    &= 4 \underbrace{\expect {\sum_{k<l} \sum_i (S_k y_k )_i^2 (S_l y_l )_i^2}}_{z_1} + 4 \underbrace{\expect {\sum_{k < l} \sum_{i \neq j} (S_k y_k )_i (S_k y_k )_j (S_l y_l )_i (S_l y_l )_j}}_{z_2} \notag \\
    &+ 4 \underbrace{\expect {\sum_{\substack{k < l,  r < s \\ \text{ s.t. } \{k \neq r \text{ or } l \neq s \}} } \sum_{i,j} (S_k y_k )_i (S_l y_l )_i (S_r y_r )_j (S_s y_s )_j }}_{z_3}. \notag
\end{align}
We first argue that $z_3 = 0$; since either $k \neq r$ or $l \neq s$, without loss of generality let $k < r$. As a result, $k < l$ and $k<r<s$. We then have that $\Ebb[(S_k y_k )_i (S_l y_l )_i (S_r y_r )_j (S_s y_s )_j] = \Ebb[(S_k y_k )_i] \cdot \Ebb[(S_l y_l )_i (S_r y_r )_j (S_s y_s )_j] = 0$ since $\Ebb[(S_k y_k )_i] = 0$ using property i) from Lemma \ref{lemma:countSketch_properties}. 
Next we argue that $z_2 = 0$; since $\Ebb[(S_k y_k )_i (S_k y_k )_j] = 0$ using property ii) from Lemma \ref{lemma:countSketch_properties}, for any $k \in [\tau]$ and any $i \neq j$ we have that $\Ebb[(S_k y_k )_i (S_k y_k )_j (S_l y_l )_i (S_l y_l )_j] = \Ebb[(S_k y_k )_i (S_k y_k )_j] \cdot \Ebb[(S_l y_l )_i (S_l y_l )_j] = 0$. 

Finally, we can bound $z_1$ and hence the variance:
\begin{align*}
    z_1 &= 4 \sum_{k<l} \sum_i \Ebb(S_k y_k )_i^2 \Ebb(S_l y_l )_i^2  
    = 4 \sum_{k<l} t \frac{\|y_k \|^2}{t} \cdot \frac{\|y_l \|^2}{t} \\
    &= \frac{4}{t} \sum_{k<l} \|y_k \|^2 \cdot \|y_l \|^2 \leq \frac{2}{t} \left(\sum_{k = 1}^\tau \|y_k \|^2 \right)^2.
\end{align*}
\endgroup
In the second equality we use the property iii) from Lemma \ref{lemma:countSketch_properties} for $\Ebb(S_k y_k )_i^2$ and $\Ebb(S_l y_l )_i^2$.

The theorem follows by applying Chebyshev's inequality on $\| \sum_{k = 1}^\tau S_k y_k \|_2^2 - \sum_{k = 1}^\tau \|S_k y_k\|_2^2$ and the definition of $y_1, \dots, y_\tau$. 
\end{proof}

\section{Further Details on Experiments}
\subsection{Synthetic Experiments}\label{sec:append-toy}

\paragraph{Setups.} For the regularization matrix induced by EWC and MAS, we compare the performance of various approaches to approximating the importance matrix including: 
\begin{enumerate}[label=(\roman*),noitemsep,topsep=0mm,parsep=0mm,partopsep=0mm,leftmargin=*]
    \item a diagonal approximation;
    \item a block-diagonal approximation, with a sequence of $ 50 \times 50 $ non-zero blocks along the diagonal;
    \item Sketched SR with $t = 50$;
    \item a rank-1 SVD;
    \item a low rank (rank $ = 50 $) SVD;
    \item the full importance matrix.
\end{enumerate}
We use a small multi-layer perceptron with the architecture $ 2 \rightarrow 128 \rightarrow 64 \rightarrow 2 $ and with ReLU activation function. For all algorithms, we use ADAM as the optimizer with learning rate $ 10^{-3} $. The minibatch size is 100, and we use the importance parameter $ \lambda = 10^{3}$ and the online learning parameter $ \alpha = 0.5 $ for all experiments. We repeat all toy example experiments 5 times with different fixed seeds, and report the average accuracy on all tasks. These toy example experiments are conducted on one RTX2080Ti GPU. 

\paragraph{Online Learning in Synthetic Experiments.} For non-sketched approaches, the regularizer \eqref{eqn:regularizer_normVersion} in SR methods is approximated by
\begin{align}\label{eqn:OnlineLearningToy}
    \widetilde{\Rcal}(\theta) := \half (\theta - \theta^*_A)^\top \widetilde{\Omega} (\theta - \theta^*_A)
\end{align}
where $ \widetilde{\Omega} $ approximates the importance matrix $ \Omega $. The online extension of Sketched SR (see Section \ref{sec:SSR}) applies moving average on the sketch $ \widetilde{W} $, and cannot be directly applied on the regularizer in Equation \ref{eqn:OnlineLearningToy}. To ensure faithful comparison, moving average is applied on the importance matrix $ \widetilde{\Omega} $ in synthetic experiments according to Equation \eqref{eqn:omega_online_learning}.

\begin{figure}
    \centering
    \subfigure[EWC]{\includegraphics[width=0.45\linewidth]{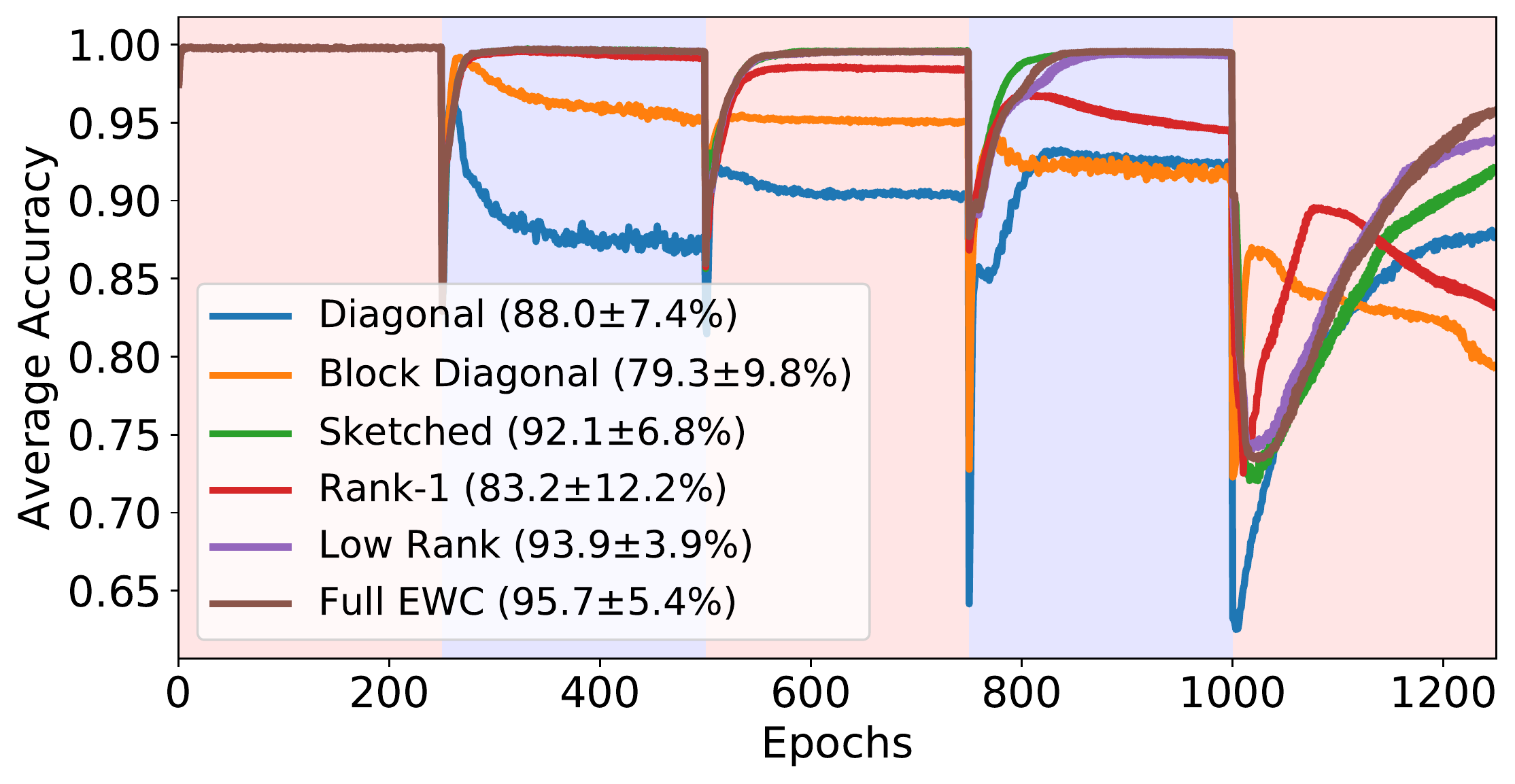}\label{fig:toy_ewc_accuracy}} 
    \subfigure[MAS]{\includegraphics[width=0.45\linewidth]{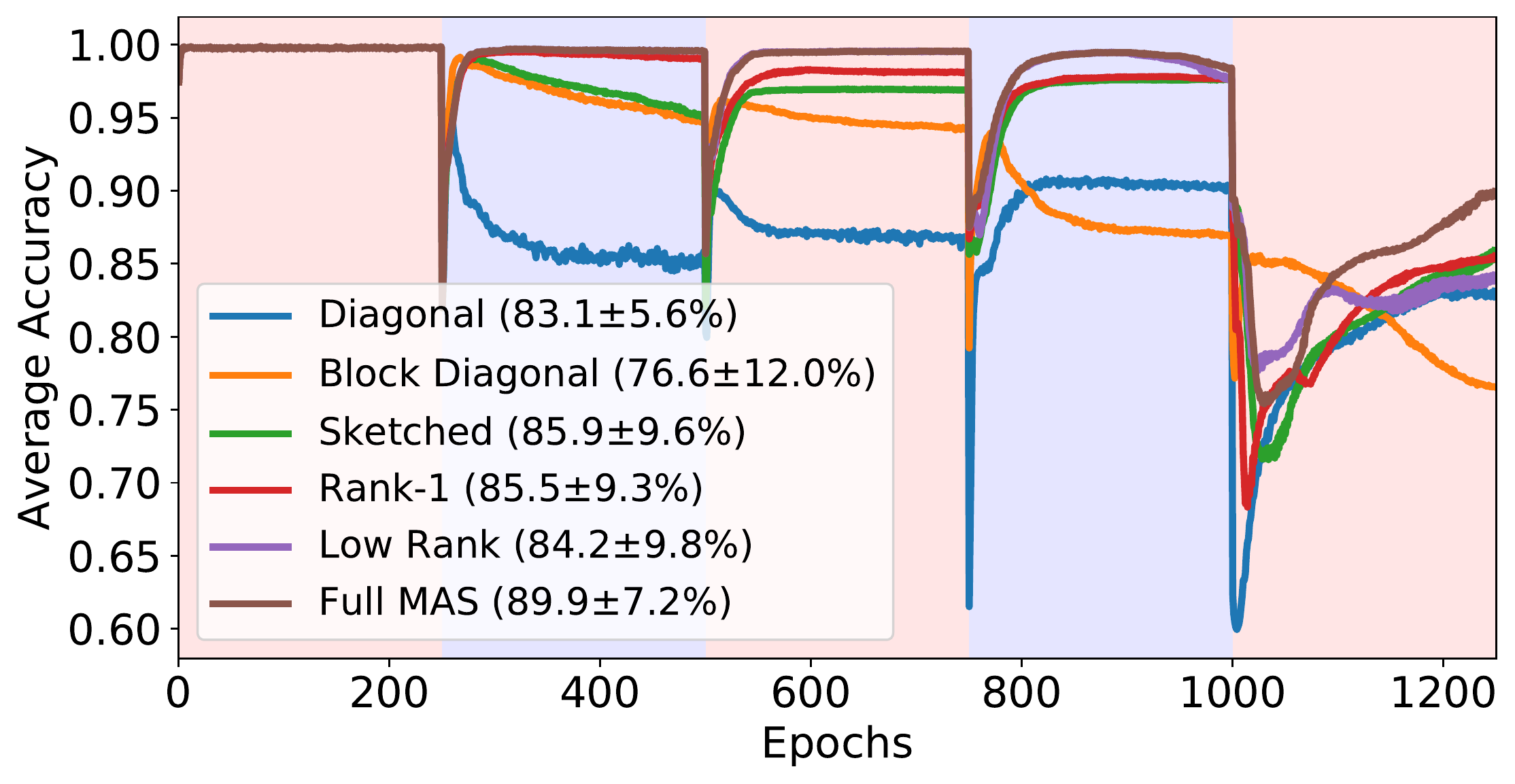}\label{fig:toy_mas_accuracy}} \\
    \caption{
    The average accuracy across previously learned tasks after each epoch of training, for all variants of SR on a synthetic 2D dataset from \citet{pan2020continual}.
    The number in the legend shows the average accuracy across all tasks (after learning the final task) for the compared algorithms.
    The plots suggest that: sketched SR outperforms both diagonal SR and block-diagonal SR for overcoming catastrophic forgetting; while rank-1 SR, low-rank SR and full SR are as good as or better than sketched SR in some cases, they requires significantly more computation which is not affordable in practice.}
    \label{fig:toy_accuracy}
\end{figure}

\begin{figure}[ht]
    \centering
    \subfigure{\includegraphics[width=0.15\linewidth]{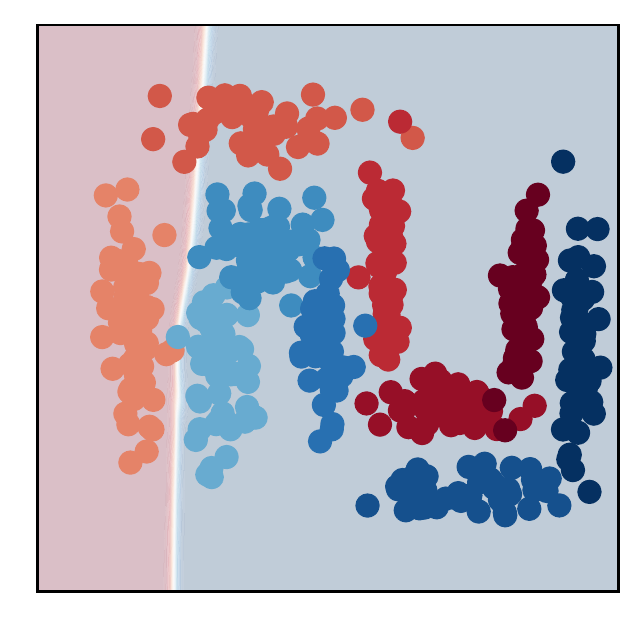}} 
    \subfigure{\includegraphics[width=0.15\linewidth]{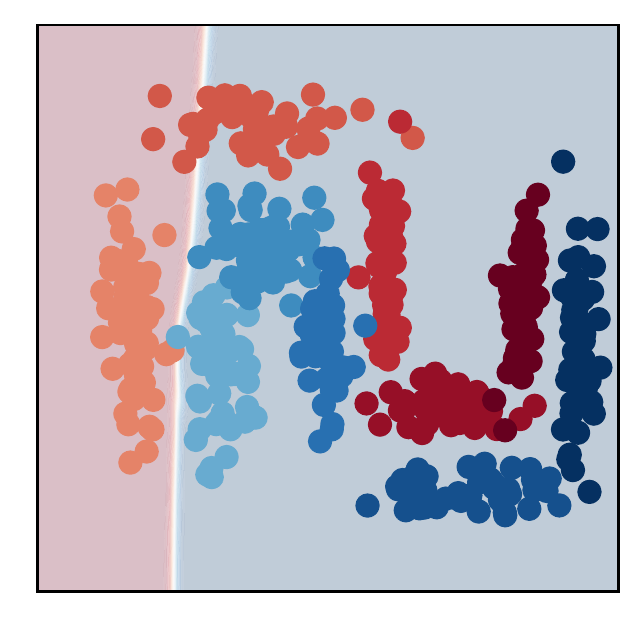}} 
    \subfigure{\includegraphics[width=0.15\linewidth]{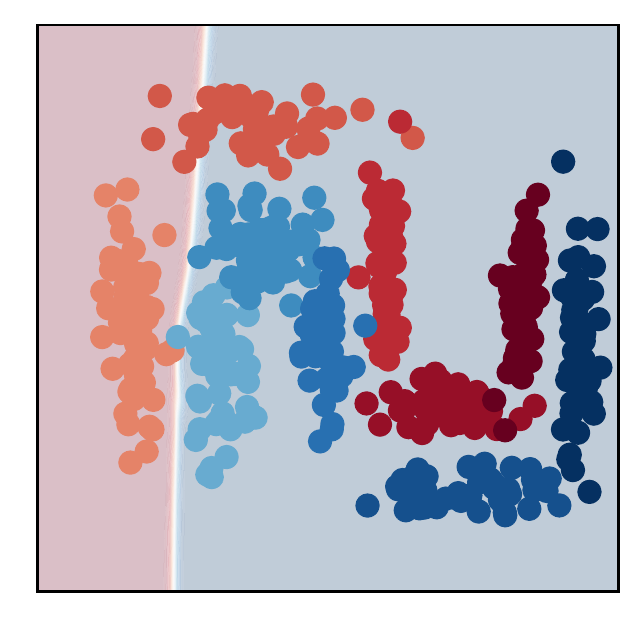}} 
    \subfigure{\includegraphics[width=0.15\linewidth]{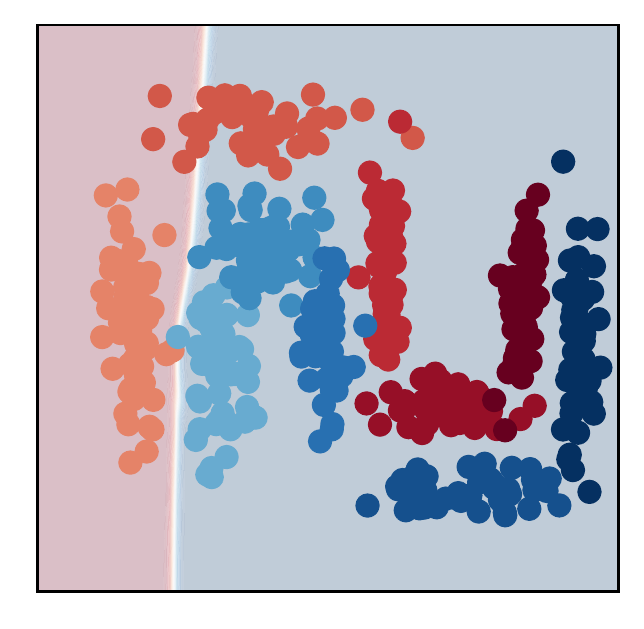}} 
    \subfigure{\includegraphics[width=0.15\linewidth]{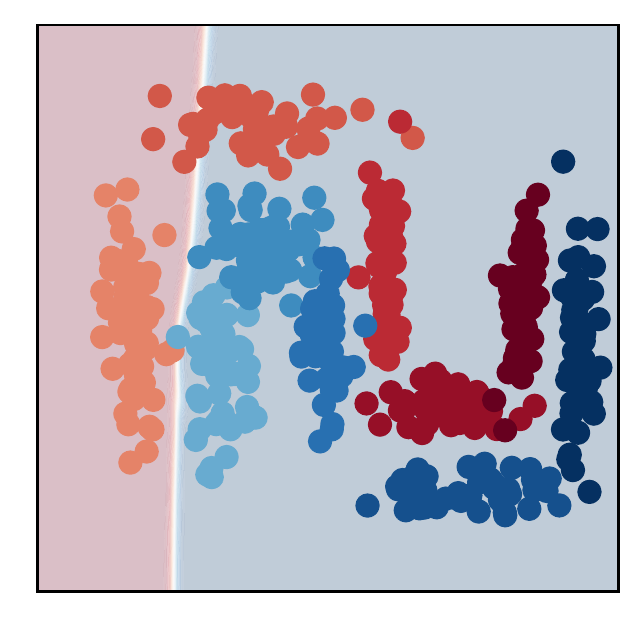}} 
    \subfigure{\includegraphics[width=0.15\linewidth]{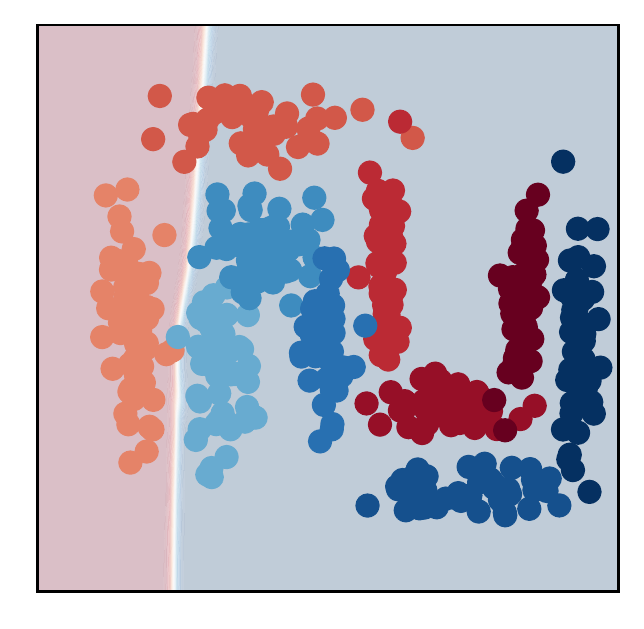}} \\
    \vspace{-1.25\baselineskip}
    
    \subfigure{\includegraphics[width=0.15\linewidth]{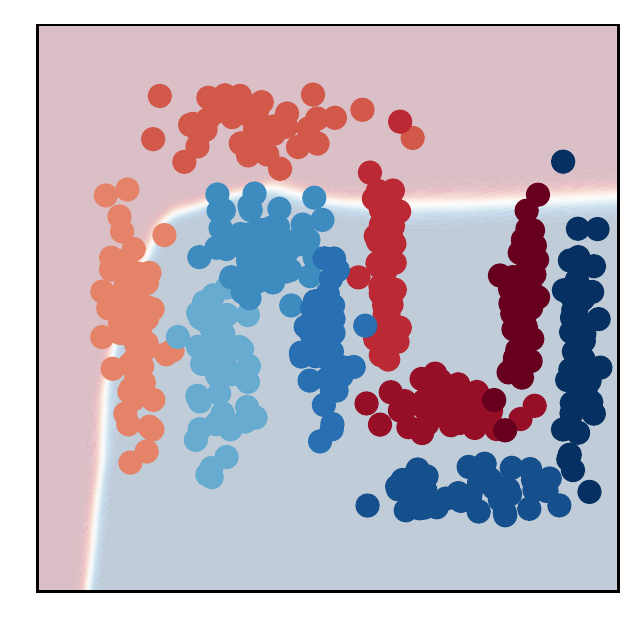}} 
    \subfigure{\includegraphics[width=0.15\linewidth]{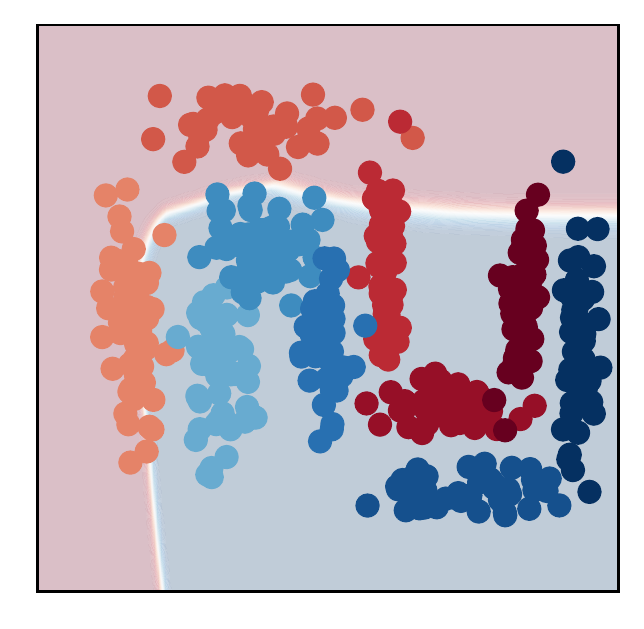}} 
    \subfigure{\includegraphics[width=0.15\linewidth]{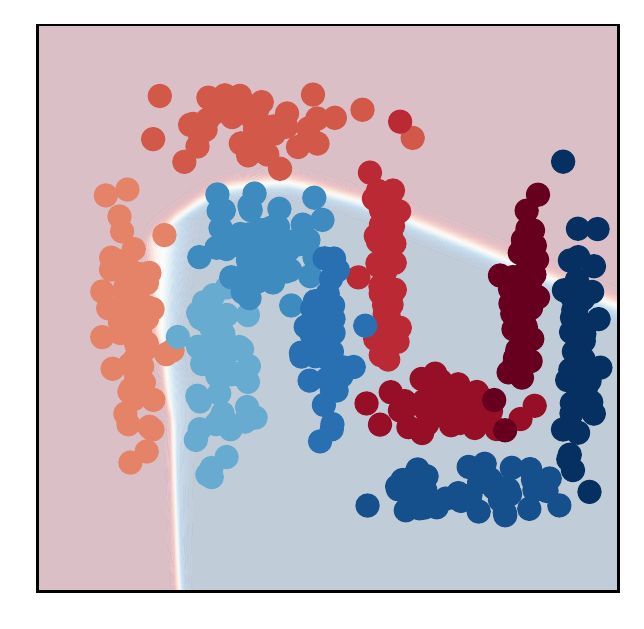}} 
    \subfigure{\includegraphics[width=0.15\linewidth]{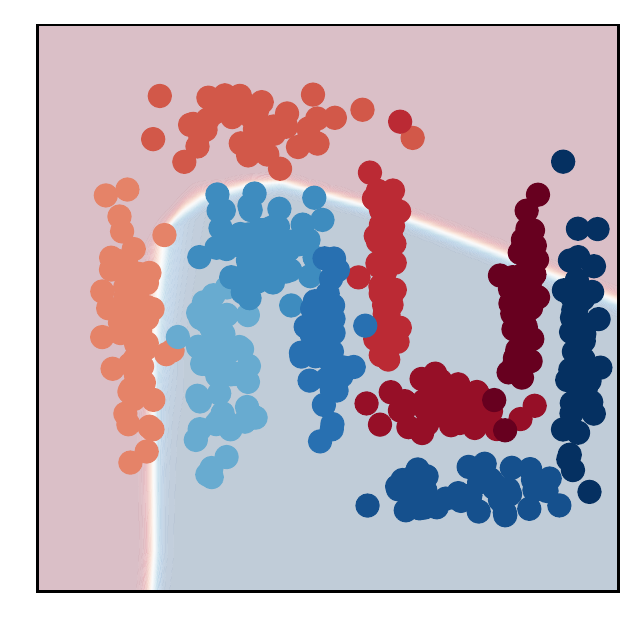}} 
    \subfigure{\includegraphics[width=0.15\linewidth]{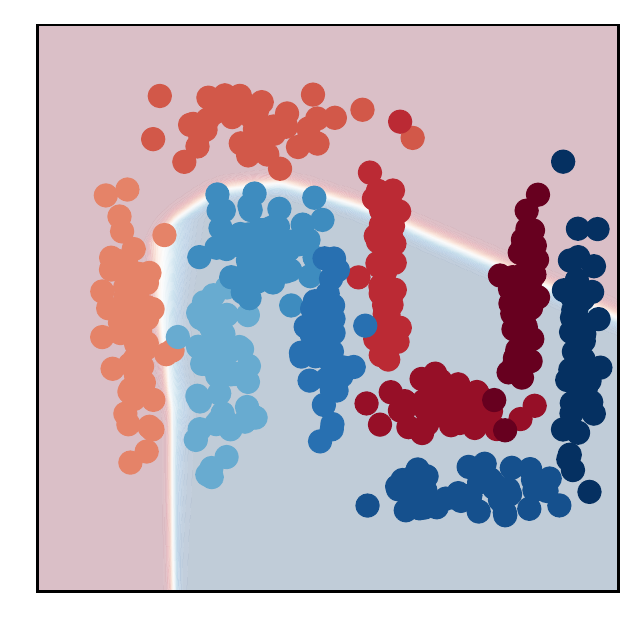}} 
    \subfigure{\includegraphics[width=0.15\linewidth]{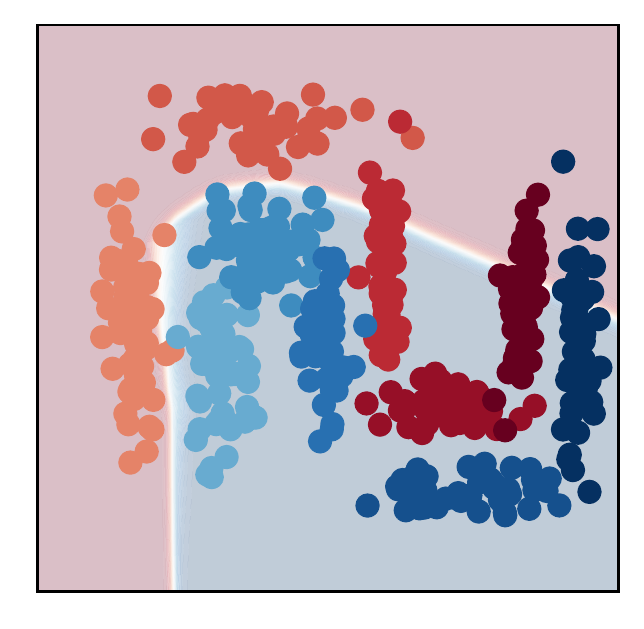}} \\
    \vspace{-1.25\baselineskip}
    
    \subfigure{\includegraphics[width=0.15\linewidth]{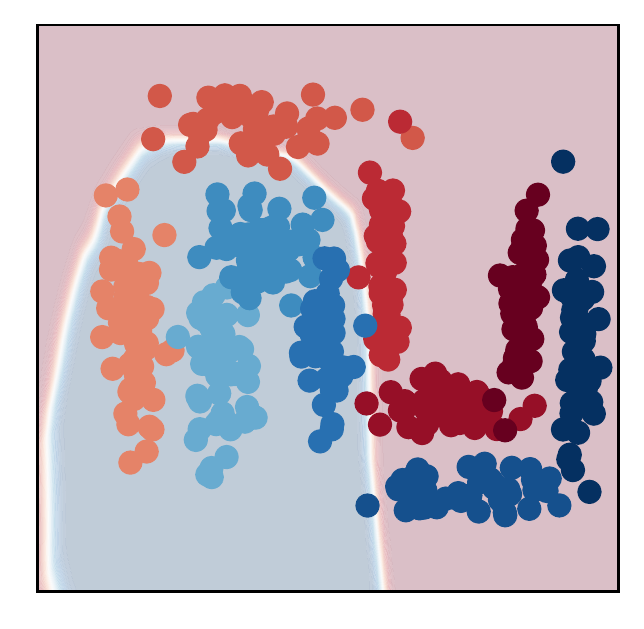}} 
    \subfigure{\includegraphics[width=0.15\linewidth]{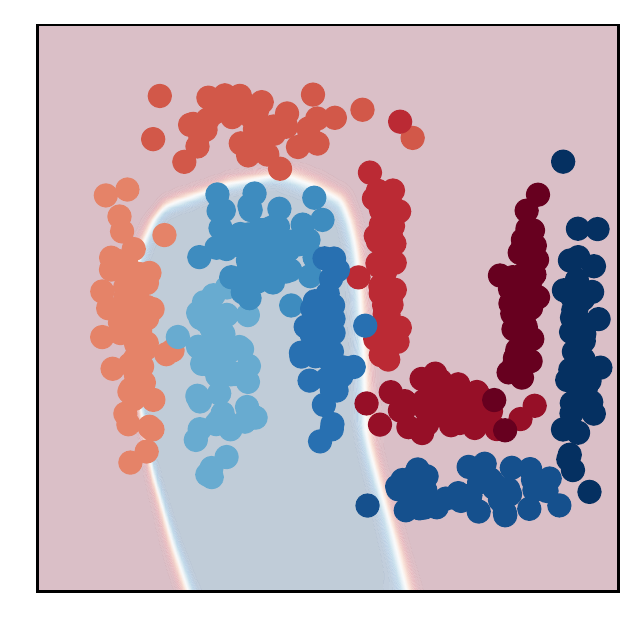}} 
    \subfigure{\includegraphics[width=0.15\linewidth]{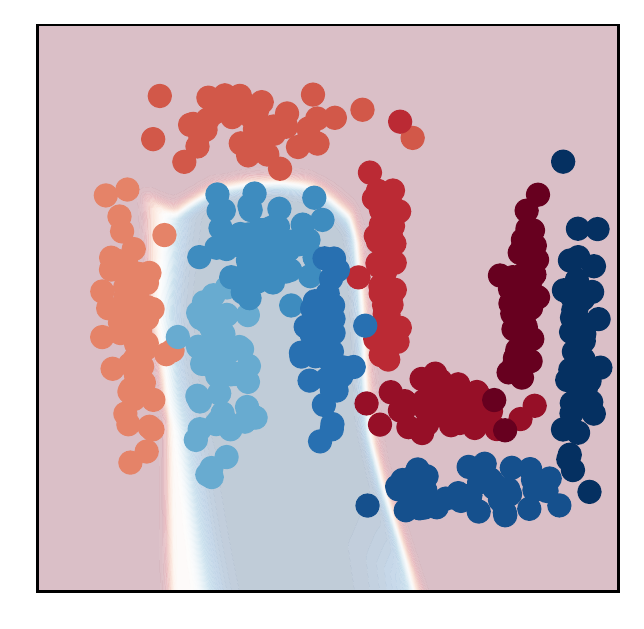}} 
    \subfigure{\includegraphics[width=0.15\linewidth]{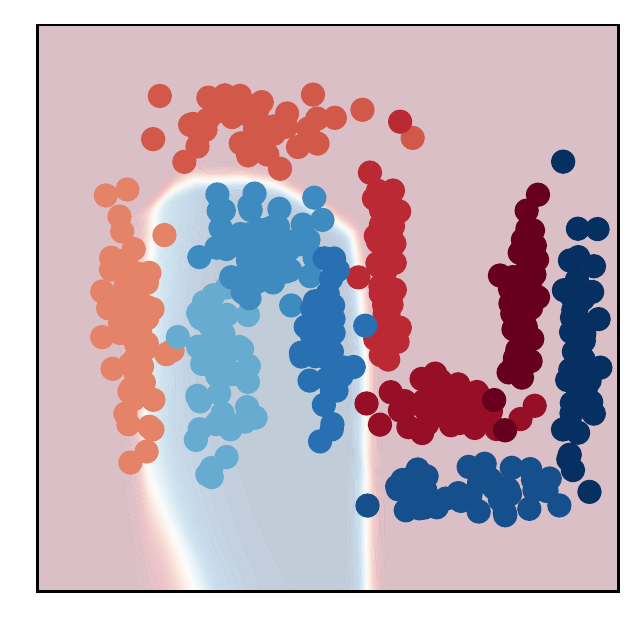}} 
    \subfigure{\includegraphics[width=0.15\linewidth]{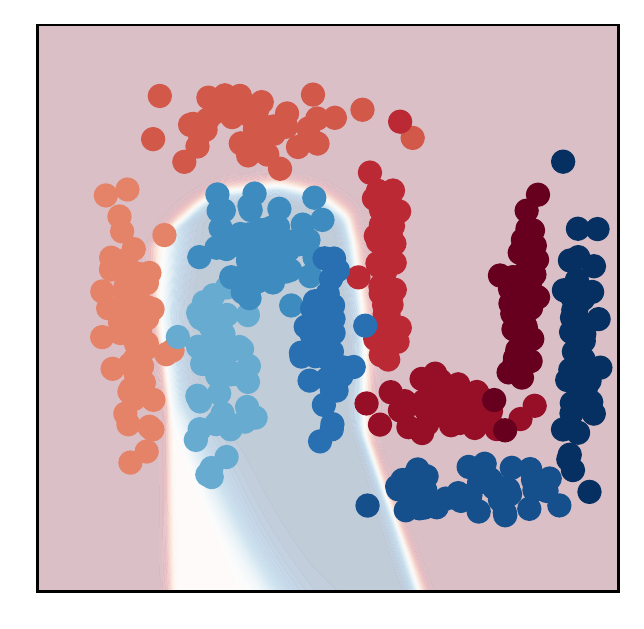}} 
    \subfigure{\includegraphics[width=0.15\linewidth]{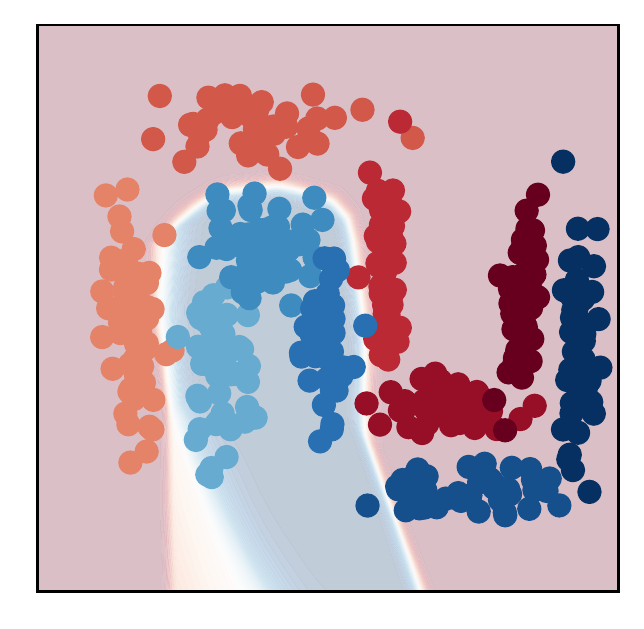}} \\
    \vspace{-1.25\baselineskip}
    
    \subfigure{\includegraphics[width=0.15\linewidth]{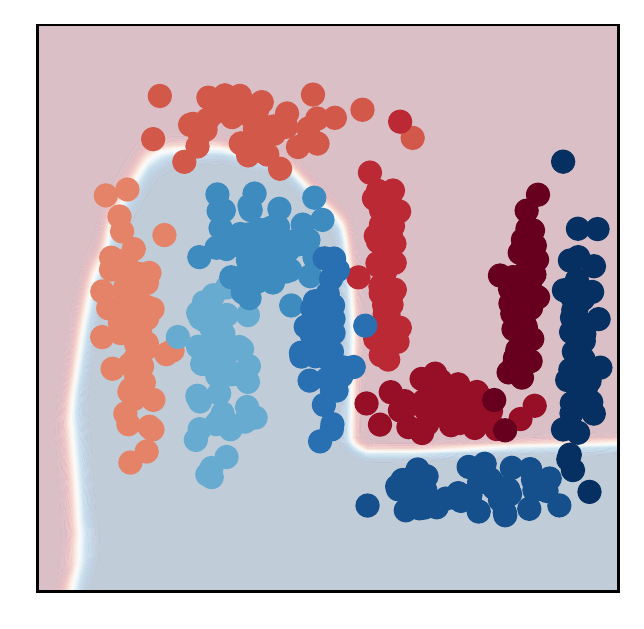}} 
    \subfigure{\includegraphics[width=0.15\linewidth]{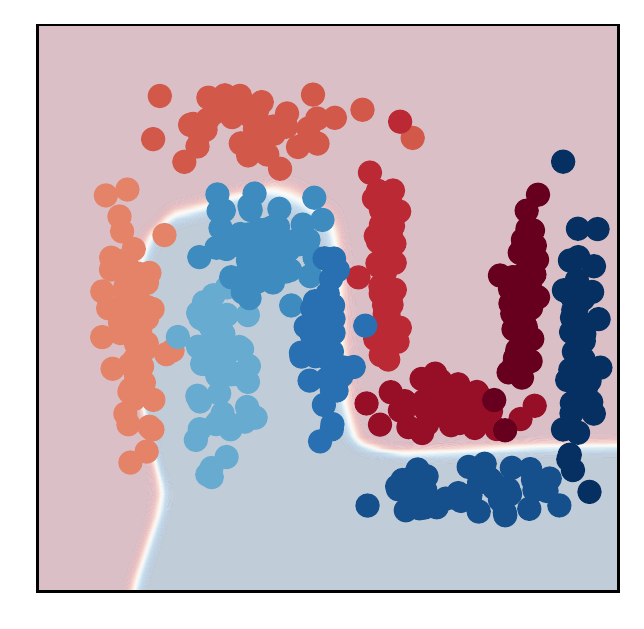}} 
    \subfigure{\includegraphics[width=0.15\linewidth]{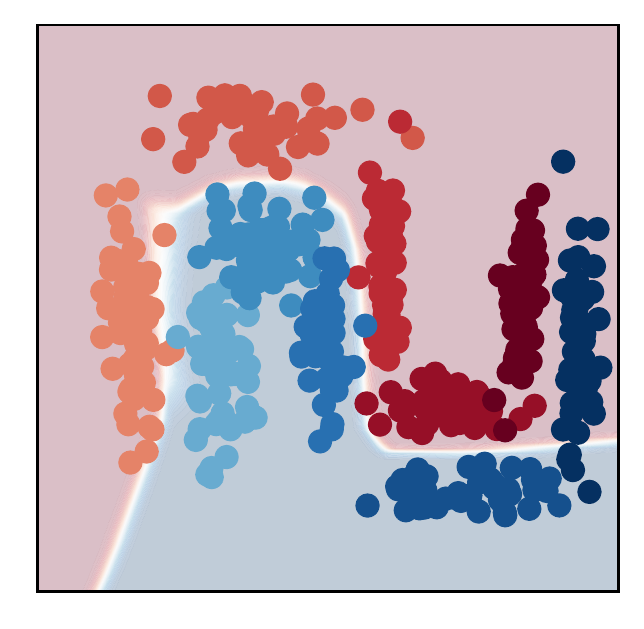}} 
    \subfigure{\includegraphics[width=0.15\linewidth]{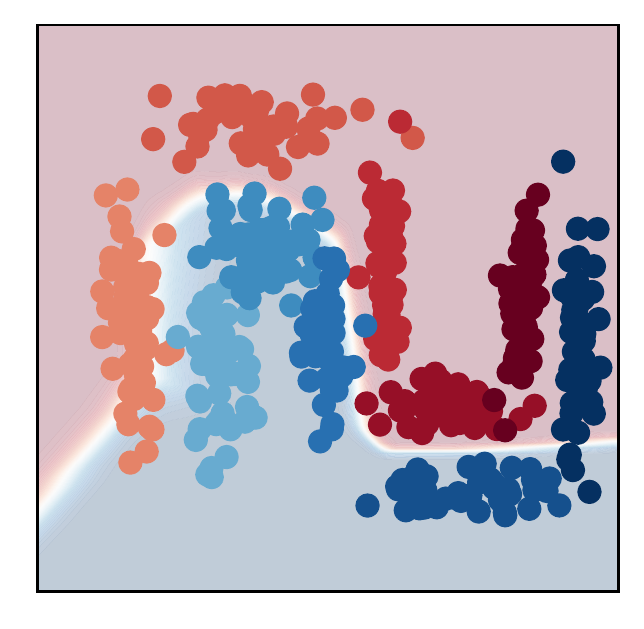}} 
    \subfigure{\includegraphics[width=0.15\linewidth]{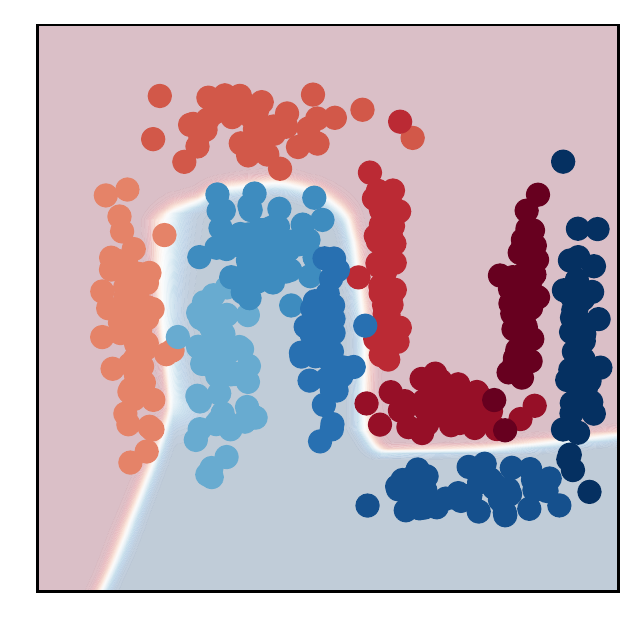}} 
    \subfigure{\includegraphics[width=0.15\linewidth]{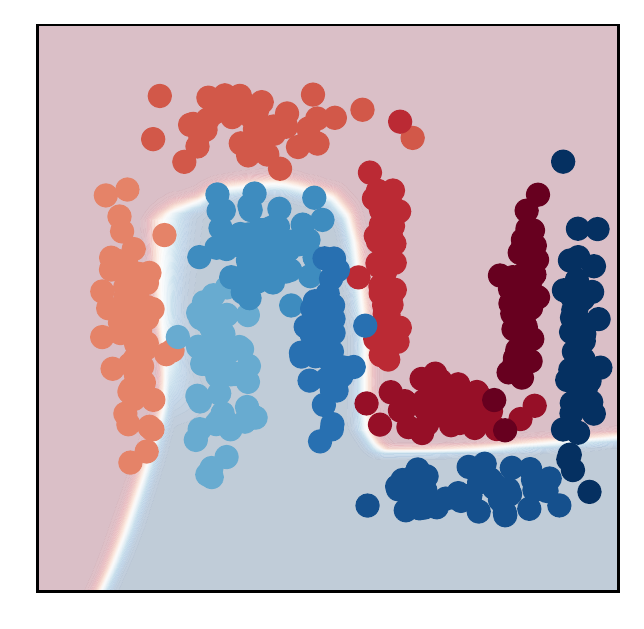}} \\
    \vspace{-1.25\baselineskip}
    
    \addtocounter{subfigure}{-24}
    \subfigure[Diagonal]{\includegraphics[width=0.15\linewidth]{figure/toy_ewc_task_5.pdf}} 
    \subfigure[Block-Diag.]{\includegraphics[width=0.15\linewidth]{figure/toy_block_diagonal_ewc_task_5.pdf}} 
    \subfigure[Sketched]{\includegraphics[width=0.15\linewidth]{figure/toy_sketch_ewc_task_5.pdf}} 
    \subfigure[Rank-1]{\includegraphics[width=0.15\linewidth]{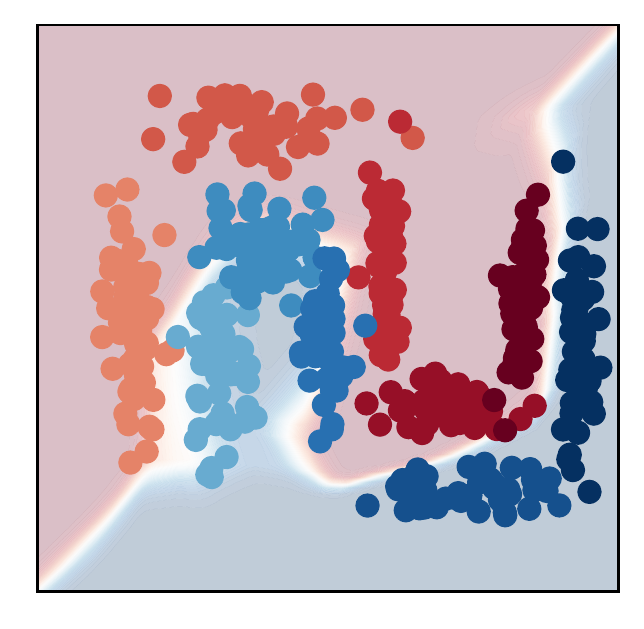}} 
    \subfigure[Low-Rank]{\includegraphics[width=0.15\linewidth]{figure/toy_low_rank_ewc_task_5.pdf}} 
    \subfigure[Full EWC]{\includegraphics[width=0.15\linewidth]{figure/toy_full_ewc_task_5.pdf}} \\
    \caption{
    Variants of EWC \citep{kirkpatrick2017overcoming} on a synthetic 2D binary classification dataset from \citet{pan2020continual}. 
    The two classes are represented by the different shades of red/blue, learnt sequentially using the variants of EWC.
    Each column of the figure shows the decision boundaries found by the algorithm (labelled below) after training each task, from the first task at the top to the last at the bottom.
    The plots suggest that: sketched EWC outperforms both diagonal EWC and block-diagonal EWC for overcoming catastrophic forgetting; while rank-1 EWC, low-rank EWC and full EWC are as good as or better than sketched EWC in some cases, they requires significantly more computation which is not affordable in practice.
    The observations are consistent with those in Figure \ref{fig:toy_accuracy}.}
    \label{fig:toy_ewc_illustration_full}
\end{figure}

\paragraph{Performance of the Compared Algorithms.}
Figure \ref{fig:toy_accuracy} reports the average accuracy across previously learned tasks after each epoch of training for the compared methods. The figures consistently show that sketched SR methods outperform their diagonal counterparts, in both EWC and MAS regimes, in terms of overcoming catastrophic forgetting. Figure \ref{fig:toy_ewc_illustration_full} and \ref{fig:toy_mas_illustration_full} further explore this, where decision boundary of the compared algorithms is plotted after training each task.
According to Figure \ref{fig:toy_ewc_illustration_full} and \ref{fig:toy_mas_illustration_full}, sketched SR methods forget less about the first task than diagonal SR, which directly demonstrate its advantage for overcoming catastrophic forgetting.
This corresponds to our observation in the permuted MNIST experiments.

\subsection{Permuted-MNIST}\label{sec:append-mnist}

\paragraph{Setup.}
We use a multi-layer perceptron with the architecture $ 784 \rightarrow 1024 \rightarrow 512 \rightarrow 256 \rightarrow 10 $ with ReLU activation function and no bias to learn this classification task. We use ADAM as the optimizer with learning rate $ 10^{-4} $ and the online learning parameter $ \alpha = 0.25 $ for all algorithms. The minibatch size is $100$. For each algorithm, a grid search on the regularization coefficient $ \lambda \in \{10^{i} \mid i = 2,3,\dots,6\} $ is used to determine the optimal hyperparameter for the reported results. 
We uses 50 sketches in Sketched SR to approximate the full importance matrix. All permuted-MNIST experiments are repeated $5$ times with different fixed seeds, and we report average accuracy on all tasks. We run permuted-MNIST experiments on a Tesla K80.

\paragraph{Effects of the Sketch Size per Task.}
We further study the effects of the size of the sketch $t$ (See Equation \ref{eqn:sketched_regularizer}) on the performance of sketched SR on each task.
The results are shown in Figure \ref{fig:perm_mnist_task_compare_2}.
From the plot we see a clear trade-off between the size of the sketch and the accuracy on later tasks, where the accuracy consistently increases as the size of sketches grows. This directly shows that increasing of the size of sketches improves learning capability for new tasks (known in the literature as \textit{intransigence}) with only little trade-off in catastrophic forgetting, with the expense of more computation resources. 

\begin{figure}[t]
    \centering
    \subfigure{\includegraphics[width=0.15\linewidth]{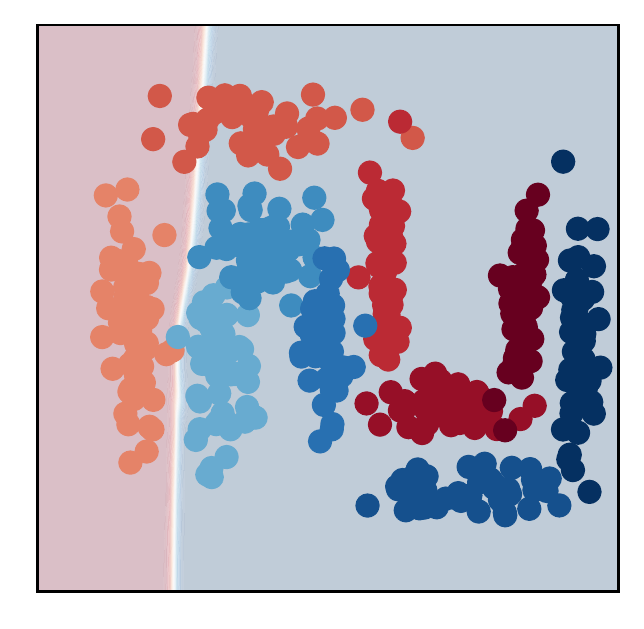}} 
    \subfigure{\includegraphics[width=0.15\linewidth]{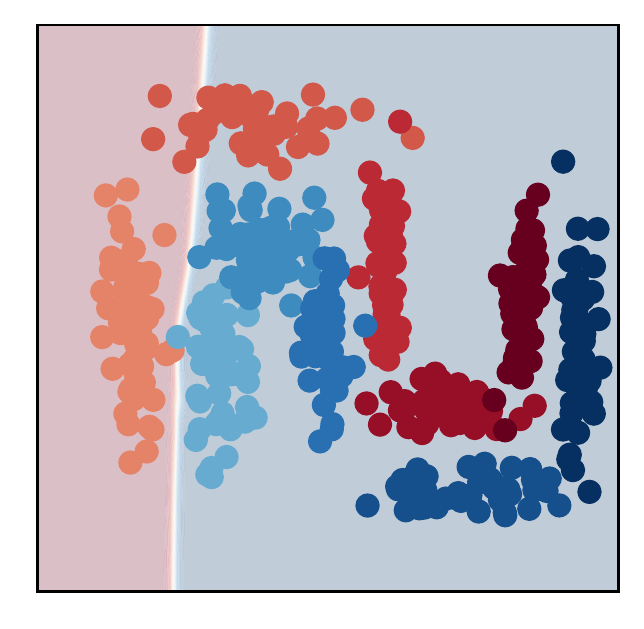}} 
    \subfigure{\includegraphics[width=0.15\linewidth]{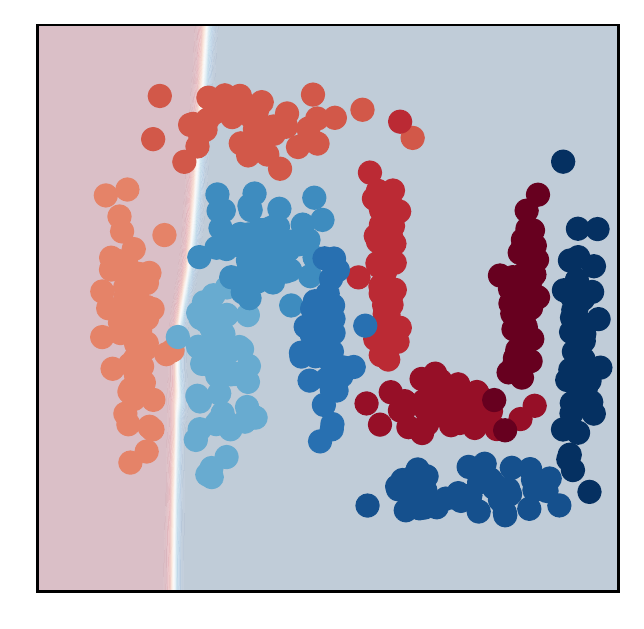}} 
    \subfigure{\includegraphics[width=0.15\linewidth]{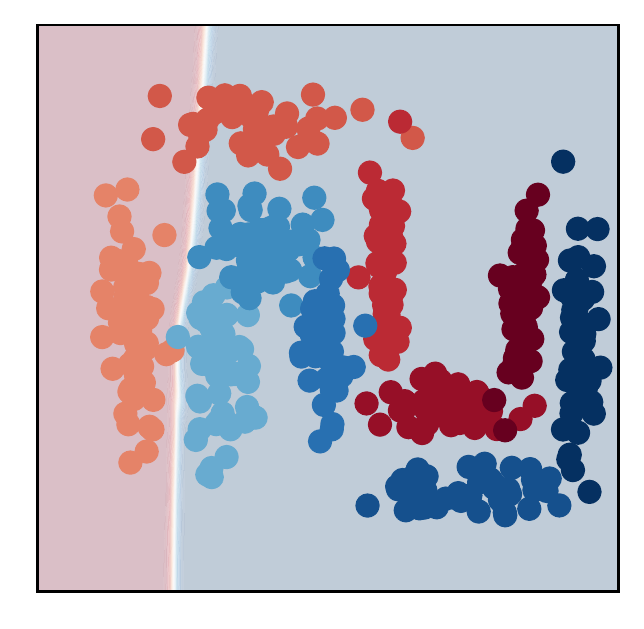}} 
    \subfigure{\includegraphics[width=0.15\linewidth]{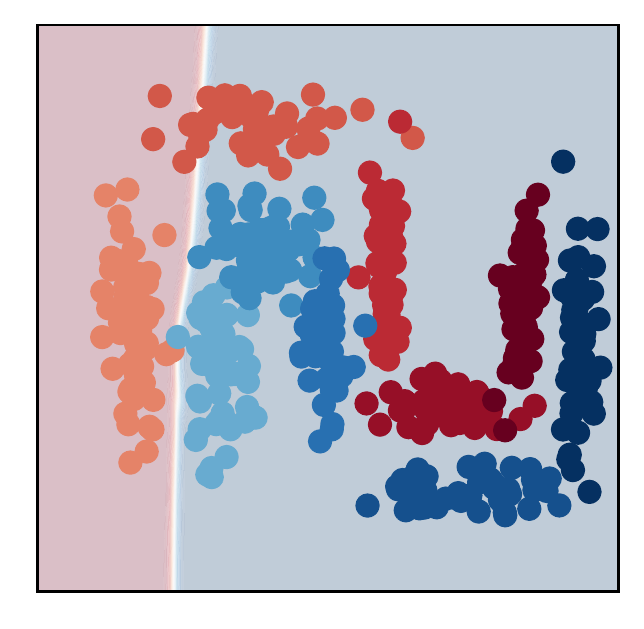}} 
    \subfigure{\includegraphics[width=0.15\linewidth]{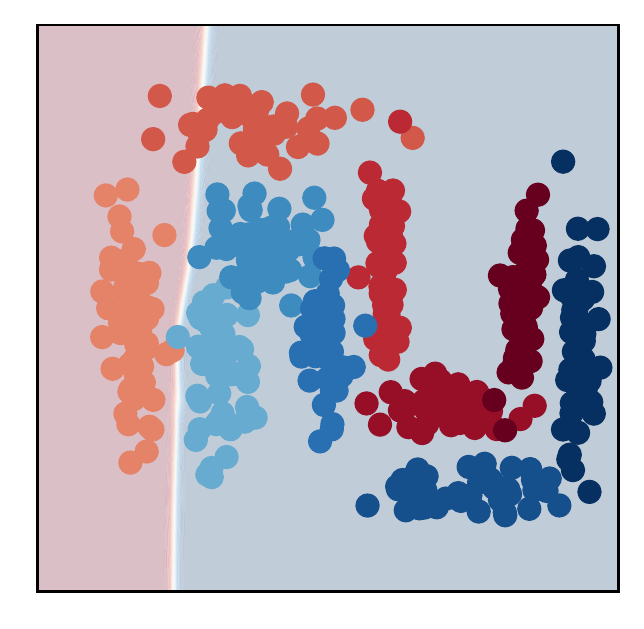}} \\
    \vspace{-1.25\baselineskip}
    
    \subfigure{\includegraphics[width=0.15\linewidth]{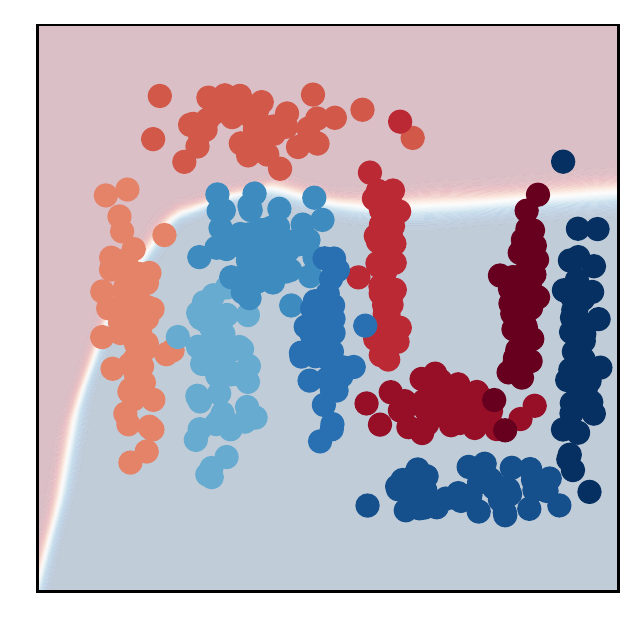}} 
    \subfigure{\includegraphics[width=0.15\linewidth]{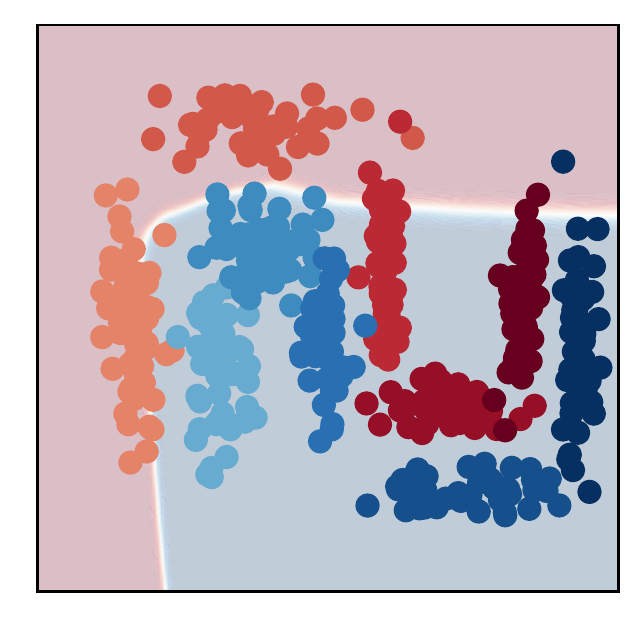}} 
    \subfigure{\includegraphics[width=0.15\linewidth]{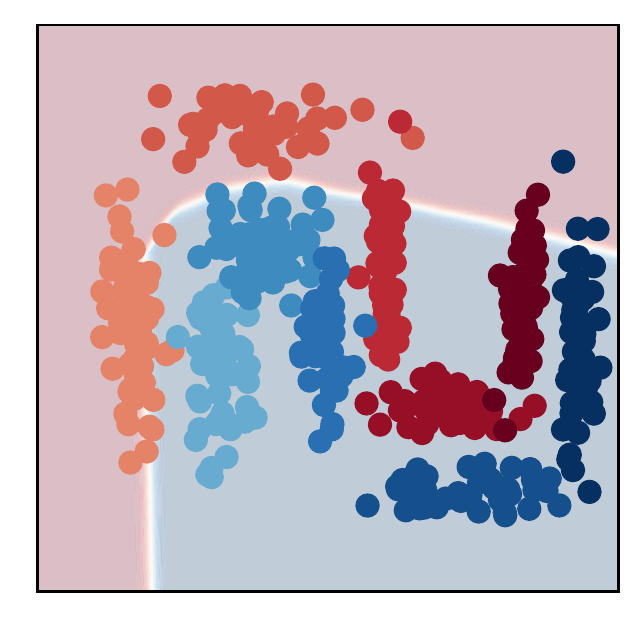}} 
    \subfigure{\includegraphics[width=0.15\linewidth]{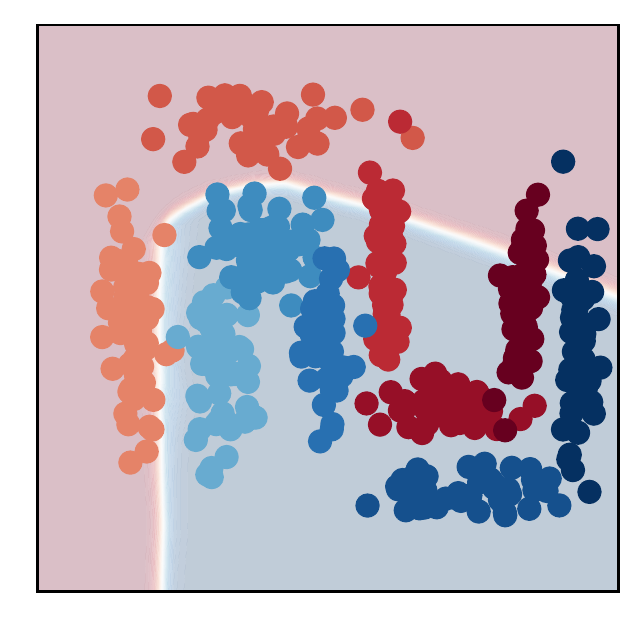}} 
    \subfigure{\includegraphics[width=0.15\linewidth]{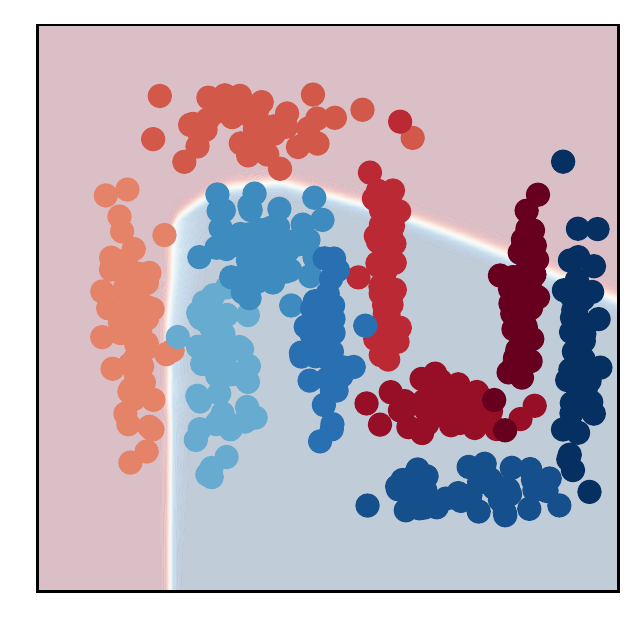}} 
    \subfigure{\includegraphics[width=0.15\linewidth]{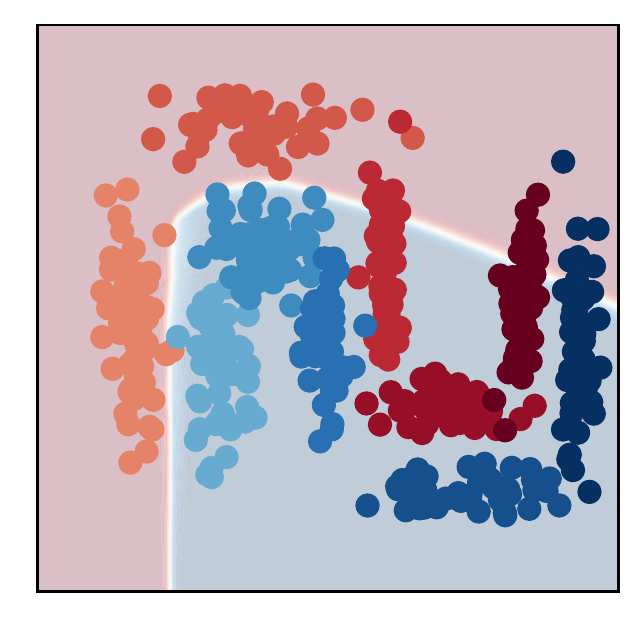}} \\
    \vspace{-1.25\baselineskip}
    
    \subfigure{\includegraphics[width=0.15\linewidth]{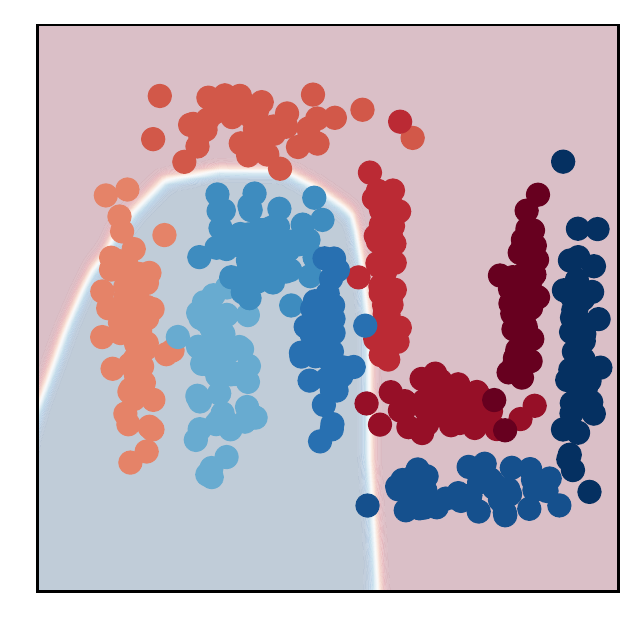}} 
    \subfigure{\includegraphics[width=0.15\linewidth]{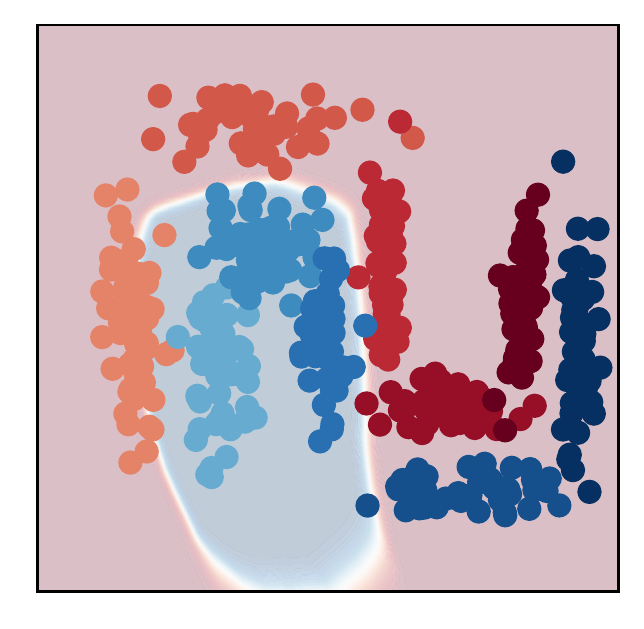}} 
    \subfigure{\includegraphics[width=0.15\linewidth]{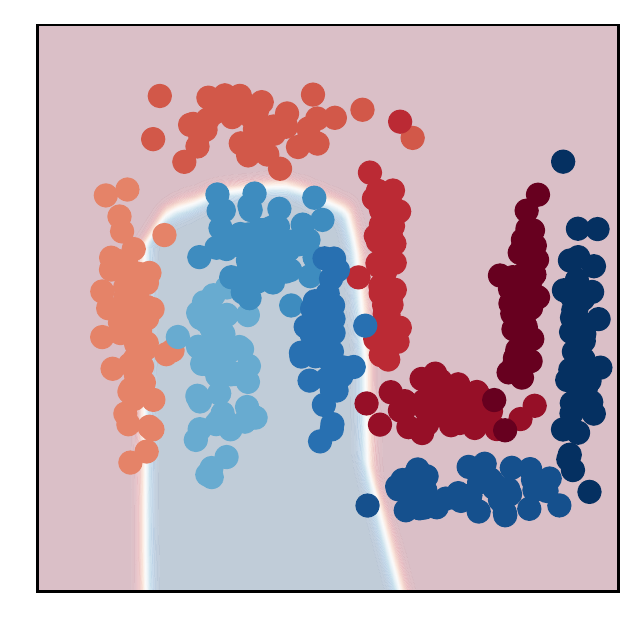}} 
    \subfigure{\includegraphics[width=0.15\linewidth]{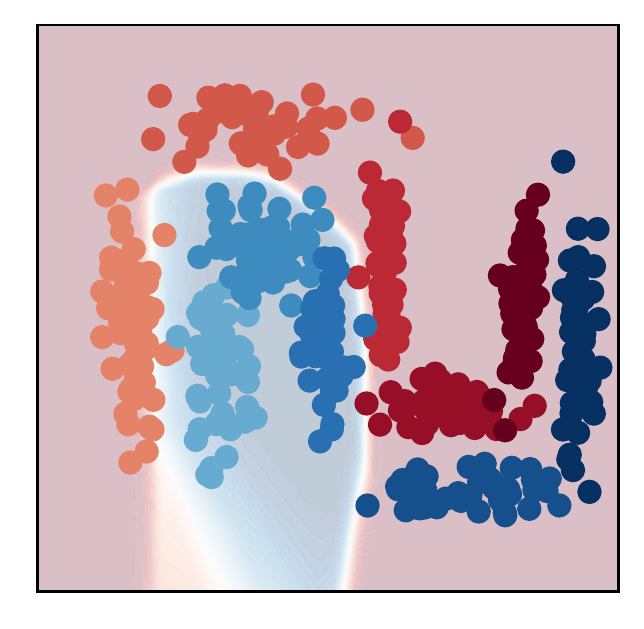}} 
    \subfigure{\includegraphics[width=0.15\linewidth]{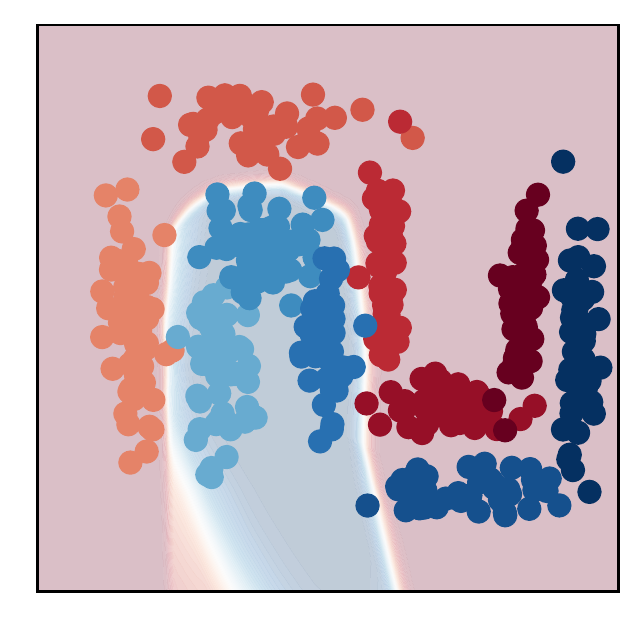}} 
    \subfigure{\includegraphics[width=0.15\linewidth]{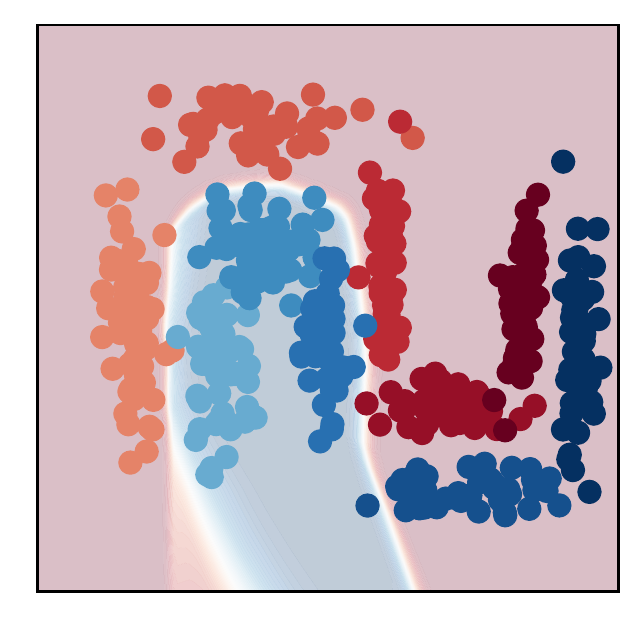}} \\
    \vspace{-1.25\baselineskip}
    
    \subfigure{\includegraphics[width=0.15\linewidth]{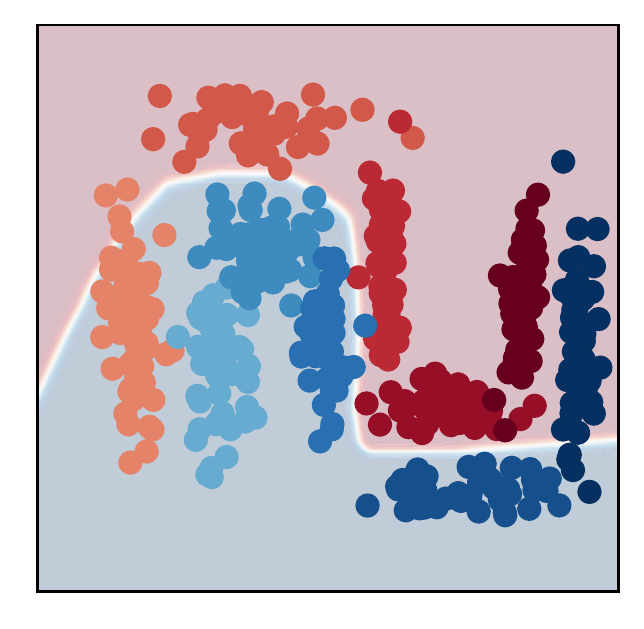}} 
    \subfigure{\includegraphics[width=0.15\linewidth]{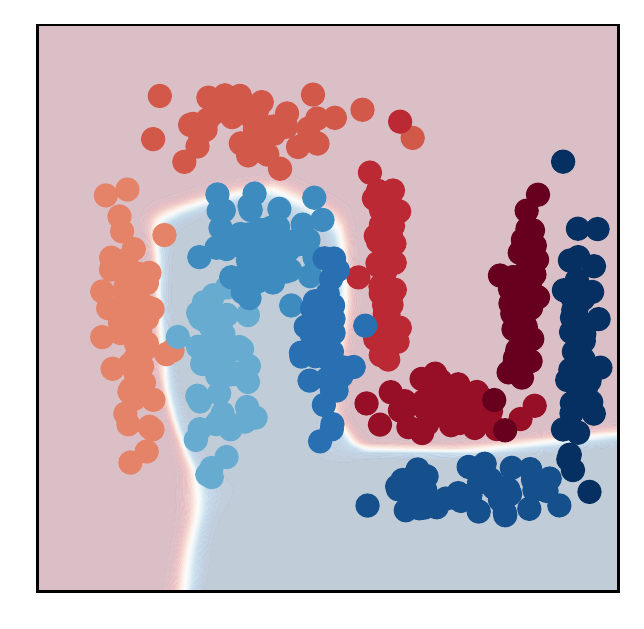}} 
    \subfigure{\includegraphics[width=0.15\linewidth]{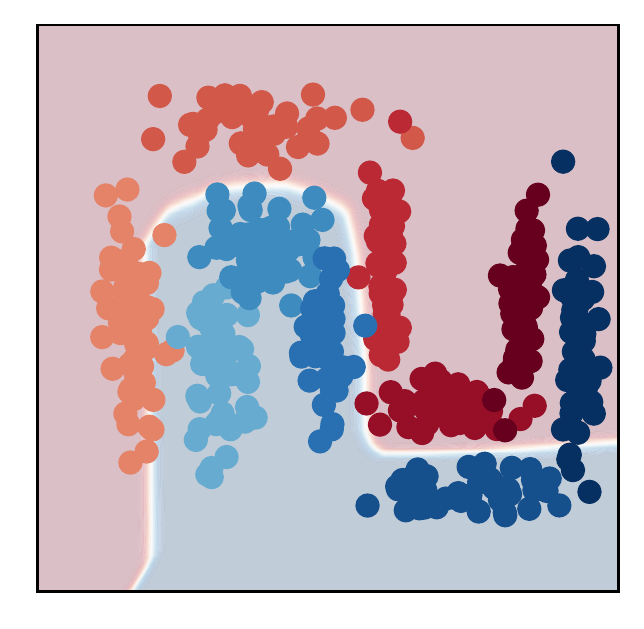}} 
    \subfigure{\includegraphics[width=0.15\linewidth]{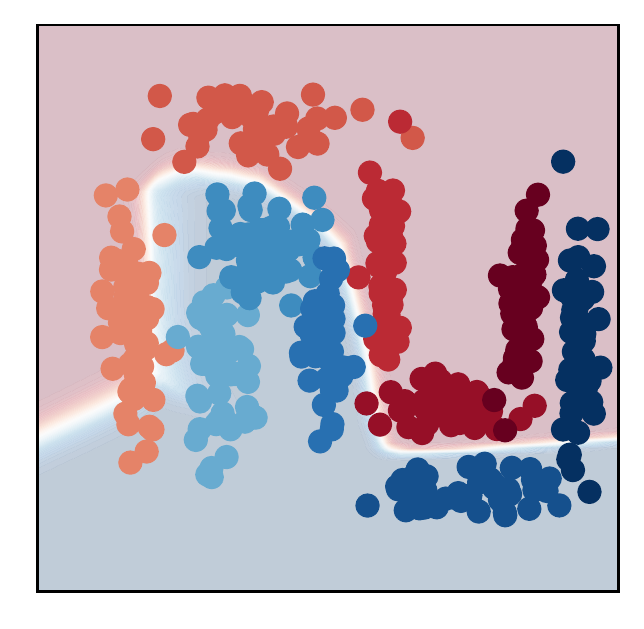}} 
    \subfigure{\includegraphics[width=0.15\linewidth]{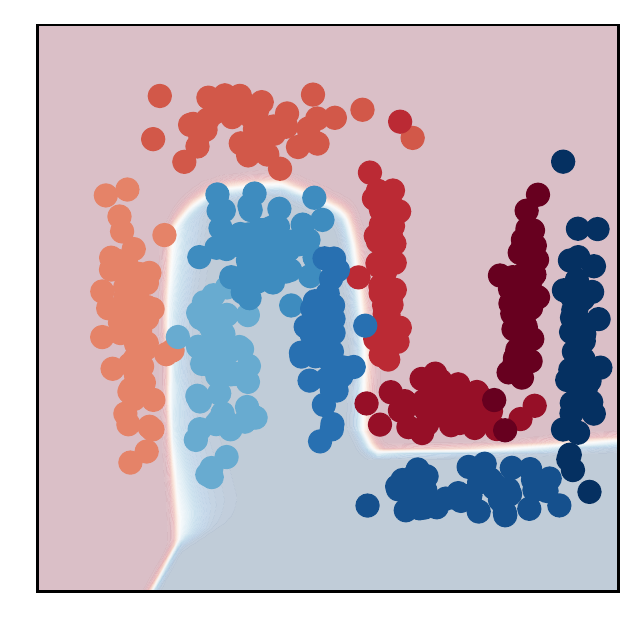}} 
    \subfigure{\includegraphics[width=0.15\linewidth]{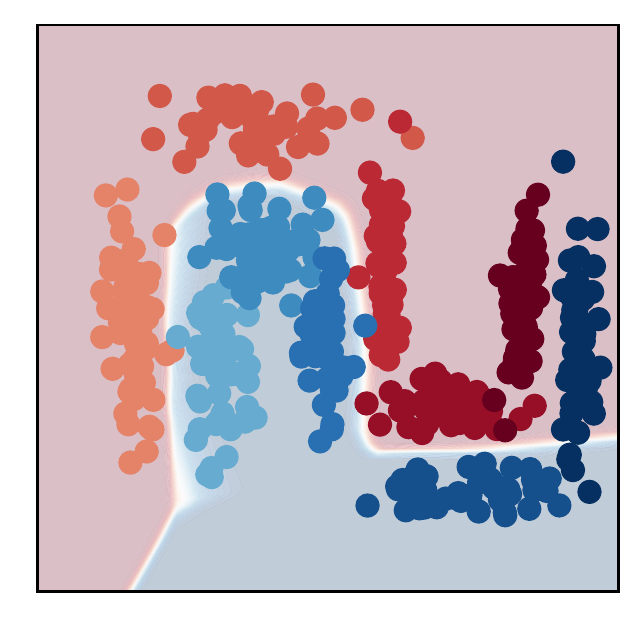}} \\
    \vspace{-1.25\baselineskip}
    
    \addtocounter{subfigure}{-24}
    \subfigure[Diagonal]{\includegraphics[width=0.15\linewidth]{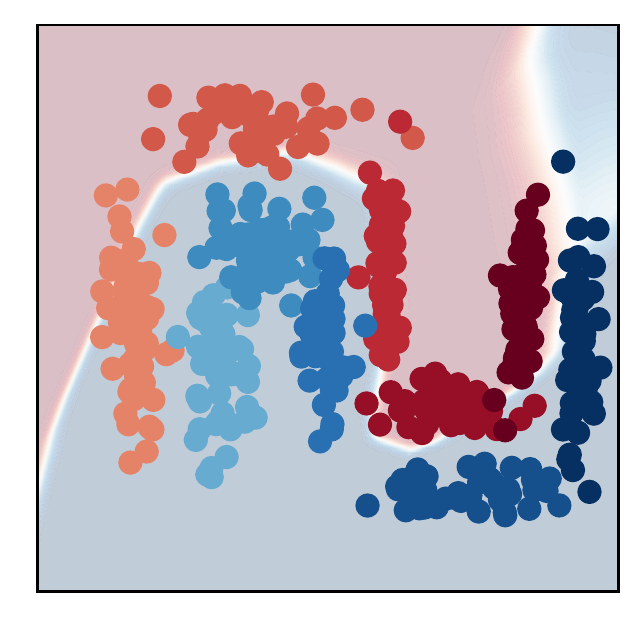}} 
    \subfigure[Block-Diag.]{\includegraphics[width=0.15\linewidth]{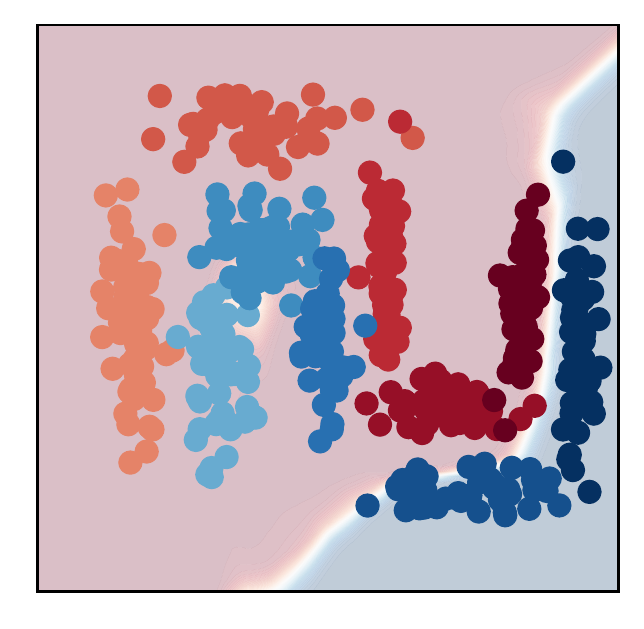}}
    \subfigure[Sketched]{\includegraphics[width=0.15\linewidth]{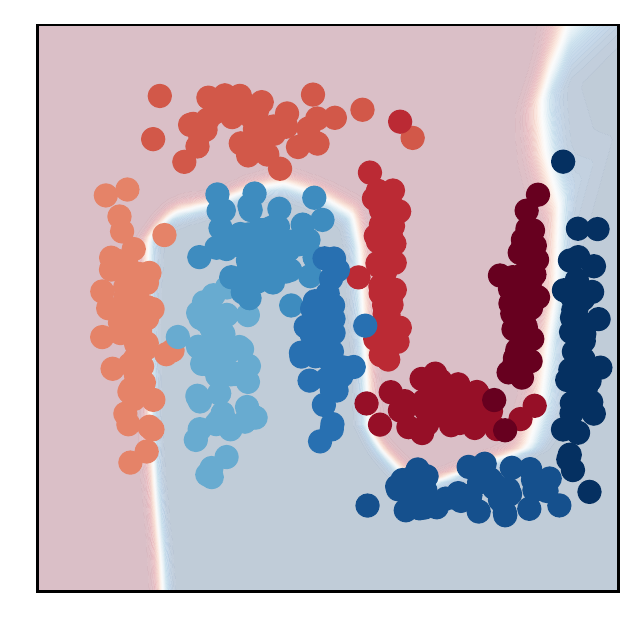}} 
    \subfigure[Rank-1]{\includegraphics[width=0.15\linewidth]{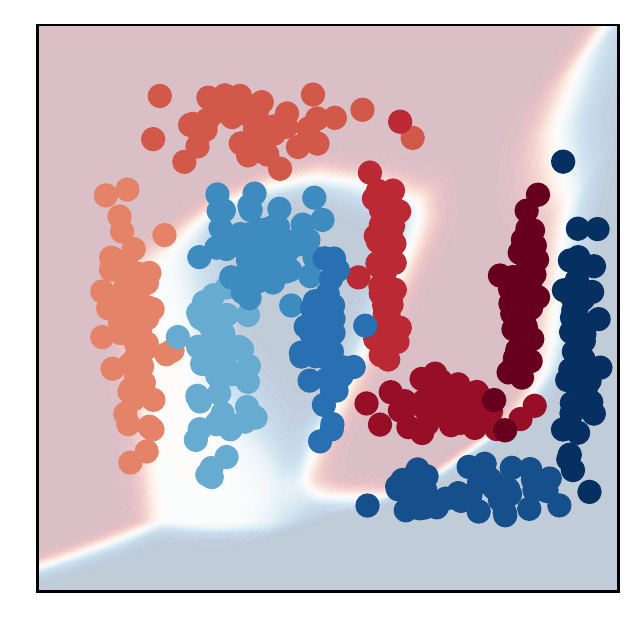}} 
    \subfigure[Low-Rank]{\includegraphics[width=0.15\linewidth]{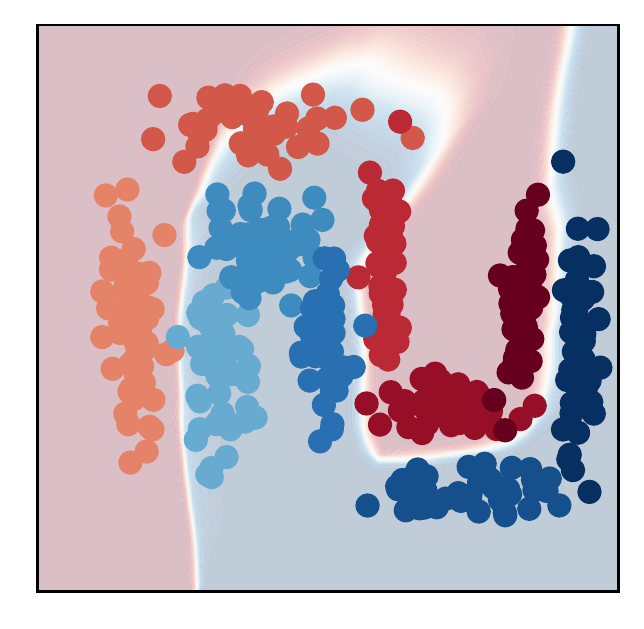}} 
    \subfigure[Full MAS]{\includegraphics[width=0.15\linewidth]{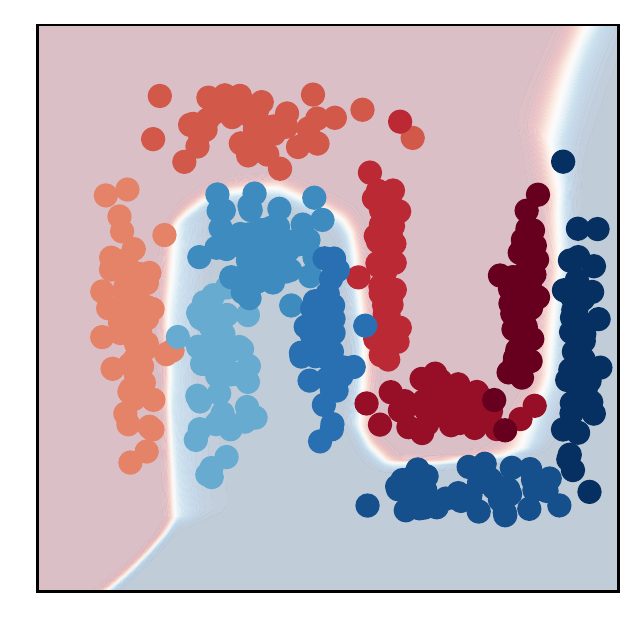}} \\
    \caption{Variants of MAS \citep{aljundi2018memory} on a synthetic 2D binary classification dataset from \citet{pan2020continual}. 
    The two classes are represented by the different shades of red/blue, learnt sequentially using the variants of MAS.
    Each column of the figure shows the decision boundaries found by the algorithm (labelled below) after training each task, from the first task at the top to the last at the bottom.
    The plots suggest that: sketched MAS outperforms both diagonal MAS and block-diagonal MAS for overcoming catastrophic forgetting; while rank-1 MAS, low-rank MAS and full MAS are as good as or better than sketched MAS in some cases, they requires significantly more computation which is not affordable in practice.
    The observations are consistent with those in Figure \ref{fig:toy_accuracy} and \ref{fig:toy_ewc_illustration_full}.}
    \label{fig:toy_mas_illustration_full}
\end{figure}

\begin{figure}[H]
    \centering
    \subfigure[Sketched EWC]{\includegraphics[width=0.3\linewidth]{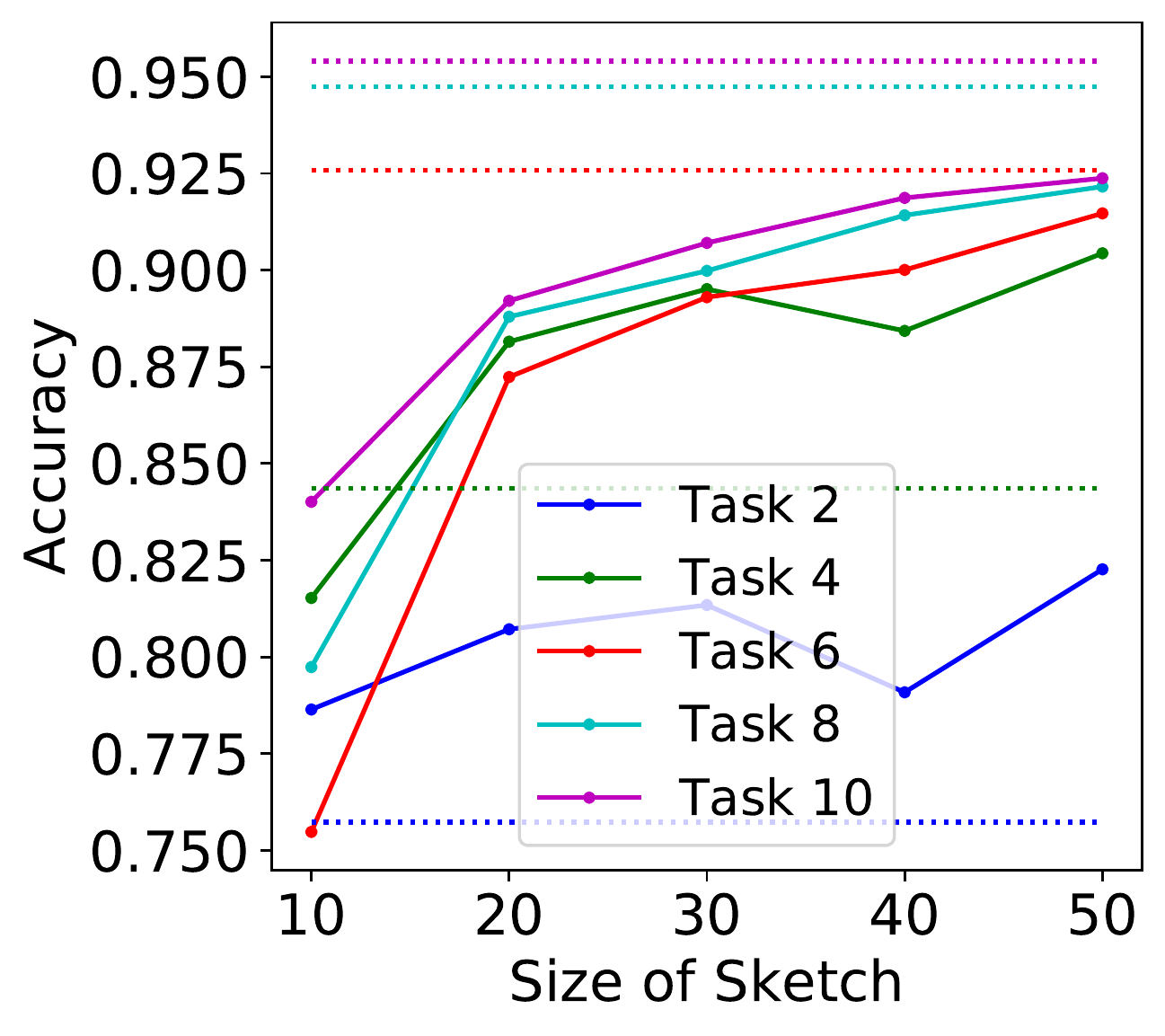}\label{fig:perm_mnist_ewc_task_comparison_2}}
    \subfigure[Sketched MAS]{\includegraphics[width=0.3\linewidth]{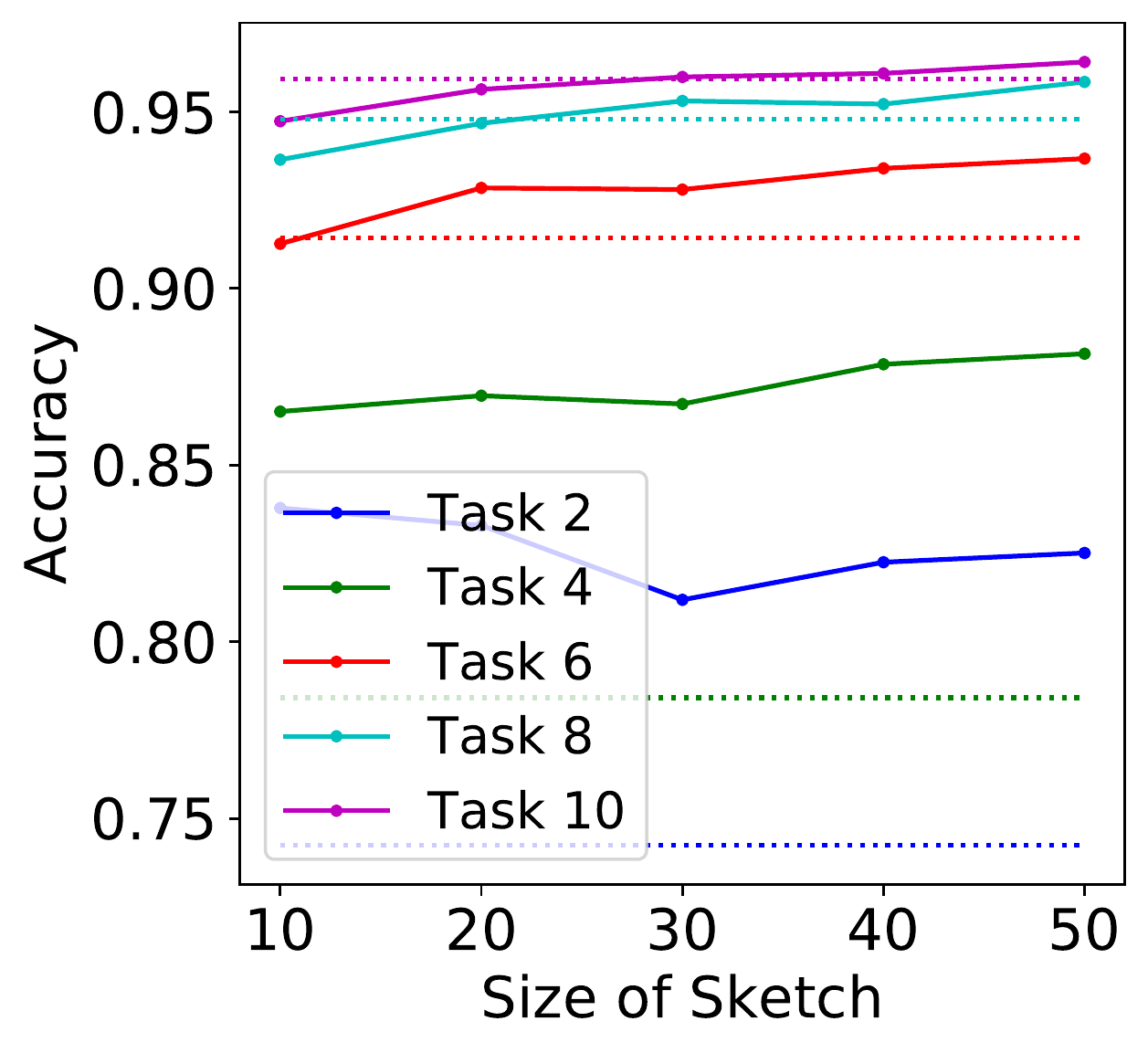}\label{fig:perm_mnist_mas_task_comparison_2}}
    \caption{Effect of the sketch size ($t$) on task accuracy of sketched methods for learning 10 permuted-MNIST tasks. Dotted line represents the accuracy of diagonal methods on the corresponding task with the same color. We can immediately observe that as the number of sketches increases, sketched methods tend to perform better in later tasks (Task ID $ \geq 6 $). 
    }
    \label{fig:perm_mnist_task_compare_2}
\end{figure}

\subsection{CIFAR-100}\label{sec:append-cifar}

\begin{figure}[htbp]
    \centering
    \includegraphics[width=\linewidth]{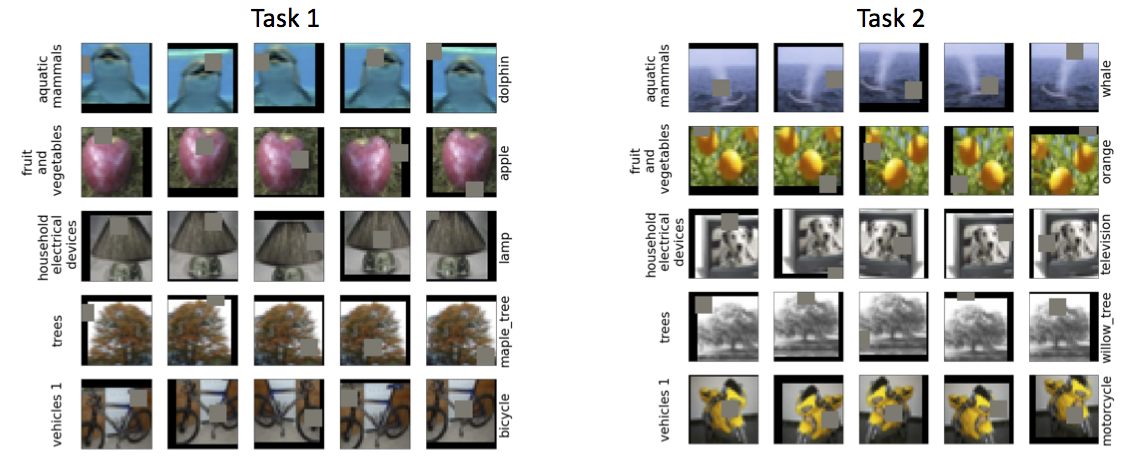}
    \vspace{-1.25\baselineskip}
    \caption{Sample images with 5 random augmentations for Task 1 and Task 2 in our CIFAR-100 experiment. The five superclasses for both tasks are represented by each row (labelled on the left), while the corresponding subclasses for each task are represented by rows within the task (labelled on the right).}
    \label{fig:augmentation}
\end{figure}

\paragraph{Dataset.}
For our CIFAR-100 experiment, we follow the 2-task \emph{CIFAR-100 Distribution Shift} dataset introduced in \cite{ramasesh2021anatomy}. In our experiment, both of the two tasks are 5-class classification problems, where each class is one of the 20 superclasses of the CIFAR-100 dataset. For instance, we take the five superclasses {\it aquatic mammals}, {\it fruits and vegetables}, {\it household electrical devices}, {\it trees}, and {\it vehicles-1}. The corresponding subclasses for Task 1 are (1) {\it dolphin}, (2) {\it apple}, (3) {\it lamp}, (4) {\it maple tree}, and (5) {\it bicycle}, while for Task 2, they are (1) {\it whale}, (2) {\it orange}, (3) {\it television}, (4) {\it willow}, and (5) {\it motorcycle}. Figure \ref{fig:augmentation} shows sample images and five random augmentations for the classes in both tasks.

\paragraph{Setup.}
In all experiments, we used a Wide-ResNet \citep{zagoruyko2016wide} as our backbone. The network has 16 layers, a widening factor of $ 4 $, and a dropout rate of $0.2$. We leveraged random flip, translation, and cutout \citep{devries2017improved} as augmentation.
We use ADAM as our optimizer for all experiments, with learning rate $10^{-3}$ and momentum $0.9$. The importance parameter $ \lambda $ for each algorithm is: $ 10^5 $ for EWC, $ 10^2 $ for Sketched EWC, $ 10^5 $ for MAS, and $ 10^3 $ for Sketched MAS. The minibatch size is $64$. The online learning parameter is $ \alpha = 0.25 $ for all experiments. In Sketched SR algorithms, we uses 50 sketches to approximate the full importance matrix. All reported results are averaged over $10$ runs with different random seeds.

\end{document}